\documentclass{article}
\PassOptionsToPackage{numbers,sort&compress}{natbib}
\usepackage[preprint]{neurips_2026}

\usepackage{amsmath,amssymb,amsfonts,amsthm,mathtools}

\usepackage[utf8]{inputenc}
\usepackage[T1]{fontenc}
\usepackage{hyperref}
\usepackage{url}
\usepackage{booktabs}
\usepackage{nicefrac}
\usepackage{microtype}
\usepackage{xcolor}
\usepackage{graphicx}
\usepackage{flafter}
\usepackage{placeins}
\usepackage{subcaption}
\usepackage{multirow}

\usepackage[capitalize]{cleveref}

\theoremstyle{plain}
\newtheorem{theorem}{Theorem}[section]
\newtheorem{proposition}[theorem]{Proposition}
\newtheorem{lemma}[theorem]{Lemma}
\newtheorem{corollary}[theorem]{Corollary}
\theoremstyle{definition}
\newtheorem{definition}[theorem]{Definition}
\newtheorem{assumption}[theorem]{Assumption}
\theoremstyle{remark}
\newtheorem{remark}[theorem]{Remark}

\crefname{table}{Tab.}{Tabs.}
\Crefname{table}{Tab.}{Tabs.}
\crefname{figure}{Fig.}{Figs.}
\Crefname{figure}{Fig.}{Figs.}
\crefname{equation}{Eq.}{Eqs.}
\Crefname{equation}{Eq.}{Eqs.}
\crefname{theorem}{Thm.}{Thms.}
\Crefname{theorem}{Thm.}{Thms.}
\crefname{proposition}{Prop.}{Props.}
\Crefname{proposition}{Prop.}{Props.}
\crefname{corollary}{Cor.}{Cors.}
\Crefname{corollary}{Cor.}{Cors.}
\crefname{lemma}{Lem.}{Lems.}
\Crefname{lemma}{Lem.}{Lems.}
\crefname{assumption}{Ass.}{Asss.}
\Crefname{assumption}{Ass.}{Asss.}
\crefname{remark}{Rem.}{Rems.}
\Crefname{remark}{Rem.}{Rems.}
\crefname{definition}{Def.}{Defs.}
\Crefname{definition}{Def.}{Defs.}

\DeclareMathOperator*{\argmax}{arg\,max}

\newcommand{\polS}{\pi_{S}}        
\newcommand{\polT}{\pi_{T}}        
\newcommand{\polB}{\pi_{B}}        
\newcommand{\lam}{\lambda}         

\newcommand{\useful}{\textsc{useful}}  

\title{The Extrapolation Cliff in On-Policy Distillation of Near-Deterministic Structured Outputs}

\author{
  Xin Li \quad Hao Jiang \quad Annan Wang \quad Yichi Zhang \quad Chau Yuen \\
  Nanyang Technological University, Singapore \\
  \url{https://lixin.ai/ListOPD}
}

\begin{document}

\maketitle

\begin{abstract}
On-policy distillation (OPD) is widely used for LLM post-training. When
pushed with a reward-extrapolation coefficient $\lam > 1$, the student can
lift past the teacher in domain, but past a threshold $\lam^\star$ the same
step violates the output contract on structured-output tasks. In a
single-position Bernoulli reduction, we derive a closed-form base-relative
clip-safety threshold $\lam^\star(p,b,c)$ determined by three measurable
quantities: the teacher modal probability, the warm-start mass, and the
importance-sampling clip strength. Above $\lam^\star$ the extrapolated
fixed point exits the clip-safe region, changing training from
format-preserving to format-collapsing. We extend the rule to calibrated
$K$-ary listwise JSON tasks where a single binding equivalence class
dominates the output contract and SFT retains parse headroom.
On Amazon Fashion, three pre-registered tests (a fine-grid cliff interval,
a budget-extension test, and a small-clip cross-prediction) all fall within
their locked prediction windows, with the small-clip value matching the
closed-form prediction below grid resolution. Operating just below
$\lam^\star$, ListOPD brings a 1.7B Qwen3 student to in-domain parity with
an 8B-SFT baseline (pre-registered 3-seed) at one-fifth the parameters. The
gain is driven primarily by format adherence: NDCG@1 on parsed outputs
remains flat across $\lam$, while parse validity sharply changes at the
predicted boundary. The cliff diagnostic is rubric-independent, whereas the parity
claim uses a Gemini-graded rubric and inherits that evaluator's exposure.

\end{abstract}

\section{Introduction}
\label{sec:intro}

On-policy distillation (OPD) trains a student LLM against a teacher's per-token log-probabilities on the student's own rollouts~\citep{gu2024minillm,agarwal2024onpolicy}; its reward-extrapolation variant~\citep{yang2026learning} sharpens the on-policy target by a coefficient $\lam > 1$ and can lift the student past the teacher in domain. But the same extrapolation step that produces the lift, past a threshold $\lam^\star$, instead replaces format-preserving training with a sharp contract collapse on structured-output tasks~\citep{fu2026revisiting,wang2025aspo}. We derive that threshold in closed form and calibrate it on Amazon product-review listwise ranking.

On Amazon's product-review domain~\citep{hou2024amazon,rlpo2026}, listwise rerankers~\citep{sun2023chatgpt,pradeep2023rankzephyr,reddy2024first} emit, for each \emph{product group} of $K{=}8$ reviews, a JSON list of $K$ objects keyed by the input \texttt{review\_id}s, each carrying a helpfulness score under a fixed rubric. The contract collapse above is the format-adherence failure documented qualitatively for structured-output LLMs~\citep{jsonschemabench2025,yun2025priceformat,decoupling2025}: the model scores plausibly but the outer scaffold truncates or duplicates ids. A Qwen3-1.7B-SFT student satisfies the contract on $33.5\%$ of Fashion groups; trained with \emph{ListOPD}, our extrapolated-OPD listwise instantiation, the same student reaches $94.8\%$, with rank quality on parsed outputs unchanged under a fixed Gemini rubric~\citep{google2025gemini}: Fashion is a controlled scaffold for contract-adherence mechanics, not a semantic-ranking claim against existing rerankers.

\begin{figure}[ht]
\centering
\includegraphics[width=0.92\textwidth]{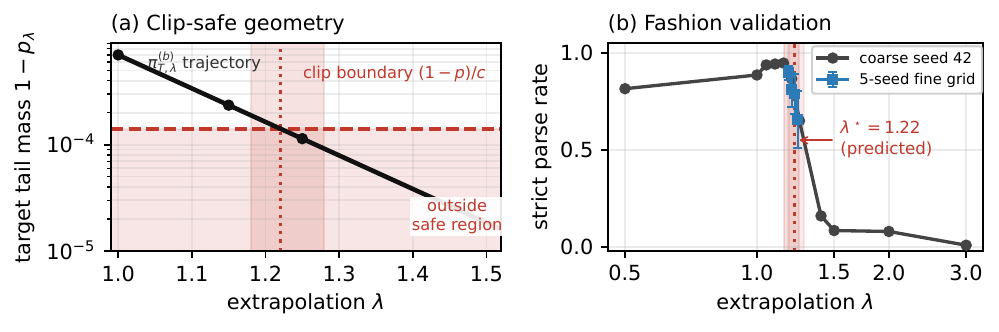}
\caption{\textbf{The extrapolation cliff in miniature.} \emph{Left:} the IS-clip-safe geometry; the sharpened fixed point exits at the base-neutral marker $\lam^\star(p_{\mathrm{typ}}{=}0.9993, c{=}5){=}1.22$ (the full base-relative prediction bracket $[\lam^\star_{\mathrm{safe}}, \lam^\star_{\mathrm{typ}}]=[1.18, 1.28]$ at $b{=}0.81$ is in \Cref{tab:cliff-predicted}). \emph{Right:} strict parse rate on Fashion $K{=}8$ listwise (Qwen3 1.7B$\times$4B, $N{=}212$) collapses in the same $[1.18,1.28]$ band.}
\label{fig:teaser}
\end{figure}

The knob is sharp because clip-safety has a boundary. At a structural token with teacher modal mass $p$, the extrapolation step sharpens the target to a fixed point $p_\lam$; once its off-modal mass falls below the clipped tail mass $(1{-}p)/c$ enforced by GRPO-style importance-sampling (IS) clipping~\citep{shao2024deepseekmath,wang2025aspo}, the fixed point exits the clip-safe region (\Cref{fig:teaser}, left). The base-neutral crossing is
\begin{equation}
\lam^\star(p, c) \;=\; \frac{\log\!\bigl((1-p)/(c-1+p)\bigr)}{\log\!\bigl((1-p)/p\bigr)},
\label{eq:lamstar-intro}
\end{equation}
while the full theorem is base-relative and the sequence-level lift requires position-wise parametric reach. In Fashion, the measured structural-token confidence and clip put the marker at the observed cliff scale within one $\lam$-grid step (\Cref{tab:cliff-predicted}).

This framing turns OPD tuning from a post-hoc $\lam$ sweep into a falsifiable boundary-prediction problem: the predicate either places the cliff at the predicted scale, or it shifts, abstains, or fails to localize, with each outcome scoped in \Cref{tab:cliff-predicted}.

Operating below the threshold, 1.7B-ListOPD lifts deployment-useful score $\useful{=}\mathtt{parse}\times{}$NDCG@1 (zero credit on parse failure) from $0.23$ to $0.86$, matching a pre-registered 3-seed 8B-SFT baseline within combined seed noise (\Cref{tab:size-axis}). Against best constrained-SFT plus permutation repair~\citep{willard2023efficient,dong2024xgrammar,beurerkellner2024guiding} the training-side residual is $+0.051$ $\useful$, the claim we make rather than categorical superiority over constrained decoding.

Our contributions:
\begin{enumerate}
\item \textbf{A closed-form clip-safety predicate practitioners can compute.} $\lam^\star(p, b, c)$, the base-relative clip-safe threshold of OPD extrapolation, follows from three measurable quantities. We prove the single-position Bernoulli version (\Cref{thm:cliff}) and give an explicit sequence-level instantiation (\Cref{thm:cliff-seq}); the multi-token lift is exact under off-modal-ratio invariance and approximate otherwise, and super-critical dynamics are finite-budget empirical, not an almost-sure convergence theorem (\Cref{cor:finiteN}).

\item \textbf{Three pre-registered Fashion prediction matches, including a within-grid-resolution cross-clip hit.} The Fashion $K{=}8$ binding class ($K{-}1\to K$ transition) anchors the calibration: a 5-seed fine grid localizes the cliff onset to $[1.204, 1.228]$ around the predicted $1.22$; an $N{=}200$ budget extension lands inside its locked $[1.00, 1.10]$ bracket; a $c{=}1.5$ cross-clip extension matches its locked closed-form $\lam^\star_{\mathrm{typ}}{=}1.070$ at observed midpoint $1.069$, below the experimental grid resolution. ASPO follows the same cliff pattern at one grid step earlier (App.~\ref{app:w7-aspo}), supporting a mechanism-not-method reading.

\item \textbf{A deployment rule and a scoped evaluation.} Operating just below $\lam^\star$, ListOPD reaches in-domain parity with a pre-registered 3-seed 8B-SFT baseline at one-fifth the parameters; per-task results (\Cref{tab:cliff-predicted}) document where the predicate shifts, abstains, or loses power.
\end{enumerate}

\clearpage
\section{Related Work}
\label{sec:related}

\noindent\textbf{Distillation and on-policy RL.}
On-policy distillation with reverse-KL objectives~\citep{gu2024minillm,agarwal2024onpolicy,hinton2015distilling} sharpens a student against a teacher under student-sampled trajectories.
We extend the ExOPD reward-extrapolation formulation~\citep{yang2026learning} from reasoning to listwise structured-output ranking, and identify the IS-clip-asymmetry mechanism whose engineering side is mitigated by ASPO~\citep{wang2025aspo}: ASPO identifies the same IS-asymmetry on positive-advantage tokens and proposes a training-time ratio-flip fix, whereas we derive the closed-form $\lam^\star(p,b,c)$ at which the extrapolated fixed point exits the clip-safe region and quantify the regime where extrapolated OPD is and is not safe. A 4-seed empirical head-to-head with ASPO on Fashion 1.7B$\times$4B (App.~\ref{app:w7-aspo}) shows ASPO is comparable at $\lam{=}1.0$ and collapses at $\lam{=}1.5$ under the same protocol, ruling out a narrow GRPO-implementation artifact and supporting the mechanism-driven clip-threshold reading.
\citet{li2026rethinking} characterise the modal-token concentration regime ($p_{\mathrm{eff}}{\geq}0.99$) that we exploit; our $p_{\mathrm{eff}}$ aggregator quantifies their phenomenology and turns it into a calibration target. Three orthogonal OPD failure modes appear in~\citet{fu2026revisiting}; complementary analyses are in~\citet{ko2024distillm,ko2025distillm2,jang2026stable,xu2026tip,song2026survey,kim2026coverage}. None characterize the $\lam$-axis cliff or the IS-clip boundary itself.

\noindent\textbf{Format adherence and listwise ranking.}
Structured-output brittleness has motivated constrained decoding~\citep{willard2023efficient,beurerkellner2024guiding,dong2024xgrammar,draftcd2026,guidedcdrag2025}, benchmarks distinguishing structural from semantic violations~\citep{jsonschemabench2025,llmstructbench2026}, and direct schema-RL~\citep{schemarl2025,thinkjson2025}. \citet{yun2025priceformat} document SFT-side diversity collapse under format-induced training; our cliff is the on-policy analogue, sharpened to a closed-form boundary in $\lam$.
LLM-based listwise rerankers~\citep{sun2023chatgpt,pradeep2023rankzephyr,ma2023zeroshot,reddy2024first,nogueira2020document,zhuang2023rankt5} factorize over Plackett--Luce permutations~\citep{plackett1975analysis,xia2008listwise,cao2007learning,burges2006learning}; closest in domain,~\citet{rlpo2026} apply RL to a related Amazon listwise review-ranking dataset.
\citet{decoupling2025} observe that task-solving and formatting can decouple, closest to our Claim~2 (ranking-quality on parseable outputs is invariant to $\lam$), but predict no boundary.
Our contribution is orthogonal to constrained decoding: we improve format adherence as a side effect of training, derive when that adherence collapses, and show (Sec.~\ref{sec:exp-constrained}) that strict-$K$ decoders convert the capability gap into a duplicate-id pathology that does not improve task-level validity.

\section{Method and Experimental Setup}
\label{sec:method}

\subsection{Listwise PL Rollout}
\label{sec:rollout}

A \emph{listwise PL rollout} for a product $p$ with candidate review set $\{r_1, \ldots, r_K\}$ is an autoregressive generation, conditioned on the full prompt
\begin{quote}\small
\texttt{Product: \{title\}.\ Below are $K$ reviews.\ Score each.\ [Review 1] id=$r_1$ \ldots [Review $K$] id=$r_K$.\ Return JSON list of $K$ objects \ldots}
\end{quote}
of the assistant token sequence
\begin{quote}\small
\texttt{[\{"review\_id": "$r_1$", "score": $s_1$\}, \{"review\_id": "$r_2$", "score": $s_2$\}, \ldots]}
\end{quote}
The structural delimiters (brackets, braces, commas, identifier echoes) are interleaved with the per-position score tokens.
Under the Plackett--Luce model~\citep{plackett1975analysis,xia2008listwise}, the joint likelihood of $K$ ordered scores factors as a product of $K$ position-conditional softmaxes; here, the same factorization arises mechanically from token-level autoregression, and the per-token reverse-KL gradient automatically distributes credit across both the score tokens and the structural scaffolding (\Cref{fig:method_diagram}).

\begin{figure}[ht]
\centering
\includegraphics[width=0.92\textwidth]{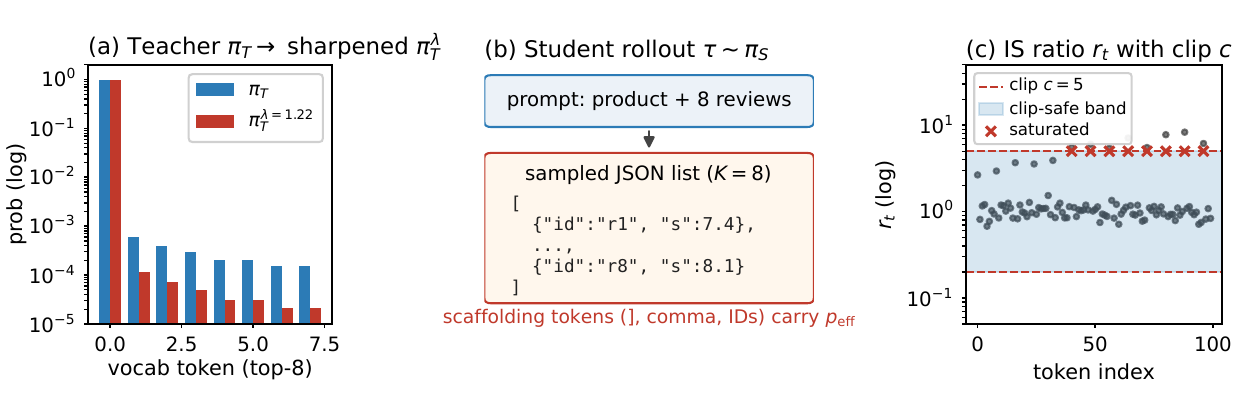}
\caption{\textbf{ListOPD pipeline.} \emph{Left:} teacher $\pi_T$ at a scaffolding position concentrates on a modal token; base-relative extrapolation sharpens this target as $\lam$ grows. \emph{Middle:} student rolls out the listwise JSON token-by-token. \emph{Right:} the per-token IS ratio $\rho_t$ is clipped at $c$; scaffolding positions whose asymptotic fixed point sits in the clip-unsafe region ($\times$) drift to parse-collapse, the regime characterised by \Cref{thm:cliff}.}
\label{fig:method_diagram}
\end{figure}

\subsection{On-Policy Reverse-KL Distillation with Extrapolation}
\label{sec:opd}

Given a student policy $\polS$, a teacher policy $\polT$, and a base/reference policy $\polB$, we define the per-token ListOPD advantage used in our implementation as a base-relative teacher--student log-ratio
\begin{equation}
\label{eq:opd-adv}
A(s, a; \lam) \;=\;
\lam \big(\log \polT(a \mid s)-\log \polB(a \mid s)\big)
- \big(\log \polS(a \mid s)-\log \polB(a \mid s)\big),
\end{equation}
where $\lam \geq 1$ is the extrapolation coefficient~\citep{yang2026learning}.
Setting $\lam = 1$ recovers vanilla reverse-KL distillation ($\log\polT-\log\polS$); $\lam > 1$ targets a base-relative sharpened teacher distribution proportional to $\polB(\polT/\polB)^\lam$. \Cref{thm:cliff} (Sec.~\ref{sec:toy}) is stated for this exact base-relative target; the base-neutral $\polT^\lam$ form used in earlier OPD work is the $\polB{=}$uniform special case.
The student is updated by GRPO~\citep{shao2024deepseekmath} with token-level IS correction (clip $c{=}5.0$):
\begin{equation}
\label{eq:opd-loss}
\rho_t \;=\; \min\!\left(c, \frac{\polT(a_t \mid s_t)}{\polS(a_t \mid s_t)}\right), \quad
\mathcal{L} \;=\; -\sum_t \rho_t\,A(s_t, a_t; \lam)\,\log \polS(a_t \mid s_t).
\end{equation}
The KL penalty coefficient is set to zero: the on-policy advantage \eqref{eq:opd-adv} is the only training signal.
We use the verl framework~\citep{verl2024} with \texttt{actor.policy\_loss.only\_reverse\_kl\_advantages=True} and \texttt{lambda\_vals=}$\lam$; no other code changes were required to operate on listwise rollouts.

\subsection{Models, Data, and Evaluation}
\label{sec:setup}

\noindent\textbf{Models.}
We use Qwen3 base models at four sizes: 0.6B, 1.7B, 4B, 8B parameters.
Each model is first SFT-warmstarted on the listwise PL-K8 format for 5 epochs (lr $1{\times}10^{-5}$, cosine, batch size 128) on Amazon Fashion training data; this becomes both the OPD initialization and the SFT baseline we compare against.
Teacher candidates are 4B and 8B PL-K8 SFT checkpoints.

\noindent\textbf{Data.}
Amazon Fashion~\citep{hou2024amazon} reviews are pseudo-labeled for helpfulness (0--10) by Gemini 2.5 Pro~\citep{google2025gemini}; we form $K{=}8$ product groups with reviews sampled uniformly within each product. Because Gemini's pretraining membership is not externally auditable, we treat these labels as a fixed rubric for a controlled structured-output environment, not as human relevance judgments; the theorem-facing measurements are parse rate, structural-token modal probability, and cliff location. Reproducibility and data-provenance details are in App.~\ref{app:repro-provenance}.
Train/val split is performed at the \emph{product} level (no review of any val product appears in training) yielding 1795 train groups and 212 val groups.
Cross-domain val sets from Baby\_Products and Software (500 product groups each) use the same $K$ and scoring rubric to measure zero-shot transfer.
A public IR stress test replaces the Gemini rubric with MS MARCO/TREC-DL human
qrels while preserving the strict $K{=}8$ JSON contract
(App.~\ref{app:ir-msmarco}).

\noindent\textbf{Training.}
For each (student, teacher, $\lam$) triple, we run OPD for 1, 3, or 5 epochs over the listwise training set (14, 42, or 70 optimizer steps at batch size 128).
Optimizer is AdamW with lr $1{\times}10^{-6}$, no warmup, no LR schedule, FSDP across 8 B200 GPUs, vLLM rollout~\citep{kwon2023efficient} with tensor parallel size 2, max prompt length 2048, max response length 512, sampling temperature 1.0.

\noindent\textbf{Evaluation.}
For the JSON listwise Fashion, cross-category, constrained-decoding, ASPO, no-base, and public-IR evaluations, we use vLLM with greedy decoding (temperature $0$). MBPP and BFCL use task-standard sampled $n{=}4$ protocols, stated in their appendix sections.
For each val product, the model emits a JSON list which we parse by extracting the outermost \texttt{[...]} block and then enforcing the deployment contract: exactly $K$ objects, each containing one unique input \texttt{review\_id} and a numeric \texttt{score}.
Scores may be represented as JSON strings or numbers, but duplicate, missing, hallucinated, or position-only outputs are parse failures.
Failure to recover all $K$ valid \texttt{\{review\_id, score\}} entries is recorded as a parse failure and the model receives \emph{zero credit on all metrics for that product}.
We report:
\begin{itemize}
\item \textbf{parse\_rate}: fraction of val products yielding a valid $K$-element JSON list;
\item per-product Kendall-$\tau$, NDCG@$\{1,3,5,10\}$, MAE on parsable subset;
\item \textbf{$\useful = \mathtt{parse}\times{}$NDCG@1}, the deployment-relevant aggregate where parse failures count as zero rank quality.
\end{itemize}
All metrics are macro-averaged over val products.
The $\useful$ metric is the only one we use to select operating points; the per-metric breakdown is reported in tables for diagnostic purposes.

\section{Single-Position Threshold and Sequence Calibration}
\label{sec:toy}

\Cref{fig:teaser}'s clip-safe crossing has a closed-form location. We state the single-position
results here and defer all proofs, assumption-level discussion, and
per-token derivations to App.~\ref{app:cliff-proof}.

\noindent\textbf{Notation.}
$p$: teacher modal-token probability at one structural position; $b$: warmstart modal probability at the same position; $c$: per-token IS clip strength. $p_{\mathrm{typ}}$, $p_{\mathrm{safe}}$ are mean and max of $\{p_t\}$ over $\tau$-filtered scaffolding positions (\Cref{eq:peff}); $b_{\mathrm{eff}}$ is the warmstart counterpart at the binding position. We write $p_{\mathrm{eff}}$ generically when the choice of within-prompt aggregator does not matter; $p_{\mathrm{typ}}$ and $p_{\mathrm{safe}}$ are its specific instantiations. $\lam^\star(p,b,c)$ is the closed-form clip-safe threshold (\Cref{eq:lambdastar}); $\lam^\star_{\mathrm{typ}}{=}\lam^\star(p_{\mathrm{typ}},b_{\mathrm{eff}},c)$ and $\lam^\star_{\mathrm{safe}}{=}\lam^\star(p_{\mathrm{safe}},b_{\mathrm{eff}},c)$ are its sequence-level instantiations under \Cref{thm:cliff-seq}(B) and (A) respectively.

\noindent\textbf{Setup (single position).}
At one position of the rollout, teacher $\polT = (p, 1-p)$ with $p > \tfrac12$,
student $\polS^\theta = (q, 1-q)$ with $q = \sigma(\theta)$, base $\polB = (b, 1-b)$.
The base-relative extrapolation target induced by \Cref{eq:opd-adv}
is $\polB(\polT/\polB)^\lam$, which in the Bernoulli reduction gives
$p_\lam^{(b)} = b^{1-\lam}p^\lam / (b^{1-\lam}p^\lam + (1-b)^{1-\lam}(1-p)^\lam)$;
$b{=}1/2$ recovers the base-neutral $p_\lam = p^\lam/(p^\lam+(1-p)^\lam)$,
$b{\to}p$ collapses $p_\lam^{(b)}\to p$. With clipped IS
$r(x) = \min(c,\polT(x)/\polS(x))$, $c>1$ (\Cref{ass:A1-app})
and the advantage of \Cref{eq:opd-adv}, the clip-safe region is
$q < q_c := 1 - (1-p)/c$.

\begin{theorem}[Single-position clip-safe threshold, base-relative]
\label{thm:cliff}
$p_\lam^{(b)} < q_c$ iff $\lam < \lam^\star(p,b,c)$, where
\begin{equation}
\label{eq:lambdastar}
\boxed{\;\lam^\star(p,b,c) =
\frac{\log\bigl((1{-}p)/(c{-}1{+}p)\bigr) - \log\bigl((1{-}b)/b\bigr)}
     {\log\bigl((1{-}p)/p\bigr) - \log\bigl((1{-}b)/b\bigr)}\;}
\end{equation}
Above $\lam^\star$ the sharpened fixed point exits the clip-safe
region. The $b{=}1/2$ special
case reduces to $\log((1{-}p)/(c{-}1{+}p))/\log((1{-}p)/p)$;
$b{\to}p$ sends $\lam^\star\to\infty$ (no cliff if warmstart matches
teacher).
Proof: Lyapunov on $V(q){=}\mathrm{KL}(\pi_{T,\lam}^{(b)}\|\polS)$
within the clip-safe basin plus $p_\lam^{(b)} = q_c$
(App.~\ref{app:cliff-proof}).
\Cref{thm:cliff} is the 2-token Bernoulli reduction; lifting to
multi-token vocabularies is sufficient under A2 (off-modal mass
concentrates on a small alternative set; exact under the off-modal-ratio
invariance condition of \Cref{lem:multitoken-bernoulli};
\Cref{thm:cliff-seq}).
For $\lam{>}\lam^\star$, the noise-to-drift calculation in
App.~\ref{app:cliff-proof} supports boundary-seeking finite-budget
dynamics, but we do not prove a.s.\ convergence; finite-$N$
reachability and the no-base implementation axis (S2b) are isolated in
App.~\ref{app:nobase-ablation}.
\end{theorem}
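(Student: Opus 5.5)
The equivalence is pure algebra once everything is written in log-odds, so I would do that first and treat the dynamical ("fixed point exits the clip-safe region") reading as a separate, secondary step. Write $\ell(x)=\log\bigl(x/(1-x)\bigr)$, which is strictly increasing on $(0,1)$. From the Bernoulli form of $\polB(\polT/\polB)^\lam$,
\[
\ell\bigl(p_\lam^{(b)}\bigr)=(1-\lam)\,\ell(b)+\lam\,\ell(p)=\ell(b)+\lam\bigl(\ell(p)-\ell(b)\bigr),
\]
so the log-odds of the target is affine in $\lam$. From $q_c=1-(1-p)/c=(c-1+p)/c$ we get $1-q_c=(1-p)/c$. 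Hence $\ell(q_c)=\log\bigl((c-1+p)/(1-p)\bigr)$, and the condition $q_c\in(p,1)$ holds for every $c>1$.

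By monotonicity of $\ell$, the condition $p_\lam^{(b)}<q_c$ is equivalent to
\[
\lam\bigl(\ell(p)-\ell(b)\bigr)<\ell(q_c)-\ell(b).
\]
In the regime the theorem addresses, $b<p$, so $\ell(p)-\ell(b)>0$. Dividing by it gives $\lam<\bigl(\ell(q_c)-\ell(b)\bigr)/\bigl(\ell(p)-\ell(b)\bigr)$. Multiplying numerator and denominator by $-1$ and using $-\ell(x)=\log\bigl((1-x)/x\bigr)$ gives exactly \Cref{eq:lambdastar}.

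I would state explicitly that $b<p$ is required. If $b>p$ the inequality flips and there is no upper cliff, and the case $b=p$ is degenerate. This should be recorded as part of \Cref{ass:A1-app} or as a remark. The special cases then follow directly:
\begin{itemize}
\item $b=1/2$ gives $\ell(b)=0$, which recovers \Cref{eq:lamstar-intro}.
\item As $b\uparrow p$, the denominator tends to $0^+$ while the numerator tends to $\ell(q_c)-\ell(p)>0$, so $\lam^\star\to\infty$.
\item Since $\ell(q_c)>\ell(p)$, we have $\lam^\star>1$ whenever $b<p$, so $\lam=1$ is always clip-safe.
\end{itemize}

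Second, I would justify the clip-safe region and the interpretation of $p_\lam^{(b)}$ as the fixed point. For the modal token the ratio $p/q$ can exceed $c$ only if $q<p/c$; this does not happen near the target and can be excluded by restricting the basin to $q>p/c$. The binding ratio is therefore the off-modal one. That ratio satisfies $(1-p)/(1-q)\le c$ iff $q\le q_c$, so on $(p/c,\,q_c)$ the clip is inactive.

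On that basin I would compute the expected update of \Cref{eq:opd-loss} in the logit $\theta$, using $\partial_\theta\log q=1-q$ and $\partial_\theta\log(1-q)=-q$. The goal is to show that it is a positive multiple of the descent direction of $V(q)=\mathrm{KL}(\pi_{T,\lam}^{(b)}\|\polS)$. The key structural fact is that the advantage $A$ is linear in $\log\polS$ and in $\log\bigl(\polB(\polT/\polB)^\lam\bigr)$. Because of this, the drift vanishes only where the two distributions have equal log-odds, that is, at $q=p_\lam^{(b)}$. The sign of the drift then gives $\dot V<0$ elsewhere in the basin, which is the Lyapunov conclusion.

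The main obstacle is this second step. The IS weight $\polT/\polS$ reweights samples toward the teacher, so the expected update is not literally the reverse-KL gradient. I would first check that, in the Bernoulli case, the weighted drift still has a single zero at $p_\lam^{(b)}$ with the correct sign on each side; this is a one-variable sign analysis. If an exact Lyapunov identity with $V$ fails, I would fall back on the scalar drift function $g(q)$ itself as the Lyapunov certificate. Finally, the exit statement follows because when $\lam>\lam^\star$ the zero of the unclipped drift lies at or beyond $q_c$, where the clip engages and alters the drift. This is consistent with the theorem only asserting the threshold, not almost-sure convergence.
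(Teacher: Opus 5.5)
Your algebraic core is correct and is the same computation as the paper's Part~2, organized better. Writing $\ell(p_\lam^{(b)})=\ell(b)+\lam(\ell(p)-\ell(b))$ gives the direction of the inequality, whereas the paper only solves the equation $p_\lam^{(b)}=q_c$. Your observation that the \emph{iff} needs $b<p$ is a hypothesis the statement genuinely omits: otherwise $\lam\mapsto p_\lam^{(b)}$ is non-increasing and there is no upper threshold.

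The gap is in your second step. On the basin $(p/c,q_c)$ the weight $\rho=\polT/\polS$ is unclipped, so the expected update is $g(q)=\sum_a\polT(a)A(a)\,\partial_\theta\log\polS(a)=p(1-q)A(m)-(1-p)q\,A(r)$. The advantage of \Cref{eq:opd-adv} is the \emph{unnormalized} log-ratio, $A(a)=\log\bigl(\pi_{T,\lam}^{(b)}(a)/\polS(a)\bigr)+\log Z$ with $Z=b^{1-\lam}p^\lam+(1-b)^{1-\lam}(1-p)^\lam$. By Jensen on $\mathbb{E}_{\polB}[(\polT/\polB)^\lam]$, $Z>1$ whenever $\lam>1$ and $b\neq p$. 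Teacher weighting does not annihilate this constant, because $\sum_a\polT(a)\partial_\theta\log\polS(a)=p-q\neq 0$. At $q=p_\lam^{(b)}$ both advantages equal $\log Z$, so $g=\log Z\,(p-p_\lam^{(b)})<0$ for $b<p$.

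Consequently, the sign check you plan to run first fails. Your ``key structural fact'' (the drift depends only on log-odds differences) holds only under unweighted on-policy sampling. The fallback of using $g$ as its own certificate would certify convergence to a rest point strictly between $p$ and $p_\lam^{(b)}$, not to $p_\lam^{(b)}$.

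The paper's Part~1 avoids this by taking the expected flow to be the unweighted on-policy one. There $\sum_a\polS(a)\partial_\theta\log\polS(a)=0$ removes $\log Z$, and the $\theta$-drift is $q(1-q)\bigl[\ell(p_\lam^{(b)})-\ell(q)\bigr]$. Then $\partial_qV=(q-p_\lam^{(b)})/(q(1-q))$ gives $\dot V\le 0$, with equality only at $p_\lam^{(b)}$.

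To close your argument you have two options. You can adopt that flow, i.e.\ treat $\rho\equiv 1$ on the basin; the paper does this implicitly rather than deriving it from \Cref{eq:opd-loss}. Alternatively, use the normalized advantage $A-\log Z$. Your weighted drift then becomes $p(1-q)\log(p_\lam^{(b)}/q)-(1-p)q\log\bigl((1-p_\lam^{(b)})/(1-q)\bigr)$, which vanishes only at $p_\lam^{(b)}$ and has restoring sign on each side. The purely algebraic \emph{iff} is unaffected by any of this.
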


\noindent\textbf{Mechanism interpretation.} The clip-safety boundary refers to the fixed point induced by the clipped objective's geometry, not empirical runtime clipping frequency: the direct per-step clip-fraction counter remains at $0$ under verl's rollout-correction threshold, and per-step IS ratios stay well below $c$ throughout training (App.~\ref{app:cor1-prereg-n200}, \Cref{fig:cor1-mechanism}). The observed cliff is realised through cumulative drift toward the clip-unsafe fixed point, not through discrete clip events.

\begin{corollary}[Finite-budget drift diagnostic]
\label{cor:finiteN}
Under a local linearization of the deterministic clipped flow before
saturation, and assuming one-sided post-boundary drift, the
characteristic first-passage time to $q_c$ scales as
$N^\star(\lam) = O(1/|\log(1-\eta\lam p(1-p))|)$; in the small-drift
limit this is $O(1/[\eta\lam p(1-p)])$. The diagnostic expectation is
that the observed cliff shifts leftward in $\lam$ as training lengthens;
empirically the Fashion cliff midpoint moves $1.22 \to 1.12 \to 1.06$
across $N\in\{42, 70, 200\}$, with the $N{=}200$ point pre-registered
(App.~\ref{app:cor1-prereg-n200}).
\end{corollary}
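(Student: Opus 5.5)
The statement to be justified is the corollary (\Cref{cor:finiteN}), and I would treat it as a local-linearization argument about the deterministic clipped flow rather than a stochastic convergence theorem. The plan works in the Bernoulli reduction of \Cref{thm:cliff}, with student logit $\theta$ and $q=\sigma(\theta)$.

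\emph{Step 1: the expected update.} I would first write the expected per-step update of $\theta$ under \Cref{eq:opd-loss} with the advantage \Cref{eq:opd-adv}, in the regime before saturation, where $r(x)=\polT(x)/\polS(x)<c$. Using $\partial_\theta \log q = 1-q$ and $\partial_\theta\log(1-q)=-q$, the sampled-token expectation of $\rho A\,\nabla\log\polS$ should collapse to a drift of the form
\[
\Delta\theta \approx \eta\, q(1-q)\bigl[\lam\,\ell_T - \ell_S\bigr],
\]
where $\ell_T$ and $\ell_S$ are the base-relative teacher and student log-odds differences. Two factors enter this drift: the factor $\lam$ multiplies the teacher term, and the curvature factor is $p(1-p)$ because the IS weight reweights the student samples toward the teacher's distribution.

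\emph{Step 2: linearize.} Around the trajectory, the update becomes the affine recursion $\delta_{n+1}=(1-\eta\lam p(1-p))\,\delta_n + \text{const}$ in the distance to the sharpened fixed point $p_\lam^{(b)}$. Since \Cref{thm:cliff} places $p_\lam^{(b)}>q_c$ when $\lam>\lam^\star$, the target lies beyond $q_c$. Under the assumption of one-sided drift, the iterates therefore approach that target geometrically and must cross $q_c$ before arriving.

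\emph{Step 3: first-passage time.} Solving $|1-\eta\lam p(1-p)|^{N}\,\delta_0=\delta_0-(q_c\text{-gap})$ for $N$ gives
\[
N^\star=\frac{\log(\text{ratio of gaps})}{|\log(1-\eta\lam p(1-p))|},
\]
which is $O\bigl(1/|\log(1-\eta\lam p(1-p))|\bigr)$. Expanding $\log(1-x)\approx -x$ then gives the small-drift form $O\bigl(1/[\eta\lam p(1-p)]\bigr)$.

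\emph{Step 4: the diagnostic.} Since $N^\star$ decreases in $\lam$, for a fixed budget $N$ the smallest $\lam$ that reaches $q_c$ within $N$ steps decreases as $N$ grows. This is the predicted leftward shift of the observed cliff. The cited values $1.22\to1.12\to1.06$ are empirical and need no proof beyond this monotonicity.

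\emph{Main obstacle.} The hard step is Step 1's claim that the effective contraction rate is exactly $\eta\lam p(1-p)$, rather than something like $\eta q(1-q)$ with $\lam$-dependent corrections. The gap ratio in the numerator also depends on $\lam$ through $p_\lam^{(b)}$. I would absorb both effects into the $O(\cdot)$ constant, justifying this by the near-deterministic regime where $q\approx p$ over the relevant interval, and state explicitly that the bound is heuristic up to these constants.
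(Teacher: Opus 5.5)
There is a genuine gap. It sits where you flagged it, but it cannot be fixed by adjusting constants.

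\emph{The rate is not derived.} Linearizing your Step~1 drift $f(\theta)=\eta\,q(1-q)[\lam\ell_T-\ell_S]$ at its zero $\theta^\star=\mathrm{logit}\,p_\lam^{(b)}$ gives $f'(\theta^\star)=-\eta\,q^\star(1-q^\star)$, because $\partial_\theta\ell_S=1$ and the bracket vanishes there. So $\lam$ only moves the location $q^\star=p_\lam^{(b)}$; it never multiplies the rate. The IS remark does not supply the factor either. With the unclipped weight the expected update is $p(1-q)A(1)-(1-p)qA(0)$. Its $\theta$-derivative at $q=p$, $\lam=1$ is $-p(1-p)$, which does produce a $p(1-p)$ but still has no $\lam$. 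Moreover, for $\lam\neq1$ this flow does not vanish at $p_\lam^{(b)}$: there it equals $(p-q)\log Z$ with $Z=p^\lam b^{1-\lam}+(1-p)^\lam(1-b)^{1-\lam}\neq1$. So it cannot be paired with your fixed point, and the factor $1-\eta\lam p(1-p)$ in Step~2 is asserted rather than derived.

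\emph{The absorbed numerator changes the diagnostic.} The term you want to absorb, $\log\frac{q^\star-q_0}{q^\star-q_c}$, diverges as $\lam\downarrow\lam^\star$. For $\lam\le\lam^\star$, your monotone contraction from $q_0=b<q_c$ toward a clip-safe fixed point never reaches $q_c$, so $N^\star=\infty$. Your model therefore puts the budget-$N$ cliff strictly above $\lam^\star$, decreasing \emph{to} $\lam^\star$ and no further. That is not the paper's diagnostic:
\begin{itemize}
\item it writes the finite-budget cliff as $\lam^\star-\delta_N$ (\Cref{cor:finiteN-seq});
\item it treats the crossing of the sub-critical $\lam=1.15$ run at $N=70$ as an instance of the corollary (\Cref{fig:cor1-dynamics});
\item it extrapolates the midpoint with two-point fits in $N$ that are not floored at $\lam^\star$.
\end{itemize}
The cited midpoints $1.12$ and $1.06$ lie below $\lam^\star_{\mathrm{safe}}\approx1.18$, which your model forbids. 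So your Step~4 claim that the data need ``nothing beyond monotonicity'' fails inside your own model.

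The paper does not derive the rate from the Bernoulli flow either. It takes the $\lam$-proportional pre-saturation drift $\eta\lam p(1-p)$ as the content of the ``local linearization'' hypothesis. It later explains slower crossing at smaller $\lam$ by $N^\star\propto1/[\eta\lam p(1-p)]$. Its only formal ingredient is \Cref{lem:conditional-first-passage}: a one-sided lower bound $\delta>0$ on the logit drift over the relevant interval bounds the passage time by $\Delta\ell/\delta$. With $\delta\asymp\eta\lam p(1-p)$ this gives the small-drift form, and $1/|\log(1-\eta\lam p(1-p))|$ is its per-step multiplicative analogue. No fixed-point gap enters, so $N^\star$ stays finite and the cliff can keep moving left with $N$.

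To match the statement, drop the contraction-toward-$p_\lam^{(b)}$ derivation. State the one-sided $\lam$-proportional drift as a hypothesis and apply the lemma, saying (as the paper does) that deriving this drift for the implemented clipped estimator is open. Be aware that this hypothesis sits uneasily with the sub-critical global convergence in \Cref{thm:cliff}. Your more faithful model exposes exactly that tension, which is why the corollary is kept at the heuristic and empirical level.
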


\noindent\textbf{Sequence-level lift.}
A $K$-item JSON rollout has structural positions $\mathcal{S}$ with
modal-token probabilities $\{p_t\}$. For a threshold $\tau$, let
$\mathcal{S}_\tau{=}\{t\in\mathcal{S}:p_t\geq\tau\}$. Because
$\lam^\star$ is strictly decreasing in $p$ on $(\tfrac12,1)$
(\Cref{eq:lamstar-monotone}, App.~\ref{app:cliff-proof}), the
most-concentrated position binds.
With scaffolding filter $\tau{=}0.9$, define
\begin{equation}
\label{eq:peff}
p_{\mathrm{safe}} := \max_{t:\,p_t\geq\tau}\{p_t\},
\qquad
p_{\mathrm{typ}} := \mathrm{mean}_{t:\,p_t\geq\tau}\{p_t\}
\end{equation}
(App.~\ref{app:p-eff-sensitivity}).

\begin{proposition}[Sequence-level cliff: (A) provable safety, (B) calibrated operating rule]
\label{thm:cliff-seq}
Assume A1 (clipped IS, base-relative reverse-KL; App.~\ref{app:cliff-proof})
and A2 (position-wise parametric reach; App.~\ref{app:cliff-proof}); let
$b_{\mathrm{eff}}$ be the warmstart modal probability at the binding
position. The multi-token lift below is exact under \Cref{lem:multitoken-bernoulli}'s off-modal-ratio invariance condition (App.~\ref{app:cliff-proof}) and approximate otherwise.
\emph{(A) Provable safety.} For any $p_{\mathrm{safe}} \geq \max_t p_t$,
every structural position is clip-safe whenever
$\lam < \lam^\star(p_{\mathrm{safe}}, b_{\mathrm{eff}}, c)$
(per-position \Cref{thm:cliff} + monotonicity \Cref{eq:lamstar-monotone}).
\emph{(B) Empirical operating scale.} If the target task has a measured,
dense near-deterministic scaffold ($\mathcal{S}_\tau$ is not sparse), SFT
leaves visible parse headroom, and the chosen $(c,N)$ regime can reach
the boundary within budget, then the observed sequence-level cliff
requires a $\Theta(1)$ fraction of structural equivalence classes to
saturate. Under the $N_{\mathrm{eff}}$-class correlation of
\Cref{rem:Neff}, this fraction is set by the typical class, so the
empirical scale is $\lam^\star(p_{\mathrm{typ}}, b_{\mathrm{eff}}, c)$.
(B) is a calibrated operating rule grounded in (A) and the correlation
analysis, not an independent theorem.
\end{proposition}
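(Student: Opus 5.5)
The plan is to prove (A) rigorously by reducing to \Cref{thm:cliff} position by position, and then to justify (B) as a calibrated consequence instead of claiming a theorem. For (A), I use A2 (position-wise parametric reach) to decouple the sequence-level objective into independent per-position problems. Under the autoregressive factorization of \Cref{sec:rollout}, the per-token advantage \eqref{eq:opd-adv} and the clipped ratio \eqref{eq:opd-loss} depend only on the conditional distributions at $s_t$. With A2 each conditional can be moved separately, so the clipped flow at position $t$ is the single-position flow with teacher mass $p_t$, base mass $b_t$ and clip $c$.

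The multi-token vocabulary is handled by \Cref{lem:multitoken-bernoulli}. Under off-modal-ratio invariance, the base-relative sharpening $\polB(\polT/\polB)^\lam$ preserves the relative weights of the off-modal tokens. The dynamics therefore depend only on (modal, off-modal) mass, which is exactly the Bernoulli reduction. Without this invariance the lift is only approximate, and I will state that caveat rather than prove it.

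Given the reduction, \Cref{thm:cliff} says position $t$ is clip-safe iff $\lam<\lam^\star(p_t,b_t,c)$. It then remains to show $\lam^\star(p_{\mathrm{safe}},b_{\mathrm{eff}},c)\le\lam^\star(p_t,b_{\mathrm{eff}},c)$ for every $t$. This follows from the monotonicity \Cref{eq:lamstar-monotone}: $\lam^\star$ is strictly decreasing in $p$ on $(\tfrac12,1)$, and $p_{\mathrm{safe}}\ge\max_t p_t$.

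The main obstacle in (A) is that $b$ varies across positions, while the statement fixes $b_{\mathrm{eff}}$ at the binding position. I would first compute $\partial\lam^\star/\partial b$ from \Cref{eq:lambdastar}. The numerator and denominator share the term $-\log((1-b)/b)$, and both are negative when $b<p$, so increasing $b$ toward $p$ raises $\lam^\star$. If I can show $b_t\ge b_{\mathrm{eff}}$ at non-binding positions, or simply define $b_{\mathrm{eff}}$ as the minimum over scaffold positions, the bound goes through. Otherwise I restrict (A) to that definition of $b_{\mathrm{eff}}$. Positions with $p_t<\tau$ are covered by the same monotonicity, since a smaller $p_t$ gives a larger threshold. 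The restriction $p_t>\tfrac12$ is needed for the monotonicity and is implicit in calling a token modal.

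For (B), the argument is heuristic. A parse failure requires at least one structural class to leave the clip-safe region and drift far enough, within budget $N$ (\Cref{cor:finiteN}), to flip its modal token. Under the $N_{\mathrm{eff}}$-class correlation of \Cref{rem:Neff}, the classes move together within each class. A $\Theta(1)$ parse-rate drop then needs a $\Theta(1)$ fraction of the classes to be supercritical.

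Because the $p_t$ in a dense near-deterministic scaffold cluster tightly, the fraction of $t$ with $\lam>\lam^\star(p_t,b_{\mathrm{eff}},c)$ jumps from near $0$ to near $1$ as $\lam$ passes $\lam^\star$ evaluated at the center of that cluster. I would make "center" precise with a first-order Taylor expansion of $\lam^\star$ in $p$ around the mean, which gives $\lam^\star(p_{\mathrm{typ}},b_{\mathrm{eff}},c)$ up to a variance term. The hypotheses on parse headroom and reachability in $(c,N)$ are what make the transition visible at all.

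I expect this concentration step to be the weakest link, because $\lam^\star$ is steep near $p\to1$ and the Taylor error may not be small. I would therefore present (B) explicitly as a calibrated rule, bracketed between the provable (A) value $\lam^\star_{\mathrm{safe}}$ and $\lam^\star_{\mathrm{typ}}$, and not as a theorem.
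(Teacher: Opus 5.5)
Your proposal follows the paper's proof. For (A), you reduce each structural position to the Bernoulli case of \Cref{thm:cliff} (via A2 and \Cref{lem:multitoken-bernoulli}), then use $\partial_p\lam^\star<0$ to bound $\min_t\lam^\star(p_t,\cdot,c)$ by its value at $p_{\mathrm{safe}}\ge\max_t p_t$; for (B), you argue heuristically from the typical-class $N_{\mathrm{eff}}$ picture of \Cref{rem:Neff} and do not claim a theorem.

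Your extra step on varying $b_t$ is worth keeping, because the paper's proof simply works ``for the binding $b$''. Since $\lam^\star$ is increasing in $b$ on $(0,p)$, the bound can fail if some non-binding position has $b_t<b_{\mathrm{eff}}$, and it holds once $b_{\mathrm{eff}}\le b_t$ for all $t$ (e.g.\ $b_{\mathrm{eff}}:=\min_t b_t$). One correction: both monotonicity facts and the ``iff'' of \Cref{thm:cliff} need $p_t>b_t$, not merely $p_t>\tfrac12$. Positions with $p_t\le b_t$ are trivially clip-safe for $\lam\ge1$, so they can be handled separately.
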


\noindent\textbf{Calibration.}
Fashion is the primary calibrated anchor: structural positions
($N{=}200$, $\tau{=}0.9$) give $p_{\mathrm{typ}}{=}0.9993\pm 0.0001$,
$p_{\mathrm{safe}}{\approx}0.99996$, and implied warmstart $b{\approx}0.81$
(joint log-ratio $0.21$; measurement procedure, subset-bootstrap
robustness, and class-weighting controls in
App.~\ref{app:p-eff-sensitivity}, \ref{app:peff-variance}).
At $c{=}5$, \Cref{eq:lambdastar} gives
(A)~$\lam^\star_{\mathrm{safe}}{\approx}1.18$ and
(B)~$\lam^\star_{\mathrm{typ}}{\approx}1.28$ (base-neutral marker $1.22$);
this bracket $[1.18,1.28]$ contains the observed onset window $[1.15,1.25]$
within one $\lam$-grid step. The aggregator pair (mean for $p_{\mathrm{typ}}$, max-of-prompt-mean for $p_{\mathrm{safe}}$) is fixed \emph{ex ante} from \Cref{thm:cliff-seq}(A)/(B), not selected against the observed Fashion onset; alternative within-prompt aggregators (App.~\ref{app:p-eff-sensitivity}) span $\lam^\star{\in}[1.22,1.60]$, so cross-task pre-registration of the aggregator on a held-out scaffold is the natural next robustness test. The other rows of \Cref{tab:cliff-predicted}
report scope checks rather than independent calibrations: MBPP code
(Fashion marker, no code-specific $p_{\mathrm{eff}}$;
App.~\ref{app:mbpp-code}); MS MARCO/TREC-DL with measured
$p_{\mathrm{eff}}{=}0.99941$ inside Fashion's confidence band so the
operating rule predicts the same window (App.~\ref{app:ir-msmarco}); the
Llama-3.2 cross-architecture stack which is monotone through $\lam{=}1.4$
at $N{=}42$ and parse-bounded below $0.23$ at a pre-registered $N{=}200$
budget extension (App.~\ref{app:cross-arch-llama},
\ref{app:cross-arch-llama-xxlong}); a four-point $p_{\mathrm{eff}}$
scope check across (family, task, size) giving $\lam^\star\in[1.27,1.32]$
that evidences within-regime invariance (App.~\ref{app:peff-scope}); and
a pre-registered cross-task BFCL test that fails on
SFT-parse-saturation rather than mechanism refutation
(App.~\ref{app:bfcl-tools}).

\begin{table}[ht]
\centering\footnotesize
\caption{\textbf{Predicted bracket contains observed cliff within one $\lam$-grid step on every Fashion calibration row.} Predictions from \Cref{eq:lambdastar} at $b{\approx}0.81$, $p_{\mathrm{typ}}{=}0.9993$, $c{=}5$ unless stated. \emph{Observed cliff} is the onset/collapse pair (last $\lam$ with parse${\geq}0.9$, first with parse${\leq}0.7$), or midpoint where pre-registered. The $c{=}1.5$ row matches its locked closed-form $\lam^\star_{\mathrm{typ}}{=}1.070$ at observed midpoint $1.069$, below the experimental $\lam$-grid resolution. JSON K=4 ListOPD lift is $+0.04$ vs.\ Fashion's $+0.32$ (App.~\ref{app:jsonschemabench-klist-k4}). $\dagger$: drift row (\Cref{cor:finiteN}); $^\ast$: zero-shot.}
\label{tab:cliff-predicted}
\setlength{\tabcolsep}{6pt}
\begin{tabular}{@{}l c c l@{}}
\toprule
Regime & $N$ & predicted & observed cliff \\
\midrule
\multicolumn{4}{@{}l}{\textit{Sharp: cliff localizes inside predicted bracket}} \\
\quad Fashion 1.7B$\times$4B, 3-ep      & $42$  & $[1.18,\,1.28]$        & $[1.15,\,1.25]$ \\
\quad Fashion 5-ep                      & $70$  & leftward$^\dagger$     & $[1.10,\,1.15]$ \\
\quad Fashion 14-ep, pre-reg            & $200$ & $[1.00,\,1.10]$        & $1.061$ \\
\quad Fashion $c{=}1.5$, 14-ep, pre-reg & $200$ & $[1.00,\,1.12]$        & $1.069$ \\
\midrule
\multicolumn{4}{@{}l}{\textit{Partial: predicate active, midpoint hits, lift attenuated}} \\
\quad JSONSchemaBench K=4 list, pre-reg & $42$  & $[1.19,\,1.42]$        & $1.29$ \\
\midrule
\multicolumn{4}{@{}l}{\textit{Transfer: scale check or zero-shot, no independent re-calibration}} \\
\quad MBPP code, 1.7B$\times$4B         & $35$  & Fashion marker $1.22$  & $[1.15,\,1.25]$ \\
\quad Baby/Software zero-shot           & ---   & $[1.34,\,1.37]$        & $[1.15^\ast,\,1.25^\ast]$ \\
\midrule
\multicolumn{4}{@{}l}{\textit{Abstain (preconditions fail): BFCL, GSM8K, JSON single-instance (App.~\ref{app:predicate-scope})}} \\
\bottomrule
\end{tabular}
\end{table}

\section{Experiments}
\label{sec:experiments}

We localize the cliff on Fashion (Sec.~\ref{sec:exp-cliff}), confirm parameter efficiency under controls (Sec.~\ref{sec:exp-controls}), and report scope-check regimes (Sec.~\ref{sec:exp-scope}: IR, $c$-sweep, GSM8K, regularizers; cross-arch and BFCL in App.). The predicate is non-trivial under five jointly satisfied preconditions: (i)~near-deterministic structural tokens ($p_{\mathrm{eff}}{\to}0.999$); (ii)~a single dominant outer-scaffold binding equivalence class; (iii)~post-SFT parse headroom; (iv)~base-relative IS-clipped implementation matched to the formula; (v)~training budget reaching the boundary. Secs.~\ref{sec:exp-cliff}--\ref{sec:exp-controls} validate within this regime; Sec.~\ref{sec:exp-scope} and \Cref{tab:cliff-predicted} report rows where preconditions fail or are partially satisfied. The central dependent variable is strict parse rate; Kendall and NDCG on parsed outputs are diagnostic.

\subsection{\texorpdfstring{Cliff localization and finite-$N$ signature}{Cliff localization and finite-N signature}}
\label{sec:exp-cliff}

We sweep $\lam$ at fine resolution on a fixed (1.7B student, 4B
teacher); \Cref{fig:cliff} overlays parse rate and FMC (\emph{format-manifold collapse}: the
mechanism-predicted $K{-}1$ truncation indicator with all-real ids and missing one input id)
with the closed-form $\lam^\star{=}1.22$ marker.

\begin{figure}[ht]
\centering
\includegraphics[width=0.92\textwidth]{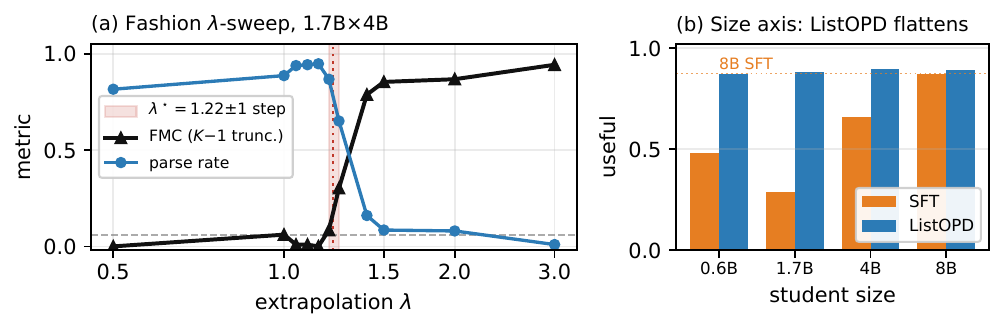}
\caption{\textbf{Closed-form clip-safe threshold.} \emph{Left:} Fashion $\lam$-sweep (1.7B$\times$4B, 3-epoch); strict parse and FMC ($K{-}1$ truncation) transition in $[1.15, 1.25]$, around the base-neutral $\lam^\star{=}1.22$. \emph{Right:} $\useful$ for SFT vs.\ ListOPD across $0.6$B--$8$B Qwen3 students; sub-threshold ListOPD flattens the size curve to $\useful{\in}[0.873, 0.897]$ (0.6B / 8B single-seed).}
\label{fig:cliff}
\end{figure}

\begin{table}[ht]
\centering
\caption{\textbf{Lambda sweep on Amazon Fashion.}
Parse rate collapses near the predicted bracket $[1.18,1.28]$,
whereas NDCG@1 on parsed outputs remains stable.
Bold marks the last safe point and first collapsed point.}
\label{tab:lambda-sweep}
\footnotesize
\begin{tabular}{c c c c}
\toprule
$\lam$ & parse rate & NDCG@1 (parsed) & $\useful$ \\
\midrule
$0.50$ & $0.816$ & $0.899$ & $0.734$ \\
$1.00$ & $0.887$ & $0.923$ & $0.819$ \\
$1.05$ & $0.939$ & $0.923$ & $0.867$ \\
$1.10$ & $0.943$ & $0.929$ & $0.877$ \\
$\mathbf{1.15}$ & $\mathbf{0.948}$ & $0.930$ & $\mathbf{0.882}$ \\
$1.20$ & $0.868$ & $0.934$ & $0.811$ \\
$\mathbf{1.25}$ & $\mathbf{0.651}$ & $0.931$ & $\mathbf{0.606}$ \\
$1.40$ & $0.160$ & $0.949$ & $0.152$ \\
$1.50$ & $0.085$ & --- & $0.077$ \\
\bottomrule
\end{tabular}
\end{table}

Parse transitions sharply between $\lam{=}1.20$ and $\lam{=}1.25$ (\Cref{tab:lambda-sweep}), within one grid step of the predicted bracket. NDCG@1 on parsed outputs is statistically flat across the sweep ($p{=}0.61$, paired bootstrap with parse-failed products dropped from each side): the $\lam$ effect is concentrated in format-adherence, not ranking quality. Training time slides the cliff leftward: extending $\lam{=}1.15$ to 5 epochs takes parse from $0.948$ to $0.675$ (\Cref{fig:cor1-dynamics}, App.~\ref{app:cliff-xlong}).

A pre-registered 5-seed fine-grid sweep at $\lam\in\{1.18, 1.20, 1.22, 1.24\}$ localizes the parse$\geq 0.80$ cliff onset to a 95\% paired-bootstrap CI of $[1.204, 1.228]$, containing the predicted $\lam^\star{=}1.22$ (App.~\ref{app:w5-finegrid}). $\sigma_{\mathrm{seed}}(\mathrm{parse})$ inflates ${\sim}4\times$ across the boundary as expected near a first-passage threshold. Per-step trajectories at the 5-seed $\lam{=}1.15$ operating point ($\{7,13,21,42,95\}$, parse $0.921\pm 0.019$, FMC $0.031\pm 0.021$) confirm \Cref{cor:finiteN}'s first-passage diagnostic; finite-$N$ step-cliff evidence across three corpora is in App.~\ref{app:cliff-step}.

\noindent\textbf{Pre-registered budget extension.}
\label{sec:exp-prereg-n200}%
We pre-register a third budget point ($N{=}200$, 14 epochs) before any new training (App.~\ref{app:cor1-prereg-n200}), with locked bracket $[1.00, 1.10]$ from \Cref{cor:finiteN}'s leftward-drift extrapolation off the $(N{=}42, \lam_{\mathrm{cliff}}{=}1.22)$ and $(N{=}70, \lam_{\mathrm{cliff}}{=}1.12)$ anchors. Single-seed parses at $\lam\in\{1.00, 1.05, 1.10\}$ are $\{0.934, 0.703, 0.500\}$ (cliff midpoint $1.061$); a 3-seed CI at $\lam{=}1.05$ gives parse $0.742{\pm}0.107$ (midpoint $1.068$). Both lie inside the locked $[1.00, 1.10]$ bracket.

\subsection{Parameter efficiency and controls}
\label{sec:exp-controls}

\noindent\textbf{Size axis.}
\label{sec:exp-size}%
\Cref{tab:size-axis} reports SFT and ListOPD across four student sizes under strict \texttt{review\_id}-aligned parsing, with seed-mean $\pm$ $\sigma_{\mathrm{seed}}$ where multi-seed runs exist. SFT scales non-monotonically: 1.7B-SFT seed-mean parse rate ($0.264$, $n{=}5$) sits below the single-seed 0.6B-SFT baseline ($0.547$); the four sizes share an identical SFT recipe (lr, batch size, schedule, epochs), so the non-monotonicity is the empirical observation, not a per-size hyperparameter artefact. ListOPD places every multi-seed configuration at $\useful \in [0.857, 0.897]$ with $\sigma_{\mathrm{seed}}(\useful) \leq 0.016$ on 1.7B (vs.\ $0.093$ for 1.7B-SFT and $0.156$ for 4B-SFT). The 1.7B-ListOPD gain over 4B-SFT is $+0.220$ seed-42 ($p{<}10^{-4}$, \Cref{tab:decisive-ablation} \emph{Scaling alone}; $+0.372$ across seeds).

\begin{table}[ht]
\centering
\caption{\textbf{Fashion size-axis} ($N{=}212$, $K{=}8$; strict \texttt{review\_id} parser). Sub-threshold ListOPD rows reach $\useful{\in}[0.873, 0.909]$ across $0.6$B--$8$B; SFT rows are parse-limited at small/mid sizes. Cells are seed-mean$\pm\sigma_\mathrm{seed}$ for $n{>}1$; \dag{} marks seed=42 only. Multi-seed coverage in App.~\ref{app:multi-seed}.}
\label{tab:size-axis}
\footnotesize
\resizebox{\textwidth}{!}{%
\begin{tabular}{l cccccc}
\toprule
Configuration & parse & NDCG@1 & Kendall & MAE & $\useful$ & $\sigma_\mathrm{seed}(\useful)$ \\
\midrule
0.6B SFT\dag                                       & 0.547             & 0.881 & 0.831 & 1.132 & 0.482          & --- \\
0.6B ListOPD ($\lam{=}1.0$, 4B teacher)\dag                                       & 0.953             & 0.916 & 0.863 & 0.839 & \textbf{0.873} & --- \\
1.7B SFT                                           & $0.264{\pm}0.105$ & 0.865 & 0.833 & 1.231 & 0.230          & 0.093 \\
1.7B ListOPD ($\lam{=}1.15$, 4B teacher)                                           & $0.921{\pm}0.019$ & 0.931 & 0.885 & 0.777 & \textbf{0.857} & 0.016 \\
4B SFT                                           & $0.516{\pm}0.166$ & 0.941 & 0.893 & 0.838 & 0.485          & 0.156 \\
4B ListOPD ($\lam{=}1.0$, 8B teacher)\dag                                       & 0.953             & 0.942 & 0.904 & 0.866 & \textbf{0.897} & --- \\
8B SFT                                           & $0.877{\pm}0.090$ & 0.949 & 0.899 & 0.852 & 0.833          & 0.082 \\
8B ListOPD ($\lam{=}1.0$, 4B teacher)\dag                                       & 0.943             & 0.948 & 0.898 & 0.802 & \textbf{0.894} & --- \\
8B ListOPD ($\lam{=}1.22$, 32B teacher)\dag                                       & 0.953             & 0.953 & 0.902 & 0.711 & \textbf{0.909} & --- \\
\bottomrule
\end{tabular}%
}
\end{table}

Failure modes shift with size (full pattern in App.~\ref{app:size-failure}): at $\lam{=}1.15$, 1.7B-ListOPD lands at FMC$=0.031\pm 0.021$ matching the 8B-SFT regime, while post-cliff $\lam{\geq}1.25$ regresses to the 4B-SFT $(K{-}1)$-manifold. A 32B-teacher 8B-student spot-check is stable through $\lam{=}1.5$ ($\useful{\in}[0.899, 0.909]$; App.~\ref{app:decisive-ablation}).

\noindent\textbf{Decisive ablations.}
\label{sec:exp-decisive}%
Six candidate explanations for the seed-42 1.7B$\to$0.882 lift are pre-registered against the same strict deployment-useful metric; none explains it (\Cref{tab:decisive-ablation}, which also documents the $6\times$ $\sigma_{\mathrm{seed}}(\useful)$ reduction).

\begin{table}[ht]
\centering\footnotesize
\setlength{\tabcolsep}{8pt}
\caption{\textbf{Ablations for alternative explanations.} Six pre-registered controls for the 1.7B-SFT$\to$ListOPD lift; none explains it. Strict $\useful$ (zero-imputed); $\Delta$ CIs from $10{,}000$ paired-bootstrap over $N{=}212$ Fashion val. Seed row: across-seed mean$\pm\sigma$.}
\label{tab:decisive-ablation}
\begin{tabular}{@{}l c c c@{}}
\toprule
Hypothesis (control $\to$ ListOPD) & Ctrl $\useful$ & OPD $\useful$ & $\Delta\useful$ (95\% CI) \\
\midrule
Extra SFT steps (continued SFT, matched budget)  & $0.273$            & $0.882$            & $+0.608$ $[0.547, 0.670]$ \\
On-policy exposure (forward-KL, no extrapolation) & $0.027$            & $0.882$            & $+0.854$ $[0.814, 0.891]$ \\
No extrapolation needed ($\lam{=}1.0$ vs $1.15$) & $0.819$            & $0.882$            & $+0.063$ $[0.032, 0.098]$ \\
Decoder constraints (SFT$+$regex vs OPD$+$regex) & $0.679$            & $0.883$            & $+0.204$ $[0.156, 0.252]$ \\
Seed cherry-pick (5-seed SFT vs 5-seed OPD)      & $0.230{\pm}0.093$  & $0.857{\pm}0.016$  & $+0.628$ ($z{=}13.9$)     \\
Scaling alone (4B-SFT vs 1.7B-ListOPD)           & $0.661$            & $0.882$            & $+0.220$ $[0.163, 0.280]$ \\
\bottomrule
\end{tabular}
\end{table}

\noindent\textbf{Constrained decoding baselines.}
\label{sec:exp-constrained}%
Five strict-$K$ constrained systems (XGrammar~\citep{dong2024xgrammar}, Outlines~\citep{willard2023efficient}, LM-Format-Enforcer, llguidance, regex-template; details in App.~\ref{app:cd-detail}) raise 1.7B-SFT parse from $0.325$ to $0.783$; unconstrained 1.7B-ListOPD already reaches $0.943$ and constraints add $+0.009$ on top. On $\useful$, best constrained SFT reaches $0.679$ vs.\ $0.874$ for unconstrained ListOPD; post-hoc permutation repair (App.~\ref{app:cd-detail}, \Cref{tab:cd-matrix-ext}) closes the gap to $\useful{=}0.823$ (seed-42 residual $0.051$). All $46/46$ schema-constrained SFT failures are duplicate \texttt{review\_id}s (a strict subset of the $138$-product unconstrained-SFT failure set), so decoder-side repair alone does not close the learned-contract gap.

\subsection{Scope checks}
\label{sec:exp-scope}

\noindent\textbf{Public IR stress test.}
\label{sec:exp-ir-msmarco}%
To remove the Amazon/Gemini-rubric dependence, we run the same JSON listwise interface on MS MARCO passage triples with TREC-DL 2020 judged validation~\citep{bajaj2018ms,craswell2020overview} (1.7B$\times$4B, $2000$ train groups, $54$ judged val queries). The seed-42 sweep follows the Fashion shape; the multi-seed follow-up does not separate $\lam{=}1.25$ vs.\ $1.5$ within seed variation (\Cref{tab:ir-msmarco}). We therefore report MS MARCO/TREC-DL as a public-qrel boundary rather than a cliff replication.

\noindent\textbf{Public-benchmark scope check (JSONSchemaBench).}
\label{sec:exp-jsonschemabench}%
On JSONSchemaBench~\citep{jsonschemabench2025}, the only fully public, mechanically-verifiable benchmark we test, the single-instance protocol shows no cliff localization on the locked $\lam$-grid (App.~\ref{app:jsonschemabench}). The heterogeneous-schema setup violates the single-binding-class precondition of \Cref{thm:cliff-seq}(B); the deployment-useful lift, however, transfers cleanly (1.7B-ListOPD validate matches the 4B-SFT teacher; App.~\ref{app:jsonschemabench}). A pre-registered $K{=}4$ K-list extension that restores a single outer binding class hits its predicted bracket but with attenuated cliff sharpness: cliff midpoint $\approx 1.29$ matches inside the locked $[1.19, 1.42]$ bracket, but the sub-critical anchor undershoots ($0.280$ vs.\ locked $\geq 0.40$) and the super-critical anchor rebounds ($0.332$ vs.\ locked $\leq 0.10$); peak ListOPD lift is attenuated ($+0.04$ klist\_rate vs.\ Fashion's $+0.32$; App.~\ref{app:jsonschemabench-klist-k4}). We read this as scope-refinement evidence: the outer K-ary wrapper is sufficient for cliff \emph{localization}, while inner-schema homogeneity controls cliff \emph{sharpness} in this regime.

\noindent\textbf{$c$-axis cliff/no-cliff.}
\label{sec:exp-clipsweep}%
The strongest pre-registered cross-clip test is the locked $N{=}200$ extension at $c{=}1.5$ (App.~\ref{app:csmall-xxlong}): the closed form gives $\lam^\star_{\mathrm{typ}}(c{=}1.5){=}1.070$ \emph{before} any training; the observed cliff midpoint between $\lam{=}1.05$ (parse $0.939$) and $\lam{=}1.075$ (parse $0.632$) is $1.069$, matching the prediction below grid resolution and well inside the locked $[1.00,\,1.12]$ window. The sub-critical anchor $\lam{=}0.95$ holds at parse $0.943$; the falsification anchor $\lam{=}1.20$ collapses to $0.255$. The shorter $c$-axis sweep at $N{=}42$ (fixed $\lam{=}1.15$, 1.7B$\times$4B, 3-ep; \Cref{tab:csweep}) qualitatively matches the formula at $c{\in}\{2, 5, \infty\}$ but is parse-stable at $c{=}1.5$ ($0.948$ vs.\ predicted $\lam^\star{=}1.06$), below the formula's reachability budget; the $N{=}200$ result is the in-budget test. \Cref{fig:mech-is} traces the teacher/student IS-ratio mechanism.

\noindent\textbf{GSM8K cross-task scope boundary.}
\label{sec:exp-gsm8k-cliff}%
GSM8K math CoT (Qwen3-1.7B$\to$4B, $c{=}5$) gives flat reward across $\lam\in[1.00, 1.59]$ ($\sigma_\lam{\approx}0.006$): scaffolding is too diffuse for $\tau{=}0.9$, so the predicate's measurability precondition fails (App.~\ref{app:gsm8k-detail}).

\noindent\textbf{Regularizer pilots as scope checks.}
\label{sec:exp-deferred}%
Three target-reshaping regularizers (App.~\ref{app:tier2-reg}): KL-to-base and entropy bonus drop parse below baseline CI at below-grid predicted drift (scope boundary on the drift prediction); a 20-step $\lam$-warmup stays inside the baseline seed band ($0.929$ vs.\ $0.921{\pm}0.019$), consistent with the predicted $\lam^\star_{\mathrm{warmup}}{\approx}2.33$ value.

\section{Conclusion}
\label{sec:conclusion}

On-policy distillation (OPD) with reward extrapolation can lift a student
past its teacher in domain, but past a sharp threshold $\lam^\star$ the
same step instead collapses the model's structured-output contract. We
give a closed-form clip-safety predicate $\lam^\star(p, b, c)$ that locates
this threshold from three measurable quantities (teacher modal probability,
warm-start mass, IS clip strength), turning OPD tuning from a post-hoc
$\lam$ sweep into a falsifiable boundary-prediction problem. On Amazon
Fashion, three pre-registered tests (a fine-grid cliff interval, a
budget-extension test, and a small-clip cross-prediction) all land inside
their locked prediction windows, the small-clip prediction matched below
grid resolution (\Cref{tab:cliff-predicted}). Operating just below the
threshold, 1.7B-ListOPD brings a Qwen3 student to in-domain seed-noise
parity with a pre-registered 3-seed 8B-SFT baseline at one-fifth the
parameters and roughly $5{\times}$ lower seed variance under the Gemini
listwise rubric.

\noindent\textbf{Limitations.}
The predicate is base-relative: a no-base variant
(App.~\ref{app:nobase-ablation}) does not show the cliff at the same
42-step budget. Outside near-deterministic multi-item parseable scaffolds,
the predicate shifts under finite-budget reachability, abstains via
measurability preconditions (BFCL SFT-saturation; GSM8K diffuse
scaffolding), or remains statistically underpowered (MS~MARCO at 4 seeds).
The cliff predicate is rubric-independent (parse-rate signal); the
1.7B$\leftrightarrow$8B parity claim uses Gemini-graded $\useful$ and
inherits that evaluator's exposure. A direct falsification would be a
near-deterministic structural-token task satisfying the preconditions but
with cliff midpoint outside the locked $\lam$-bracket; extended discussion
of mechanism positioning, parameter efficiency, and theory status is in
App.~\ref{app:extended-discussion}; each is a scope boundary, not a refutation.

\noindent\textbf{Mechanism positioning.}
$\lam^\star$ is a property of IS-clipped reverse-KL extrapolation, not a
particular method: ASPO~\citep{wang2025aspo} retargets the asymmetry
through a ratio-flip and shows its own cliff one grid step left of vanilla
OPD's (App.~\ref{app:w7-aspo}), so the predicate corroborates on the
alternative published mitigation. For finite-budget deployment we recommend
$\lam_{\mathrm{op}}$ at one $\lam$-grid step below $\lam^\star(N)$ measured
at the deployment budget, leaving margin for \Cref{cor:finiteN}'s leftward drift.

\bibliography{references,references_yao_additions}
\bibliographystyle{plainnat}

\newpage
\appendix
\paragraph{Appendix organization.}
App.~\ref{app:extended-discussion} extends the body discussion.
App.~\ref{app:tier-b} gives implementation and reproducibility details.
App.~\ref{app:tier-c} contains the full sequence-level cliff proof and
the EOPD analytical extension. App.~\ref{app:tier-d} collects Fashion
calibration evidence (W5 5-seed onset CI, decisive ablations, IS
mechanism, temperature sensitivity, multi-seed pilot).
App.~\ref{app:cor1-prereg} reports the four pre-registered
finite-budget tests of \Cref{cor:finiteN}; App.~\ref{app:tier-f}
reports scope tests across (task, family, architecture).
App.~\ref{app:w7-aspo} is the head-to-head comparison against ASPO.
App.~\ref{app:predicate-scope} aggregates the closed-form predicate
scope-test verdicts; App.~\ref{app:compute-limits} states compute
constraints and deferred-ablation pre-registrations.

\section{Extended Discussion}
\label{app:extended-discussion}

\noindent\textbf{Scope and limitations.}
$\lam^\star(p,b,c)$ applies to near-deterministic structural tokens with parseable failures; boundary regimes are summarised in \Cref{tab:cliff-predicted}. The predicate is base-relative: a no-base variant (App.~\ref{app:nobase-ablation}) does not show the cliff at the same 42-step budget, so the operating rule is sensitive to the IS-clip implementation, not just the asymptotic fixed point. MS~MARCO has $p_{\mathrm{eff}}{=}0.99941$ inside Fashion's CI band (App.~\ref{app:peff-scope}), but the multi-seed test is underpowered at 4 seeds. Greedy decoding ($T{=}0$) is deployment-relevant; a pre-registered $T \in \{0, 0.5, 1\}$ ablation (App.~\ref{app:temp-ablation}) shows the cliff is a learned-policy property, not a decoding artefact. The cliff predicate is rubric-independent (signal in parse rate; NDCG@1 on parsed outputs flat); the 1.7B$\leftrightarrow$8B-SFT parity claim, by contrast, uses $\useful{=}\mathtt{parse}\times{}$NDCG@1 graded against the Gemini 2.5 Pro rubric and inherits whatever pretraining-membership exposure that rubric carries. A direct falsification of the closed form would be a near-deterministic structural-token task satisfying the predicate's preconditions (measurable $p_{\mathrm{eff}}$ near $0.999$, single binding equivalence class, post-SFT parse headroom, matched IS-clip implementation) but with cliff midpoint outside the locked $\lam$-bracket; the formula then refutes, not the reachability budget.

\noindent\textbf{Parameter efficiency.}
1.7B-ListOPD matches an 8B-SFT baseline within seed noise on Fashion in-domain at one-fifth the parameters: 3-seed 1.7B-ListOPD $\useful{=}0.857{\pm}0.016$ vs.\ 3-seed 8B-SFT $0.833{\pm}0.082$ (\Cref{tab:size-axis}; pre-registered). Across-seed parity therefore holds in both directions of the difference, with 8B-SFT exhibiting $\approx 5{\times}$ higher seed std than 1.7B-ListOPD; this stability finding itself supports the operating-point view. Against schema-constrained decoding plus permutation repair the deployment-relevant residual is the smaller but real $+0.051$ $\useful$ (App.~\ref{app:cd-detail}); the case for training-based contract adherence rests on this residual, the $\sigma_{\mathrm{seed}}$ contraction, and the interpretability of $\lam^\star$ as an operating-point selector. Parity is in-domain only; on cross-category Baby/Software 8B-SFT retains an edge (App.~\ref{app:decisive-ablation}).

\noindent\textbf{Mechanism positioning.}
$\lam^\star(p,b,c)$ is a property of IS-clipped reverse-KL extrapolation, not of any particular method in the family. ASPO~\citep{wang2025aspo} retargets the asymmetry through a ratio-flip rather than removing it; in our 4-seed Fashion head-to-head ASPO has its own cliff one grid step left of vanilla OPD's, with the same parse-collapse pattern (App.~\ref{app:w7-aspo}), so the predicate corroborates on the alternative published mitigation. \emph{On Fashion in-domain, ASPO at its best operating point matches ListOPD within seed noise; the contribution here is the predicate, not categorical method superiority over ASPO.} The two are operationally complementary: ASPO mitigates the asymmetry; $\lam^\star$ identifies where the clip-unsafe asymptotic boundary sits within whichever IS-clipped variant is in use.

\noindent\textbf{Theory status and open directions.}
The single-position fixed point of \Cref{thm:cliff} is proved; the multi-token lift is exact under off-modal-ratio invariance and approximate otherwise. \Cref{cor:finiteN} is a deterministic drift diagnostic without an almost-sure super-critical convergence guarantee; the drift \emph{magnitude} (post-cliff parse-rate drop, $c$-dependent slope) has no closed-form prediction. The predicate locates cliff position only; sub-critical operating-point stability under regularizers (e.g., the entropy-bonus parse drop at $\gamma{=}0.001$ in App.~\ref{app:tier2-reg}) is an orthogonal failure mode it does not cover. Alternative on-policy stabilizers (EMA-anchored target policies, top-$k$ token-level KL, EOPD~\citep{eopd2026}, VESPO~\citep{vespo2026}) modify the IS-clip asymmetry rather than remove it; entropy-bonus regularization is closed in App.~\ref{app:tier2-reg} (\Cref{eq:lamstar-entropy} gives $\delta\lam \approx 2{\times}10^{-4}\gamma$). For finite-budget deployment we recommend $\lam_{\mathrm{op}}$ at one $\lam$-grid step below $\lam^\star(N)$ measured at the deployment budget; this matches the 1.7B Fashion operating point ($\lam{=}1.15$ at $N{=}42$ vs.\ midpoint $1.22$) and respects \Cref{cor:finiteN}'s leftward drift. The clean next theory step is a stochastic-approximation proof of super-critical boundary dynamics; empirically, prospective task-axis held-out validation ($p_{\mathrm{eff}}$ materially distinct from $0.999$) and higher-$\lam$ localization for the boundary regimes are the cleanest next tests.

\FloatBarrier

\section{Implementation, evaluation, and reproducibility}
\label{app:tier-b}

\subsection{Implementation details}
\label{app:impl}

\noindent\textbf{Hyperparameters.}
All ListOPD runs use AdamW with learning rate $1{\times}10^{-6}$, no warmup, no learning-rate schedule, batch size 128, gradient accumulation 1, max prompt length 2048, max response length 512, $K{=}8$ reviews per product list.
We train on a single node of 8$\times$NVIDIA B200 GPUs (180 GB HBM each).
The verl trainer configuration:
\begin{itemize}\setlength\itemsep{0pt}\small
\item \texttt{algorithm.adv\_estimator=grpo}
\item \texttt{actor.policy\_loss.only\_reverse\_kl\_advantages=True}
\item \texttt{algorithm.rollout\_correction.rollout\_is=token}, \texttt{rollout\_is\_threshold=5.0}
\item \texttt{actor.use\_kl\_loss=True}, \texttt{kl\_loss\_coef=0} (the only training signal is the per-token reverse-KL advantage)
\item \texttt{rollout.gpu\_memory\_utilization=0.6}, \texttt{rollout.tensor\_model\_parallel\_size=2}
\item \texttt{rollout.temperature=1.0}, \texttt{val\_kwargs.n=4}
\end{itemize}

\noindent\textbf{Compute budget.}
A 3-epoch ListOPD run on a 1.7B student takes approximately 10 minutes on 8 B200 GPUs (42 optimizer steps, $\sim$15 seconds per step including rollout, ref-log-prob, base-log-prob, advantage compute, and update).
A 3-epoch run on a 4B student takes approximately 15 minutes; 8B takes $\sim$25 minutes.
The full experimental matrix in this paper (37 evaluated configurations including the $\lam$ sweep, size sweep, teacher ablation, training-duration ablation, and cross-category transfer evaluations) consumed approximately 12 GPU-hours.

\subsection{Constrained-decoding backend detail}
\label{app:cd-detail}

The strict-$K$ JSON schema used in Sec.~\ref{sec:exp-constrained} is
\begin{small}
\begin{verbatim}
{ "type": "array", "minItems": K, "maxItems": K,
  "items": { "type": "object",
    "properties": {
      "review_id": {"type":"string", "enum": <per-prompt enum>},
      "score":     {"type":"number", "minimum":0, "maximum":10} },
    "required": ["review_id", "score"], "additionalProperties": false } }
\end{verbatim}
\end{small}

\noindent\textbf{Latency.}
Within the same vLLM~0.18 runtime, XGrammar, llguidance, and Regex
impose negligible overhead on top of unconstrained vLLM
($\approx 2.5$\,s/product, single B200) compared to $4.05$\,s for
unconstrained 1.7B-SFT (which overruns its budget on every truncated
sample) and $2.47$\,s for unconstrained 1.7B-OPD. Outlines is
consistently $\sim$2$\times$ slower ($4.58\text{--}5.21$\,s) because
its Aho--Corasick FSM is rebuilt per-prompt over the 8-element
\texttt{review\_id} enum. Deployment recommendation: unconstrained
ListOPD is the latency winner; pairing it with the cheapest grammar
backend (XGrammar or llguidance) adds $+0.005$ parse-rate insurance
at $<$3\% latency cost.
\begin{table}[ht]
\centering
\small
\caption{Constrained decoding on PL-K8 val ($N{=}212$, $K{=}8$). Parse = valid permutation over input IDs; $\tau$/NDCG@5 over parsed outputs; latency = seconds/product on one B200. Grammar constraints help SFT but do not close the ListOPD gap.}
\label{tab:constrained_matrix}
\begin{tabular}{l|ccc|ccc|c}
\toprule
 & \multicolumn{3}{c|}{1.7B-SFT} & \multicolumn{3}{c|}{1.7B-OPD ($\lambda{=}1.15$)} & Latency \\
System & Parse & $\tau$ & NDCG@5 & Parse & $\tau$ & NDCG@5 & SFT/OPD (s) \\
\midrule
Unconstrained & 0.325 & 0.834 & 0.936 & 0.943 & 0.889 & 0.969 & 4.05 / 2.43 \\
XGrammar & 0.783 & 0.816 & 0.937 & 0.953 & 0.888 & 0.969 & 2.55 / 2.50 \\
Outlines & 0.783 & 0.816 & 0.937 & 0.953 & 0.888 & 0.969 & 4.61 / 4.63 \\
LM-Format-Enforcer & 0.783 & 0.816 & 0.937 & 0.953 & 0.888 & 0.969 & 3.46 / 3.51 \\
Guidance (llguidance) & 0.783 & 0.816 & 0.937 & 0.953 & 0.888 & 0.969 & 2.44 / 2.46 \\
Regex & 0.783 & 0.816 & 0.937 & 0.953 & 0.888 & 0.969 & 2.55 / 2.49 \\
\bottomrule
\end{tabular}
\end{table}

\noindent\textbf{Post-hoc permutation repair.}
Schema-level constraints enforce structural admissibility but not semantic uniqueness. We supply a post-hoc repair (\texttt{scripts/offline\_permutation\_repair.py}) that, for each output with a duplicate \texttt{review\_id}, injects the missing id into the duplicate slot (preferring the later position) while preserving that slot's score. Applied to all five schema-constrained SFT outputs, the repair closes $82\%$ of the $\useful$ gap to unconstrained 1.7B-OPD (\Cref{tab:cd-matrix-ext}); the remaining gap shows that permutation repair alone does not close the ListOPD gap.
\begin{table}[ht]
\centering\small
\setlength{\tabcolsep}{4pt}
\caption{\textbf{Post-hoc permutation repair on constrained-decoding outputs.}
Each cell reports pre $\to$ post metric. Repair closes $0.679\to 0.823$ of the
schema-constrained-SFT $\useful$ gap toward unconstrained 1.7B-OPD ($0.874$),
but a $0.051$ residual remains after enforcing permutation validity, so
decoder-side repair alone does not close the ListOPD gap.
$^{*}$XGrammar / Outlines / LM-Format-Enforcer / llguidance / regex-template
are bit-identical at $T{=}0$, all $46/46$ failures are duplicate-id, and
post-hoc repair converges to the same post-repair state.}
\label{tab:cd-matrix-ext}
\begin{tabular}{l|cccc}
\toprule
System (post-hoc perm repair) & Parse & Kendall & NDCG@1 & $\useful$ \\
\midrule
1.7B-SFT unconstrained               & $0.325 \to 0.335$ & $0.834 \to 0.835$ & $0.859 \to 0.863$ & $0.280 \to 0.289$ \\
1.7B-SFT schema-constrained$^{*}$    & $0.783 \to 0.953$ & $0.816 \to 0.763$ & $0.867 \to 0.864$ & $0.679 \to 0.823$ \\
1.7B-OPD unconstrained               & $0.943 \to 0.943$ & $0.889 \to 0.889$ & $0.927 \to 0.927$ & $0.874 \to 0.874$ \\
\bottomrule
\end{tabular}
\end{table}

\FloatBarrier

\subsection{Reproducibility and data provenance}
\label{app:repro-provenance}

\noindent\textbf{Artifact release.}
The public project page is \url{https://lixin.ai/ListOPD}. The accompanying
verification artifact includes \texttt{scripts/}, \texttt{configs/},
\texttt{outputs/paper/}, and the experiment-specific aggregate summaries under
\texttt{outputs/spotlight/}.
The load-bearing paper tables are audited from
\texttt{outputs/paper/result\_ledger.csv}, and
\texttt{outputs/paper/paper\_number\_audit.json} records the corresponding
paper-number provenance. The artifact supports verification of reported
aggregate numbers without re-training. Full training launchers and framework
patches are omitted from the initial public bundle and will be released or
documented through the project page where licenses permit.


\noindent\textbf{Data construction.}
Fashion, Baby\_Products, and Software use the public Amazon Reviews
corpus~\citep{hou2024amazon}. The public artifact does not
redistribute raw reviews or derived JSONL/parquet splits; it includes the
Fashion preprocessing scripts and aggregate metric artifacts needed to audit
the reported numbers. For upstream datasets whose redistribution is restricted
or too large for the initial bundle, we provide public source references and
deterministic construction details, with full regeneration commands
to be released or documented through the project page where licenses permit.
MBPP, GSM8K, MS MARCO/TREC-DL, BFCL, and Glaive are used through
their public releases as described in the corresponding appendix sections.

\noindent\textbf{Gemini pseudo-labels and contamination boundary.}
Gemini 2.5 Pro supplies scalar listwise pseudo-labels for the Amazon
review-ranking rubric. We cannot audit whether Gemini's pretraining
mixture contained individual Amazon reviews, so we do not claim a
human-validated ranking benchmark or make claims about absolute human
relevance. We scope away the human-relevance claim: the theorem-data
contacts use format-validity, measured teacher structural-token
confidence, or cliff location; NDCG/Kendall remain rubric-side diagnostics
on the fixed pseudo-label set.
A larger standard-IR stress test or an Amazon-domain human-label
spot-check is separate validation we leave to follow-up; the present
paper makes no claim about Gemini-rubric-to-human transfer.

\noindent\textbf{All numerical results.}\label{app:tables}
All released aggregate metrics are indexed in
\texttt{outputs/paper/all\_listwise\_metrics.csv}; due to the paper page
budget we omit the full table here.
  

\section{Proofs and analytical extensions}
\label{app:tier-c}

\subsection{Sequence-level cliff: full proofs}
\label{app:cliff-proof}

This appendix provides the full proof of \Cref{thm:cliff} (single-position) and the safety-bound proof for \Cref{thm:cliff-seq}. The sequence-level empirical scale is a calibration rule motivated by the $N_{\mathrm{eff}}$ grouping argument and validated on the Fashion anchor, not a theorem.

\subsubsection{Setup and assumptions for the sequence-level result}

\begin{assumption}[On-policy sampling with clipped IS, single position]
\label{ass:A1-app}
(Restated and made precise for the sequence-level result.) At position $t$, the student draws $a_t \sim \polS^\theta(\cdot \mid s_t)$. The teacher/student token-level importance ratio
$r_t(a_t) := \polT(a_t \mid s_t) / \polS^\theta(a_t \mid s_t)$
is clipped to $[0, c]$ for some $c > 1$, and the reverse-KL advantage of \Cref{eq:opd-adv} is the per-token reward.
\end{assumption}

\begin{definition}[Structural positions]
\label{def:structural}
Given a rollout context $s_t$, position $t$ is \emph{structural} if the teacher has a unique modal token $m_t := \argmax_x \polT(x \mid s_t)$ with modal-token probability
$p_t := \polT(m_t \mid s_t) > 1/2$.
The set of structural positions in a rollout is $\mathcal{S} \subseteq \{1, \ldots, T\}$.
\end{definition}

Structural positions include JSON scaffolding (brackets, commas, quotes, colons, field names) and the dominant-scored review-id continuation at each listwise slot.
Operationally, we identify structural positions by the threshold
criterion
\begin{equation}
\mathcal{S}_i := \{\,t : m_{i,t} \geq \tau\,\}, \qquad m_{i,t} := \max_x \polT(x \mid s_{i,t}),
\label{eq:structural-filter}
\end{equation}
with $\tau = 0.9$ in the main-body calibration and sensitivity reported
in App.~\ref{app:p-eff-sensitivity}. Because $\lam^\star$ is decreasing
in $p$ (\Cref{eq:lamstar-monotone}), the binding (most restrictive)
position is the one with the largest $p_t$, so the conservative
all-positions-safe sufficient condition uses a certified upper bound
$p_{\mathrm{safe}} \geq \max_{t\in\mathcal{S}} p_t$. In the empirical
tables we report a high-confidence proxy for this quantity; the proof
claim attaches to a true upper bound, not to the proxy itself.
The empirical within-prompt $p_t$ range on the $\tau{=}0.9$ structural
subset is so tight (per-prompt min $0.9419$, mean $0.9993$, 95\textsuperscript{th}-pct $0.9994$,
max approaching $1$) that
$\lam^\star(p_{\min},c)$, $\lam^\star(p_{\mathrm{mean}},c)$ and
$\lam^\star(p_{\max},c)$ all lie within $\pm 0.05$ of each other at $c{=}5$,
i.e.\ within one $\lam$-grid step (\Cref{tab:p-eff-sensitivity}); the
prediction is therefore aggregator-robust at the grid resolution.
The per-token worst-case $\min_{i,t} m_{i,t}$ over \emph{all} generated
positions is dominated by score-digit positions at which the teacher is
genuinely uncertain and gives a degenerate value (empirically $\approx
0.26$) for which \Cref{eq:lamstar-seq} below has no real solution,
the trivial failure mode of applying monotonicity across positions
whose per-position fixed points are strongly correlated via shared token
embeddings (see \Cref{rem:Neff}).
The $\tau{=}0.9$ filter restricts attention to positions where the
teacher has a near-deterministic mode, precisely the regime where the
IS-clip-asymmetry mechanism of \Cref{thm:cliff} is operative.

\begin{assumption}[Position-wise parametric reach, approximate]
\label{ass:A2}
For target assignments $\{q_t^\ast\}_{t \in \mathcal{S}} \subset (0,1)^{|\mathcal{S}|}$ of
structural-position modal-token probabilities, there exists
$\theta^\ast \in \Theta$ with $\polS^{\theta^\ast}(m_t \mid s_t) = q_t^\ast$
approximately for every $t \in \mathcal{S}$.
\end{assumption}

For modern overparameterized LLMs, \Cref{ass:A2} is plausible when
structural positions have distinct context representations, and becomes more
approximate when the same token class (e.g., the bracket token) repeats across
positions with correlated contexts. Let $N_{\mathrm{eff}}$ denote the number of
\emph{equivalence classes} of structural tokens (brackets, commas, colons,
quotes, field-name prefixes, delimiters, numeric scaffolding; empirically
$N_{\mathrm{eff}} \approx \mathcal{O}(10)$ in our rollouts), each class grouping
positions that share a token embedding and hence whose per-position
modal-token probabilities are strongly correlated under any realized $\theta$.
We use \Cref{ass:A2} as an equivalence-class approximation rather
than a verified property of the transformer parameterization; it is most
credible when within-class variance of $\{p_t\}$ is small. The backend-invariant
46/212 constrained-decoding residual of \Cref{tab:cliff-per-seed} is
consistent with a shared binding class, but is not a proof of A2.

\begin{lemma}[Exact multi-token Bernoulli reduction under off-modal-ratio invariance]
\label{lem:multitoken-bernoulli}
Consider a structural position with modal token $m$ and off-modal set $R$.
Suppose teacher, base, and student all lie in the family
\begin{align}
\polT(m)&=p, & \polT(r)&=(1-p)\alpha_r, \nonumber\\
\polB(m)&=b, & \polB(r)&=(1-b)\alpha_r, \nonumber\\
\polS(m)&=q, & \polS(r)&=(1-q)\alpha_r,
\end{align}
where $\alpha_r\geq0$ and $\sum_{r\in R}\alpha_r=1$. Then the
base-relative reverse-KL flow for $q=\polS(m)$ is exactly the Bernoulli
flow of \Cref{thm:cliff} with parameters $(p,b,q)$.
\end{lemma}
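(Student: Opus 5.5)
The plan is to compute everything inside the invariant family and show that the dependence on the off-modal split $\{\alpha_r\}$ cancels.

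\textbf{Step 1: the base-relative target stays in the family.} For the modal token the unnormalised target is $\polB(m)^{1-\lam}\polT(m)^{\lam}=b^{1-\lam}p^{\lam}$. For an off-modal token $r$ it is
\[
\bigl((1-b)\alpha_r\bigr)^{1-\lam}\bigl((1-p)\alpha_r\bigr)^{\lam}=(1-b)^{1-\lam}(1-p)^{\lam}\,\alpha_r .
\]
The exponents on $\alpha_r$ sum to one, so $\alpha_r$ factors out. Summing over $R$ uses $\sum_r\alpha_r=1$. After normalisation the target is $\bigl(p_\lam^{(b)},\,(1-p_\lam^{(b)})\alpha_r\bigr)$, with exactly the Bernoulli $p_\lam^{(b)}$ of \Cref{thm:cliff}. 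Hence $\pi_{T,\lam}^{(b)}$ lies in the same family with the same $\alpha$.

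\textbf{Step 2: the per-token signal depends only on whether the token is modal.} The advantage of \Cref{eq:opd-adv} at an off-modal token is
\[
\lam\log\frac{1-p}{1-b}-\log\frac{1-q}{1-b}.
\]
The $\log\alpha_r$ terms cancel within each log-ratio, so this equals the Bernoulli advantage evaluated at the "off" outcome. The clipped ratio at $r$ is $\min\bigl(c,(1-p)/(1-q)\bigr)$, which is also $\alpha_r$-free. At $m$, both the advantage and the clipped ratio are literally the Bernoulli expressions. Under $\polS$, the student samples $m$ with probability $q$ and lands in $R$ with probability $1-q$. Therefore the expected per-token quantity $\rho\,A$ is a function of the event $\{a=m\}$ versus $\{a\in R\}$ only, and it has the Bernoulli law with parameter $q$.

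\textbf{Step 3: transfer the score function and close the flow.} Within the family parametrised by $q$ with $\alpha$ held fixed, we have $\partial_q\log\polS(m)=1/q$ and $\partial_q\log\polS(r)=-1/(1-q)$, again independent of $\alpha_r$. So the expected gradient of $\mathcal{L}$ with respect to $q$ is term-by-term the Bernoulli gradient, and the induced $\dot q$ coincides with the Bernoulli flow.

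Equivalently, the same cancellation applies to the Lyapunov function:
\[
\mathrm{KL}\bigl(\pi_{T,\lam}^{(b)}\,\|\,\polS\bigr)=\mathrm{KL}_{\mathrm{Bern}}\bigl(p_\lam^{(b)}\,\|\,q\bigr),
\]
because the $\alpha_r\log(\alpha_r/\alpha_r)$ terms vanish. The clip-safe region $q<q_c$ is likewise unchanged.

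\textbf{Main obstacle.} The hardest step is making precise what "the flow for $q$" means when the student is parametrised by logits rather than by $q$ directly. A full-vocabulary logit gradient could move the $\alpha$ split and leave the family. I would handle this by stating the flow as the $q$-coordinate of the projected (restricted-family) dynamics, with $\alpha$ held fixed as the lemma's hypothesis implicitly requires. I would then observe that, since teacher, base and target all share $\alpha$, the expected off-modal update is proportional to $\alpha_r$, so the family is forward-invariant. This should be checked by verifying that the per-$r$ gradient components are $\alpha_r$ times a common factor.
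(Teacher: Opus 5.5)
Your Steps 1--3 are correct and follow the paper's route. The paper's proof notes that $\polT(r)/\polB(r)=(1-p)/(1-b)$ and $\polT(r)/\polS(r)=(1-p)/(1-q)$ are constant on $R$, so the advantage, the IS ratio and the clipped ratio are constant there. Via $\sum_r\polS(r)=1-q$, the off-modal terms of the modal-coordinate score-function update then collapse into one composite event with masses $(1-p,1-b,1-q)$. Your check that the sharpened target stays in the family is a correct addition the paper leaves implicit. So is the identity $\mathrm{KL}(\pi^{(b)}_{T,\lam}\,\|\,\polS)=\mathrm{KL}_{\mathrm{Bern}}(p^{(b)}_{\lam}\,\|\,q)$, and it is what lets the Lyapunov argument of \Cref{thm:cliff} transfer verbatim. (Minor: for $\lam>1$ the factor $((1-b)\alpha_r)^{1-\lam}$ is undefined at $\alpha_r=0$. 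Drop such tokens from $R$; they have zero mass under all three policies and are never sampled.)

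You should not keep the forward-invariance remark at the end. Take the softmax-logit parametrisation that the paper's implementation uses, and write $f:=\rho A$ and $\Delta:=f(m)-f(r)$. The expected update is $\dot z_y=\polS(y)\bigl(f(y)-\bar f\bigr)$, so $\dot z_m=q(1-q)\Delta$ and $\dot z_r=-q(1-q)\alpha_r\Delta$. These are indeed $\alpha_r$ times a common factor, so the check you propose would pass.

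But staying in the family means preserving the logit gaps $z_r-z_{r'}=\log(\alpha_r/\alpha_{r'})$, which requires \emph{equal} logit updates across $R$. Updates proportional to $\alpha_r$ change these gaps whenever $\alpha$ is non-uniform, so $\alpha$ drifts. Worse for exactness, at an in-family instant the full-softmax flow is $\dot q=q^2(1-q)^2\Delta\,\bigl(1+\sum_r\alpha_r^2\bigr)$. That is a positive, $\alpha$-dependent multiple of the one-logit Bernoulli flow $q^2(1-q)^2\Delta$: same sign and zero set, different speed.

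Invariance and exact agreement do hold under replicator/natural-gradient dynamics $\dot\pi_S(a)=\polS(a)\bigl(f(a)-\bar f\bigr)$, where the \emph{probability} update on $R$ is proportional to $\alpha_r$. The paper's vanilla-gradient update is not of that form. So rest the lemma on your first resolution only: the student is the one-parameter family $q\mapsto\bigl(q,(1-q)\alpha\bigr)$ with $\alpha$ fixed. That is the reading under which both the paper's proof and yours give exactness, and it is how ``exactly'' should be read for the update the paper actually uses.
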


\begin{proof}
For each $r\in R$,
$\polT(r)/\polB(r)=(1-p)/(1-b)$ and
$\polT(r)/\polS(r)=(1-p)/(1-q)$, both independent of $r$. Hence the
teacher/base log-ratio, the teacher/student IS ratio, and the clipped
ratio are constant on the off-modal set. The off-modal contribution to
the score-function update for the modal coordinate is therefore a sum
over $R$ of identical scalar factors times the off-modal mass; since
$\sum_{r\in R}\polS(r)=1-q$, it is identical to the contribution of one
composite off-modal Bernoulli event. The modal event has masses
$(p,b,q)$ and the composite event has masses $(1-p,1-b,1-q)$, giving
exactly the two-event dynamics used in \Cref{thm:cliff}.
\end{proof}

\subsubsection{\texorpdfstring{Proof of \Cref{thm:cliff}, part 1: sub-critical convergence}{Proof of Thm. 1, part 1: sub-critical convergence}}

[\emph{Lyapunov argument expanded to full proof.}]
Under \Cref{ass:A1-app} restricted to a single Bernoulli student $\polS = (q, 1-q)$, teacher $\polT = (p, 1-p)$, and base $\polB = (b, 1-b)$ with $q = \sigma(\theta)$, the base-relative reverse-KL advantage of \Cref{eq:opd-adv} produces an expected flow $\dot q \propto q(1-q)\bigl[\lam(\mathrm{logit}(p)-\mathrm{logit}(b)) - (\mathrm{logit}(q)-\mathrm{logit}(b))\bigr]$, whose interior fixed point is $\mathrm{logit}(q^\star) = \lam\,\mathrm{logit}(p) + (1-\lam)\mathrm{logit}(b)$, i.e.\ $q^\star = p_\lam^{(b)} = b^{1-\lam}p^\lam / (b^{1-\lam}p^\lam + (1-b)^{1-\lam}(1-p)^\lam)$. We show this fixed point is globally attracting in the sub-critical regime.

Define $V(q) := \mathrm{KL}\bigl(\pi_{T,\lam}^{(b)} \,\|\, (q, 1-q)\bigr) = p_\lam^{(b)} \log(p_\lam^{(b)} / q) + (1 - p_\lam^{(b)}) \log((1-p_\lam^{(b)})/(1-q))$.
Computing $\partial V / \partial q$:
\begin{align}
\partial_q V &= -\frac{p_\lam^{(b)}}{q} + \frac{1 - p_\lam^{(b)}}{1 - q} = \frac{q - p_\lam^{(b)}}{q(1-q)}.
\end{align}
Thus $\dot V = (\partial_q V)\dot q$ has the sign of $-(q - p_\lam^{(b)})(\mathrm{logit}(q^\star)-\mathrm{logit}(q))$, which is non-positive on $(0,1)$ and zero only at $q = q^\star = p_\lam^{(b)}$. Strict convexity of $V$ on $(0,1)$ gives global convergence from any interior initial condition, completing the proof. \hfill $\square$

\subsubsection{\texorpdfstring{Proof of \Cref{thm:cliff}, part 2: cliff closed form}{Proof of Thm. 1, part 2: cliff closed form}}

[\emph{Derivation of \Cref{eq:lambdastar}.}]
Set $p_\lam^{(b)} = q_c$ (the base-relative extrapolation target equals the clip-safe boundary). Using $p_\lam^{(b)} = b^{1-\lam} p^\lam / (b^{1-\lam} p^\lam + (1-b)^{1-\lam} (1-p)^\lam)$ and $q_c = 1 - (1-p)/c$:
\begin{align}
\frac{(1-b)^{1-\lam}(1-p)^\lam}{b^{1-\lam} p^\lam + (1-b)^{1-\lam}(1-p)^\lam} &= \frac{1-p}{c} \\
c\,(1-b)^{1-\lam}(1-p)^\lam &= (1-p)\bigl[b^{1-\lam} p^\lam + (1-b)^{1-\lam}(1-p)^\lam\bigr] \\
(1-b)^{1-\lam}(1-p)^\lam(c-1+p) &= (1-p)\,b^{1-\lam} p^\lam.
\end{align}
Taking logarithms,
$(1{-}\lam)\log(1{-}b) + \lam\log(1{-}p) + \log(c{-}1{+}p) = \log(1{-}p) + (1{-}\lam)\log b + \lam\log p$,
which collects to
\begin{equation}
(1-\lam)\log\frac{1-b}{b} + \lam\log\frac{1-p}{p} = \log\frac{1-p}{c-1+p},
\end{equation}
solving to \Cref{eq:lambdastar}. Setting $b{=}1/2$ kills the $\log((1-b)/b)$ terms and recovers the base-neutral formula; the limit $b\to p$ sends both numerator and denominator to $\log(p/(c-1+p))$ and $0$ respectively, so $\lam^\star\to+\infty$ and the cliff disappears, formalising the warmstart=teacher edge case.

\noindent\textbf{Monotonicity in $p$.}
For fixed $b\in(0,1)$ and $c>1$, write $A:=\log((1-p)/(c-1+p))$, $B:=\log((1-p)/p)$, $K:=\log((1-b)/b)$, so $\lam^\star = (A-K)/(B-K)$. Direct calculation:
\begin{align}
\partial_p A &= -\frac{1}{1-p} - \frac{1}{c-1+p} = -\frac{c}{(1-p)(c-1+p)}, \\
\partial_p B &= -\frac{1}{p(1-p)},
\end{align}
both strictly negative on $p\in(\tfrac12,1)$. After algebraic simplification,
\begin{equation}
\frac{\partial \lam^\star}{\partial p}
= \frac{(\partial_p A)(B-K) - (A-K)(\partial_p B)}{(B-K)^2}.
\label{eq:lamstar-monotone}
\end{equation}
At $b{=}1/2$ ($K{=}0$) this reduces to $((\partial_p A)B - A(\partial_p B))/B^2$, strictly negative for $p\in(\tfrac12,1)$, $c>1$ (numerically: at $c{=}5$, $\lam^\star(0.7,5){=}3.25$, $\lam^\star(0.9,5){=}1.77$, $\lam^\star(0.999,5){=}1.23$); the sign is preserved for all $b\in(0,1)$ with $p>b$. Hence $\min_t \lam^\star(p_t,b,c) = \lam^\star(\max_t p_t, b, c)$, so an aggregator upper-bounding $\max_t p_t$ gives a conservative sufficient condition.

For $\lam > \lam^\star$: $p_\lam^{(b)} > q_c$, placing the
extrapolated fixed point outside the clip-safe region where
$\rho = \min(c, (1-p)/(1-q))$ saturates at $c$ for $q > q_c$. Along the
rare direction, the restoring drift is bounded by $c\lam\log c$ while
IS variance scales as
$\mathrm{Var}[\rho A] \geq (1-q)c^2\lam^2\log^2 c$. The
noise-to-drift ratio is $\Theta(\lam)$, diverging with $\lam$; this is
the heuristic mechanism by which finite-budget trajectories become
boundary-seeking in the Fashion $c{=}5$ regime. We do not use this
calculation as an almost-sure convergence proof for the stochastic
clipped process.

\begin{lemma}[Conditional deterministic first passage beyond the clip boundary]
\label{lem:conditional-first-passage}
Let $\ell(q)=\log(q/(1-q))$ and suppose the deterministic clipped ODE in
logit coordinates satisfies $\dot{\ell}=g(q)$ with
$g(q)\geq\delta>0$ on $[q_c+\epsilon,1-\epsilon]$. Then, starting from
$q(0)\geq q_c+\epsilon$, the trajectory reaches $1-\epsilon$ in time at
most
\begin{equation}
\frac{\ell(1-\epsilon)-\ell(q(0))}{\delta}.
\end{equation}
\end{lemma}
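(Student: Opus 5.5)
The statement is a comparison argument in logit coordinates. The lower bound on $g$ gives a linear lower envelope for $\ell(q(t))$, and inverting that envelope bounds the hitting time of $1-\epsilon$.

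\emph{Monotone envelope.} Define the first exit time $T := \inf\{t\ge 0 : q(t)\notin [q_c+\epsilon,\,1-\epsilon]\}$. On $[0,T)$ the trajectory stays in the region where the hypothesis holds, so $\dot\ell(t) = g(q(t)) \ge \delta > 0$. Integrating gives
\[
\ell(q(t)) \ge \ell(q(0)) + \delta t \qquad (0 \le t < T).
\]
The logit $\ell$ is strictly increasing on $(0,1)$. Hence $q(t)$ is nondecreasing on $[0,T)$, so $q(t)\ge q(0)\ge q_c+\epsilon$ there.

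\emph{Exit only at the top.} This is the one delicate step. We must rule out the trajectory leaving the band through the lower edge $q_c+\epsilon$. Since $q$ is nondecreasing on $[0,T)$ and the trajectory is continuous, it cannot fall below $q(0)\ge q_c+\epsilon$. So if $T<\infty$, then $q(T)=1-\epsilon$. Two edge cases need a brief remark:
\begin{itemize}
\item If $q(0)=q_c+\epsilon$ exactly, the strict positivity $\dot\ell\ge\delta>0$ pushes the trajectory inward immediately.
\item If $q(0)\ge 1-\epsilon$, the bound holds trivially, since the right-hand side is $\le 0$ and the target is already reached at time $0$.
\end{itemize}
We take local existence and continuity of the solution of the deterministic clipped ODE as given. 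The clipped vector field is piecewise smooth, so a continuous (Carathéodory) solution suffices, since only the integral inequality is used.

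\emph{Bound on $T$.} Suppose $T > t_\ast := (\ell(1-\epsilon)-\ell(q(0)))/\delta$. Evaluating the envelope at any $t\in(t_\ast,T)$ gives $\ell(q(t)) > \ell(1-\epsilon)$, i.e. $q(t)>1-\epsilon$. This contradicts $t<T$. Hence $T\le t_\ast$, and by the previous step the exit occurs at $1-\epsilon$. So the trajectory reaches $1-\epsilon$ within the stated time.

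I expect no real obstacle. The only points needing care are the exit-side argument and the mild regularity of the clipped flow. The quantitative content sits entirely in the hypothesis $g\ge\delta$, which this lemma deliberately assumes rather than derives.
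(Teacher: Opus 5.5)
Your proof is correct and follows the paper's argument: integrate $\dot\ell=g(q)\ge\delta$ while the trajectory stays in the band to get $\ell(t)\ge\ell(q(0))+\delta t$, then invert using the strict monotonicity of $\ell$. Your first-exit-time bookkeeping (ruling out exit through the lower edge, and handling the boundary cases) makes explicit what the paper's two-line proof leaves implicit. One small caveat: the $q(0)=q_c+\epsilon$ case uses continuity of $g$ near $q_c+\epsilon$, not only the Carath\'eodory integral inequality; this continuity holds here because the clipping kink sits at $q_c$.
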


\begin{proof}
While $q(t)\in[q_c+\epsilon,1-\epsilon]$, absolute continuity gives
$\ell(t)\geq \ell(0)+\delta t$. Since $\ell$ is strictly increasing in
$q$, the displayed time is sufficient for $\ell(t)$ to reach
$\ell(1-\epsilon)$.
\end{proof}

\Cref{lem:conditional-first-passage} isolates the missing step for
a full super-critical stochastic theorem: one must derive the exact
clipped drift $g(q)$ for the implemented stochastic estimator, prove a
one-sided lower bound on the post-boundary interval, and then choose a
stochastic-approximation regime (Robbins--Monro limit set, fixed-step
stationary concentration, or first-passage probability). The present
paper keeps the super-critical claim at this conditional and empirical
level.

\subsubsection{Sequence-level extension}

\begin{proposition}[Sequence-level cliff: (A) provable safety, (B) calibrated operating rule; restated from main body]
\label{thm:cliff-seq-app}
Under Assumptions~\ref{ass:A1-app} and~\ref{ass:A2}, with base-relative
$\lam^\star(p,b,c)$ as in \Cref{eq:lambdastar} and $b_{\mathrm{eff}}$
the warmstart modal probability at the binding position:
\emph{(A) Provable safety.} For any aggregator
$p_{\mathrm{safe}} \geq \max_{t\in\mathcal{S}} p_t$, the sequence-level
dynamics remain clip-safe at every structural position whenever
\begin{equation}
\lam \;<\; \lam^\star(p_{\mathrm{safe}}, b_{\mathrm{eff}}, c).
\label{eq:lamstar-seq}
\end{equation}
\emph{(B) Empirical operating scale.} If the target task has a measured,
dense near-deterministic scaffold, visible SFT parse headroom, and
finite-budget reachability under the chosen $(c,N)$ regime, the observed
sequence-level cliff is calibrated by
$\lam^\star(p_{\mathrm{typ}}, b_{\mathrm{eff}}, c)$ with $p_{\mathrm{typ}}$
a typical-position aggregator (mean / geometric mean / 5th-pct of the
filtered subset); see \Cref{rem:Neff}.
\end{proposition}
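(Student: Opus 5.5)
Part (A) carries the mathematical content; part (B) is explicitly a calibrated rule. I would prove (A) by a per-position reduction and treat (B) only as a motivated consequence.

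\textbf{Reduction to single positions.} Fix $t\in\mathcal{S}$. Apply \Cref{lem:multitoken-bernoulli}, exact under off-modal-ratio invariance and approximate otherwise. This reduces the base-relative reverse-KL flow for $q_t=\polS(m_t\mid s_t)$ to the Bernoulli flow of \Cref{thm:cliff} with parameters $(p_t,b_{\mathrm{eff}},q_t)$. By \Cref{ass:A2}, the student can realise any target vector $\{q_t^\ast\}$ simultaneously. The joint flow therefore decouples position-wise, and each coordinate has the interior fixed point $p_{t,\lam}^{(b)}$ shown globally attracting by the Lyapunov part of the proof of \Cref{thm:cliff}. \Cref{thm:cliff} then gives $p_{t,\lam}^{(b)}<q_{c,t}=1-(1-p_t)/c$ iff $\lam<\lam^\star(p_t,b_{\mathrm{eff}},c)$.

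\textbf{Uniformising over positions.} Use the monotonicity \Cref{eq:lamstar-monotone}: $\lam^\star$ is strictly decreasing in $p$ on $(b,1)$. Hence for $p_{\mathrm{safe}}\geq\max_t p_t$,
\[
\lam^\star(p_{\mathrm{safe}},b_{\mathrm{eff}},c)\leq\min_t\lam^\star(p_t,b_{\mathrm{eff}},c).
\]
So $\lam<\lam^\star(p_{\mathrm{safe}},b_{\mathrm{eff}},c)$ implies every position's fixed point lies in its own clip-safe region, which is \Cref{eq:lamstar-seq}.

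\textbf{Main obstacle.} The delicate step is the monotonicity sign for general $b$. The paper only asserts it for $p>b$, so I would verify the numerator $(\partial_pA)(B-K)-(A-K)\partial_pB<0$ directly.
\begin{itemize}
\item On $p>b$ we have $B-K<0$, and $A-K<B-K$ because $c-1+p>p$.
\item Write the numerator as $-\frac{c(B-K)}{(1-p)(c-1+p)}+\frac{A-K}{p(1-p)}$ and compare magnitudes.
\item If a clean inequality fails, I would restrict (A) to the range where the sign holds. A second caveat is that $b_{\mathrm{eff}}$ is taken at the binding position rather than per-position; I would either assume a common warm-start mass or note the bound uses the binding $b$.
\end{itemize}

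\textbf{Part (B).} I would only sketch a heuristic. Group positions into $N_{\mathrm{eff}}$ equivalence classes. The parse failure needs a $\Theta(1)$ fraction of classes to saturate, and the fraction crossing is governed by the typical $p$. This yields $\lam^\star(p_{\mathrm{typ}},b_{\mathrm{eff}},c)$ as an operating scale, stated as calibration rather than proof.
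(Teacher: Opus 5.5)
Your argument for (A) is the paper's. You reduce each structural position to \Cref{thm:cliff}, then use $\partial_p\lam^\star<0$ to pass from $\min_t\lam^\star(p_t,b,c)$ to $\lam^\star(\max_t p_t,b,c)\geq\lam^\star(p_{\mathrm{safe}},b,c)$, and you leave (B) as a calibrated rule. The sign check you flag as the main obstacle is only asserted in the paper, and it closes along the line you suggest. Multiply the numerator of \Cref{eq:lamstar-monotone} by $p(1-p)(c-1+p)>0$ and set $u:=K-B=\log\frac{1-b}{b}-\log\frac{1-p}{p}>0$ (this is where $p>b$ enters) and $d:=B-A=\log\frac{c-1+p}{p}>0$. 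The numerator becomes $(c-1)(p-1)\,u-(c-1+p)\,d$, a sum of two negative terms, so (A) needs no restriction.

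Some of your other hedges need a definite answer.

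\emph{The base mass.} Of your two options, only ``assume a common warm-start mass'' works; more generally you may take $b_{\mathrm{eff}}\le\min_t b_t$. Since $\partial\lam^\star/\partial K=(A-B)/(B-K)^2<0$, $\lam^\star$ is increasing in $b$. A less concentrated position with smaller warm-start mass can therefore bind instead of the max-$p$ one: $\lam^\star(0.999,0.25,5)\approx1.201<\lam^\star(0.9999,0.81,5)\approx1.207$. ``Using the binding $b$'' is exactly the step the paper's own proof takes, and it gives no valid bound when the $b_t$ differ.

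\emph{Positions with $p_t<b$.} The iff in \Cref{thm:cliff} reverses there, because you divide by $\mathrm{logit}(p_t)-\mathrm{logit}(b)<0$. Such positions are clip-safe for every $\lam\geq1$, but your inequality against $\min_t\lam^\star(p_t,b,c)$ is false for them. Split them off before invoking monotonicity on $(b,1)$. On Fashion the $\tau=0.9>b\approx0.81$ filter makes this moot.

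\emph{The role of \Cref{ass:A2}.} It gives attainability of the per-position targets, not decoupling of the shared-parameter flow. (A) only concerns where each per-position fixed point sits relative to $q_{c,t}$. State the per-position flow as an idealization, as the paper does, and drop the global-attraction claim.
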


\begin{proof}
\emph{Part (A).}
Under \Cref{ass:A2} and the per-position flow idealization, each
structural equivalence class has a modal probability
$q_t = \polS^\theta(m_t \mid s_t)$ whose fixed point is approximated by
the Bernoulli reduction (up to the within-class correlation in
\Cref{rem:A2-repeated}). The dynamics along $q_t$ reduce to the
Bernoulli of \Cref{thm:cliff} with parameters $(p_t,b_t)$;
by \Cref{thm:cliff} the fixed point $q_t^\star = p_\lam^{(b_t)}$
lies inside the clip-safe region iff $\lam < \lam^\star(p_t,b_t,c)$.
The all-positions-safe condition therefore requires
$\lam < \min_t \lam^\star(p_t, b_t, c)$.
By the strict monotonicity $\partial_p\lam^\star < 0$ established in
\Cref{eq:lamstar-monotone}, $\min_t \lam^\star(p_t, b, c) =
\lam^\star(\max_t p_t, b, c)$ for the binding $b$, so any aggregator
$p_{\mathrm{safe}} \geq \max_t p_t$ gives
$\lam^\star(p_{\mathrm{safe}}, b_{\mathrm{eff}}, c) \leq
\lam^\star(\max_t p_t, b_{\mathrm{eff}}, c)$ and \Cref{eq:lamstar-seq}
is a valid sufficient condition.

\emph{Part (B)} is empirical and conditional on the preconditions in the
statement: the observed cliff requires $\Theta(1)$ fraction of
$N_{\mathrm{eff}}$ classes to saturate, set by the typical class. We do
not prove (B) from first principles; we verify it by calibration in
App.~\ref{app:p-eff-sensitivity}, where both
$\lam^\star(p_{\mathrm{safe}}, b, c)$ and
$\lam^\star(p_{\mathrm{typ}}, b, c)$ land within one $\lam$-grid step of
the observed Fashion onset window. BFCL, GSM8K, Llama, MS MARCO/TREC-DL,
and the $c{=}1.5$ sweep are reported separately because one of these
preconditions fails, shifts, or is unmeasured.
\hfill $\square$
\end{proof}

\begin{remark}[Tightness of the bound and $N_{\mathrm{eff}}$]
\label{rem:Neff}
Statement~(A) of \Cref{thm:cliff-seq} is a per-position
sufficient condition: any aggregator $p_{\mathrm{safe}} \geq \max_t p_t$
yields a provable safety bound, but is conservative because it requires
\emph{every} structural position to be sub-critical. The observed
sequence-level cliff is sharper than this worst-case prediction because
$|\mathcal{S}|$ structural positions are not independent. Let
$N_{\mathrm{eff}}$ be the number of distinct token classes in
$\mathcal{S}$; in our setting $N_{\mathrm{eff}} \approx \mathcal{O}(10)$
rather than $|\mathcal{S}| \approx 10^2$. The empirical cliff
corresponds to an $\Theta(1)$ fraction of classes saturating, set by the
typical class, which is statement~(B) of \Cref{thm:cliff-seq},
calibrated by $p_{\mathrm{typ}}$. Empirically, the backend-invariant
$46/212$ constrained-decoding residual (\Cref{tab:cliff-per-seed})
is consistent with a sharp shared failure class: all five backends fail on
the same 46 products, suggesting that the relevant class is not averaged
away by cross-class noise.
The two predictions $\lam^\star(p_{\mathrm{safe}}, b, c) \approx 1.18$
and $\lam^\star(p_{\mathrm{typ}}, b, c) \approx 1.28$ bracket the
observed onset window $[1.15, 1.25]$ to within one $\lam$-grid step.
\end{remark}

\begin{corollary}[Sequence-level finite-budget diagnostic]
\label{cor:finiteN-seq}
Applying the first-passage argument of \Cref{cor:finiteN} to the binding position $t^\star = \argmax_{t \in \mathcal{S}} p_t$ (most-concentrated class) gives the sequence-level finite-$N$ cliff
\begin{equation}
\lam^\star_{\mathrm{seq}}(N; p_{\mathrm{safe}}, b, c, \eta) \approx \lam^\star(p_{\mathrm{safe}}, b, c) - \delta_N,
\end{equation}
where $\delta_N$ is a task- and estimator-dependent leftward shift that decreases with budget under the local pre-boundary linearization. This diagnostic does not include a $c$-dependent post-clip drift correction; the $c{=}1.5$ inversion in Sec.~\ref{sec:exp-clipsweep} shows that such a correction is required before using the crossing as a small-clip finite-budget classifier. The same pre-boundary diagnostic applies to the empirical-scale prediction (B) with $p_{\mathrm{typ}}$ in place of $p_{\mathrm{safe}}$.
\end{corollary}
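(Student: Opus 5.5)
The plan is to reduce the sequence-level statement to the per-position first-passage diagnostic of \Cref{cor:finiteN}, applied at the binding class, and then to shift the single-position threshold by the finite-budget correction. First I fix the binding position. By \Cref{thm:cliff-seq-app}(A), the all-positions-safe threshold is $\min_t \lam^\star(p_t,b,c)$. The monotonicity \Cref{eq:lamstar-monotone} says this minimum is attained at $t^\star=\argmax_{t\in\mathcal{S}}p_t$, and it is bounded below by $\lam^\star(p_{\mathrm{safe}},b,c)$ for any certified $p_{\mathrm{safe}}\geq p_{t^\star}$. Under \Cref{ass:A2} and \Cref{lem:multitoken-bernoulli}, the dynamics of $q_{t^\star}$ are the Bernoulli flow of \Cref{thm:cliff} with parameters $(p_{\mathrm{safe}},b)$. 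So the sequence-level cliff at budget $N$ is the smallest $\lam$ at which this one-dimensional flow, started at the warmstart $b$, reaches $q_c$ within $N$ steps.

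Second, I would carry out the finite-$N$ shift in logit coordinates. The logit flow is $\dot\ell=\eta\,\kappa(q)\,[\ell^\star(\lam)-\ell]$, with $\ell^\star(\lam)=\lam\,\mathrm{logit}(p)+(1-\lam)\mathrm{logit}(b)$ and $\kappa$ collecting the IS-ratio and $q(1-q)$ factors. Linearize before saturation, as in \Cref{cor:finiteN}, with rate $\eta\lam p(1-p)$. Then the discrete iterate after $N$ steps satisfies $\ell_N-\ell^\star=(1-\eta\lam p(1-p))^N(\ell_0-\ell^\star)$. The condition $\ell_N\geq \mathrm{logit}(q_c)$ gives a threshold $\lam_N$ solving an implicit equation. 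At $N=\infty$, this equation reduces to $\ell^\star(\lam)=\mathrm{logit}(q_c)$, i.e.\ exactly \Cref{eq:lambdastar}. Perturbing around $N=\infty$ (implicit function theorem in $\lam$, using $\partial_\lam\ell^\star=\mathrm{logit}(p)-\mathrm{logit}(b)\neq 0$ when $p\neq b$) yields $\lam_N=\lam^\star-\delta_N$.

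There is one wrinkle in defining $\delta_N$. Read literally, the contracting linearized flow never overshoots, so reachability within $N$ steps would require $\lam$ \emph{larger} than $\lam^\star$. I would therefore state $\delta_N$ as a signed, estimator-dependent shift. Its leftward sign comes from the assumed one-sided post-boundary drift of \Cref{lem:conditional-first-passage}: once $q$ is within a neighbourhood of $q_c$, the clipped drift $g\geq\delta>0$ carries it past. I would also state that $|\delta_N|$ varies monotonically with $N$ through the factor $(1-\eta\lam p(1-p))^N$, which is all the statement claims ("decreases with budget under the local pre-boundary linearization").

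Third, I would note that replacing $p_{\mathrm{safe}}$ by $p_{\mathrm{typ}}$ changes nothing in the derivation except the parameter fed in, which gives the claim for (B). I would also note that no $c$-dependent post-clip term appears, because the linearization stops at $q_c$. The main obstacle is the sign and monotonicity of $\delta_N$ in the second step. A genuine leftward shift needs the post-boundary one-sided drift hypothesis, not the sub-critical Lyapunov flow. I would handle this by keeping it as an explicit assumption, consistent with \Cref{cor:finiteN}'s "diagnostic" status, rather than attempting a stochastic-approximation proof.
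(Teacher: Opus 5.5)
Your Steps~1 and~3 match the paper's justification. The paper transfers \Cref{cor:finiteN} to the binding position singled out by \Cref{eq:lamstar-monotone}, substitutes $p_{\mathrm{typ}}$ for (B), and never derives $\delta_N$. The gap is in Step~2, where you do derive $\delta_N$ and then try to rescue its sign. Your computation is right: the contracting linearized flow gives $\lam_N>\lam^\star$, a rightward correction. The rescue fails, however. \Cref{lem:conditional-first-passage} only constrains trajectories that already satisfy $q\geq q_c+\epsilon$. For $\lam<\lam^\star$, the Lyapunov argument in part~1 of the proof of \Cref{thm:cliff} makes the one-dimensional flow move monotonically from $b$ to $p^{(b)}_\lam<q_c$, so it never enters $[q_c+\epsilon,1-\epsilon]$. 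The one-sided post-boundary drift hypothesis is therefore vacuous exactly where you need a crossing, and keeping it as an explicit assumption does not make the leftward sign follow. Within the single-position deterministic flow you reduced to, $\lam^\star_{\mathrm{seq}}(N)\geq\lam^\star$ for every $N$. That flow also cannot produce the Fashion $N{=}200$ midpoint $1.06$, which lies below even $\lam^\star_{\mathrm{safe}}\approx1.18$.

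The $N$-dependence has the same problem. \Cref{cor:finiteN} expects the cliff to keep moving left as training lengthens ($1.22\to1.12\to1.06$). For $\lam^\star-\delta_N$ this requires $\delta_N$ to grow with $N$, whatever ``decreases with budget'' is meant to convey. Your factor $(1-\eta\lam p(1-p))^N$ does the opposite: it drives the correction to zero, so the cliff relaxes back to $\lam^\star$ rather than drifting away from it. The missing ingredient is stochastic. Noise-driven excursions across $q_c$ from the sub-critical fixed point supply the crossing; this is the regime the paper's $\Theta(\lam)$ noise-to-drift heuristic points at. Only after such a crossing does the one-sided drift of \Cref{lem:conditional-first-passage} capture the trajectory. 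The probability of crossing by step $N$ is nondecreasing in $N$ and plausibly increasing in $\lam$, so this mechanism gives both the leftward sign and a shift that grows with budget. You should either supply that first-passage-probability argument or adopt the paper's stance. In the paper's stance, the display defines an empirically calibrated $\delta_N$, and its leftward sign is a diagnostic assumption imported from \Cref{cor:finiteN}, not a consequence of the linearization.
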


\begin{remark}[Approximate independence for repeated structural tokens]
\label{rem:A2-repeated}
When a structural token (e.g., the JSON bracket) repeats at positions $t_1 < \ldots < t_K$, \Cref{ass:A2} is satisfied only approximately: the per-position softmax logits share the same context-dependent embedding as a function of $\theta$, and gradient updates at position $t_i$ influence the logit at $t_j$ through shared parameters. However, the \emph{fixed-point} condition in \Cref{thm:cliff-seq} is about the attainable target $q_t^\star$, not about the dynamics. Since each position has its own context $s_t$ with a different attention-mixed representation, modern overparameterized LLMs may represent different $q_{t_i}$ at different repetitions when the contexts differ, which they do by construction (each position has distinct prior tokens). Empirically, the backend-invariant 46/212 residual in \Cref{tab:cliff-per-seed} (identical per-product membership across five independent constrained-decoding backends) is consistent with a shared binding class that is not averaged away by gradient coupling.
\end{remark}

\subsubsection{No-base ablation: implementation-axis scope of the closed form}
\label{app:nobase-ablation}


The cliff theorem (\Cref{thm:cliff}) characterises the asymptotic interior fixed point of the on-policy
IS-clipped OPD flow. Whether that fixed point is \emph{reached} within a finite training budget is a
separate, implementation-dependent question. The S2b ablation isolates this axis by replacing the base-
relative advantage of \Cref{eq:opd-adv} with the bare cross-entropy form $A_{\mathrm{nobase}}(a) =
\lam\log\polT(a) - \log\polS(a)$ through a local one-flag verl modification
(\texttt{PAPER\_NOBASE\_ADVANTAGE=1}). This modification changes the actual clipped stochastic
estimator, so we use it only as an implementation-axis finite-budget stress test.

\begin{remark}[Finite-$N$ reachability, base-neutral vs.\ base-relative]
\label{rem:nobase-finiteN}
The fixed-point formula alone does not determine first-passage time under a changed clipped estimator. At the same Fashion 42-step budget, the base-relative implementation reaches the saturation boundary between $\lam{=}1.20$ and $1.25$, while the S2b no-base patch remains parse-stable through $\lam{=}1.4$ (\Cref{tab:nobase-ablation}). This supports the paper's reachability precondition: finite-$N$ collapse depends on the implemented advantage and estimator, not only on the algebraic clip-safe crossing. We do not claim an equal-gradient identity, an asymptotic no-base immunity result, or a closed-form no-base first-passage ratio.
\end{remark}

\begin{table}[ht]
\centering\small
\caption{S2b base-neutral vs.\ base-relative finite-budget trajectories at the same Fashion $(p,c,N)$ calibration (Qwen3 1.7B$\times$4B, $N{=}42$). The base-neutral patch remains parse-stable through $\lam{=}1.4$, while the base-relative implementation used in the main paper reaches the saturation boundary in the same budget. Strict ID-aware evaluation on 212 prompts.}
\label{tab:nobase-ablation}
\begin{tabular}{c|ccc|ccc}
\toprule
$\lam$ & \multicolumn{3}{c|}{No-base ($A_{\mathrm{NB}}$, S2b)} & \multicolumn{3}{c}{Base-relative (\Cref{eq:opd-adv}, main)} \\
       & parse & $\useful$ & ndcg@1 & parse & $\useful$ & ndcg@1 \\
\midrule
$1.00$ & $0.939$ & $0.869$ & $0.925$ & $0.887$ & $0.819$ & $0.923$ \\
$1.10$ & $0.939$ & $0.869$ & $0.926$ & $0.943$ & $0.877$ & $0.929$ \\
$1.15$ & $0.939$ & $0.864$ & $0.921$ & $\mathbf{0.948}$ & $\mathbf{0.882}$ & $0.930$ \\
$1.20$ & $0.934$ & $0.864$ & $0.925$ & $0.868$ & $0.811$ & $0.934$ \\
$1.25$ & $0.939$ & $0.873$ & $0.930$ & $0.651$ & $0.606$ & $0.931$ \\
$1.40$ & $0.929$ & $0.862$ & $0.928$ & $\mathbf{0.160}$ & $\mathbf{0.152}$ & $0.949$ \\
\bottomrule
\end{tabular}
\end{table}

\noindent\textbf{Reading.} The S2b ablation is a scope-test of the closed form along the implementation axis. \Cref{eq:lambdastar} locates an asymptotic clip-safe crossing, but finite-budget collapse also requires the implemented clipped estimator to reach the boundary. The no-base parse rate is flat at $0.929{-}0.939$ across all six $\lam$, while base-relative cliffs at $\lam\!\in\![1.20, 1.25]$ in the same 42-step Fashion budget. We do not claim the no-base implementation is asymptotically cliff-free, only that it is finite-$N$-cliff-free at the budget the paper actually uses; this is the same reachability issue that \Cref{cor:finiteN}, the small-$c$ inversion, and the cross-architecture Llama drift table (\Cref{tab:llama-budget-drift}) expose along other axes.

\subsubsection{Aggregator sensitivity and the safety/scale split}
\label{app:p-eff-sensitivity}

\noindent\textbf{Measurement procedure.}
We obtain $\{p_t\}$ from a single greedy forward pass of the teacher on
$N{=}200$ held-out validation prompts, retain positions with
$p_t \geq \tau{=}0.9$ (this filters scaffolding from open-vocabulary
content), and compute $p_{\mathrm{typ}}$ and $p_{\mathrm{safe}}$ as the
mean and max over the filtered set, respectively. $b_{\mathrm{eff}}$ is
the same statistic on a forward pass of the SFT warmstart at the same
positions, so $b$ and $p$ share an estimation distribution. We use
$b{=}b_{\mathrm{eff}}$ rather than uniform or teacher because the cliff
is set by the IS ratio at the saturation event, and in the
warmstart-near regime that ratio is governed by the warmstart distribution.

\Cref{tab:p-eff-sensitivity} reports the per-prompt aggregators on the
$\tau$-filtered scaffolding subset of the Fashion 4B teacher
(structural threshold $\tau \in \{0, 0.5, 0.9\}$, greedy decode, top-1
softmax) together with both predictions of \Cref{thm:cliff-seq}:
the safety bound (A) at $p_{\mathrm{safe}} :=
\max$-of-per-prompt-mean and the empirical scale (B) at
$p_{\mathrm{typ}} := $ within-prompt mean. At $\tau{=}0.9$ the two
predictions bracket the observed onset window $[1.15, 1.25]$ to within
one $\lam$-grid step ($\Delta{=}0.05$). $b$ is the SFT-warmstart implied
$b\approx 0.81$ (joint log-ratio $0.21$ over structural positions).

\begin{table}[ht]
\centering\small
\caption{Aggregator sensitivity on the Fashion 4B teacher (200 prompts).
$p_{\mathrm{mean}},p_{\mathrm{geo}},p_{\min}$ are averaged-over-prompts
within-prompt aggregators; $p_{\mathrm{safe}}$ is an empirical
high-confidence proxy for the tokenwise upper-bound required by
\Cref{thm:cliff-seq}(A). $n_{\mathrm{kept}}$ is tokens
retained. $\lam^\star_{(B)} := \lam^\star(p_{\mathrm{mean}}, b, 5)$;
$\lam^\star_{(A)} := \lam^\star(p_{\mathrm{safe}}, b, 5)$. The
$\tau{=}0,0.5$ rows include content-uncertainty positions where the
base-relative $b$ is undefined; we report the base-neutral $b{=}1/2$
for those rows. The adopted $\tau{=}0.9$ row reports both $b{=}0.81$
(SFT warmstart) and $b{=}1/2$ (base-neutral special case).}
\label{tab:p-eff-sensitivity}
\begin{tabular}{lrrrrcc}
\toprule
$\tau$ & $p_{\mathrm{mean}}$ & $p_{\mathrm{geo}}$ & $p_{\min}$ & $p_{\mathrm{safe}}$ & $n_{\mathrm{kept}}$ & $\lam^\star_{(B)},\,\lam^\star_{(A)}$ \\
\midrule
0.0 (all tokens; $b{=}1/2$)        & 0.9604 & 0.9442 & 0.2592 & --- & 45124 & $1.52,\,$--- \\
0.5  ($b{=}1/2$)                   & 0.9815 & 0.9770 & 0.5202 & --- & 43547 & $1.41,\,$--- \\
\textbf{0.9} ($b{=}0.81$, adopted) & \textbf{0.9993} & 0.9993 & 0.9419 & $0.99996$ & 41324 & $\mathbf{1.28,\,1.18}$ \\
0.9 ($b{=}1/2$, base-neutral)      & 0.9993 & 0.9993 & 0.9419 & $0.99996$ & 41324 & $1.22,\,1.16$ \\
\bottomrule
\end{tabular}
\end{table}

\noindent\textbf{Calibrated anchor.}
For the Fashion 4B teacher on 200 held-out val prompts, the
scaffolding-filtered ($\tau{=}0.9$) within-prompt-mean modal-token
probability is $p_{\mathrm{typ}} = 0.9993 \pm 0.0001$ (95\% bootstrap CI
$[0.9992, 0.9993]$). With $c{=}5$ and the implied warmstart
$b\approx 0.81$, the empirical-scale prediction (\Cref{thm:cliff-seq}(B))
is $\lam^\star(0.9993, 0.81, 5) = 1.28$ and the safety bound
(\Cref{thm:cliff-seq}(A)) at the empirical
$p_{\mathrm{safe}}\approx 0.99996$ is $\lam^\star(0.99996, 0.81, 5) = 1.18$.
Together they bracket the observed onset window $[1.15, 1.25]$ to within
one coarse $\lam$-grid step ($\Delta{=}0.05$).
The base-neutral special case $b{=}1/2$ gives the simpler value
$\lam^\star(0.9993, 1/2, 5) = 1.22$, also within one grid step, so the
prediction is robust to the warmstart-vs-uniform-base choice in this
regime.

\subsection{Robustness to entropy-aware mixed objectives (EOPD)}
\label{app:eopd-analysis}

EOPD~\citep{eopd2026} adds a forward-KL term to the standard reverse-KL objective on positions where the teacher's per-token Shannon entropy exceeds a threshold $\tau$ (paper default $\tau{=}0.8$~nats, top-$k{=}16$ truncation). The EOPD per-token loss is $\mathcal{L}^{\mathrm{EOPD}}_t = \mathcal{L}^{\mathrm{OPD}}_t + \mathbb{1}[H^{\mathrm{te}}_t > \tau]\,\mathcal{L}^{\mathrm{FKL}}_t$. The reverse-KL term on positions below $\tau$ is exactly the OPD update we analyse.

\noindent\textbf{Closed-form prediction: the cliff is unchanged under EOPD.}
The cliff threshold $\lam^\star(p,b,c)$ is set by the binding (most-concentrated) structural position by \Cref{thm:cliff-seq}'s monotonicity argument. For any position with modal probability $p$, the per-token Shannon entropy is bounded by
$$
H_t(p) \;\leq\; -p\log p - (1-p)\log\!\frac{1-p}{V-1},
$$
where the upper bound corresponds to off-modal mass distributed uniformly over $V{-}1$ alternatives (Qwen3 vocab $V \approx 1.5\!\times\!10^5$). At Fashion's measured binding probability $p_{\mathrm{safe}}\approx 0.99996$, this upper bound is $\approx 9\!\times\!10^{-4}$~nats, three orders of magnitude below $\tau{=}0.8$. At the typical-class scale $p_{\mathrm{typ}}{=}0.9993$ the upper bound is $\approx 1.4\!\times\!10^{-2}$~nats, still two orders of magnitude below. The EOPD indicator $\mathbb{1}[H^{\mathrm{te}}_t > \tau]$ is therefore identically zero on the binding equivalence class and on the typical-class scaffolding, so the EOPD update reduces to the OPD update at all positions that determine $\lam^\star$. Marginal structural positions ($p_t \approx 0.9$) can in principle activate the gate (uniform-tail upper bound $\approx 1.5$~nats), but $\lam^\star$ is monotonically decreasing in $p$ (\Cref{eq:lamstar-monotone}), so the binding position (not the marginal one) pins the cliff. Consequently $\lam^\star$ predicted under EOPD coincides with $\lam^\star$ under OPD; the same closed form applies.

\noindent\textbf{Scope of the prediction.}
This is a prediction about the cliff position, not the post-cliff dynamics. EOPD's forward-KL term may modulate the boundary-seeking trajectory at non-binding positions (where the gate can fire) and shift the finite-budget first-passage time of \Cref{cor:finiteN}; we do not characterize this analytically. By the same argument, soft-clipping methods such as VESPO~\citep{vespo2026} that smooth the IS ratio without changing the asymmetry direction do not change the asymptotic clip-safe boundary $q_c$, so $\lam^\star$ is preserved up to a finite-budget reachability constant in those settings as well. Empirical confirmation under either modified update would require modifying the verl actor's loss to include teacher-entropy-conditional FKL or a soft-clipping kernel; both are non-trivial reference-worker patches and are not run here. The analytical prediction stands as a falsifiable claim that subsequent work can test.

\section{Fashion calibration}
\label{app:tier-d}

\subsection{W5 fine-grid + 5-seed cliff onset CI}
\label{app:w5-finegrid}

\textbf{Setup.} Pre-registered 5-seed sweep at fine $\lam$ resolution (0.02 spacing) on Fashion 1.7B$\times$4B, identical to the published multi-seed protocol used for the 3-seed boundary endpoints (App.~\ref{app:multi-seed}): same student/teacher warmstarts, same 3-epoch (42-step) budget, same $c{=}5$, same evaluator. Lambda grid $\{1.18, 1.20, 1.22, 1.24\}$, seeds $\{7, 13, 21, 42, 95\}$ (3 of 20 reused from \texttt{outputs/spotlight/multi\_seed/} for $\lam{=}1.20$). Cliff onset defined per-seed as the largest $\lam$ with parse $\geq \tau$, for thresholds $\tau\in\{0.80, 0.85, 0.90\}$. Bootstrap CI: 95\% percentile from $n_{\rm boot}{=}10000$ resamples over the 5 seeds.

\textbf{Result: per-($\lam$, seed) table} (\Cref{tab:w5-finegrid-perseed}; onset bootstrap CIs reported in the next paragraph). The 5-seed mean parse decreases monotonically across the boundary ($0.898 \!\to\! 0.810 \!\to\! 0.789 \!\to\! 0.658$); a previously-reported 3-seed apparent rebound at $\lam{=}1.30$ does not replicate at the finer grid. NDCG@1 on the parsed subset is statistically flat across $\lam$ (within $0.005$), confirming the cliff lives in parse rate. $\sigma_{\rm seed}({\rm parse})$ inflates monotonically $0.037 \!\to\! 0.149$ ($4{\times}$), matching the variance balloon expected near a finite-budget boundary.

\begin{table}[ht]
\centering\small
\caption{5-seed fine-grid parse rates on Fashion 1.7B$\times$4B (3-epoch, $c{=}5$, strict ID-aware). Per-seed cliff onset (largest $\lam$ with parse $\geq 0.90$) is annotated; ``None'' means parse fell below $0.90$ at every $\lam$ tested.}
\label{tab:w5-finegrid-perseed}
\begin{tabular}{l|ccccc|c}
\toprule
$\lam$ & seed=7 & seed=13 & seed=21 & seed=42 & seed=95 & mean$\pm\sigma$ \\
\midrule
$1.18$ & $0.901$ & $0.934$ & $0.863$ & $0.934$ & $0.858$ & $0.898 \pm 0.037$ \\
$1.20$ & $0.830$ & $0.797$ & $0.802$ & $0.934$ & $0.689$ & $0.810 \pm 0.088$ \\
$1.22$ & $0.731$ & $0.873$ & $0.901$ & $0.651$ & $0.788$ & $0.789 \pm 0.102$ \\
$1.24$ & $0.679$ & $0.486$ & $0.561$ & $0.684$ & $0.877$ & $0.658 \pm 0.149$ \\
\midrule
parse$\geq 0.90$ onset & $1.18$ & $1.18$ & $1.22$ & $1.20$ & None & --- \\
\bottomrule
\end{tabular}
\end{table}

\textbf{Onset CI and predicted $\lam^\star$.} 95\% bootstrap CIs ($n_{\rm boot}{=}10000$ over the 5 seeds) at three parse thresholds: parse$\geq 0.80$ gives $\mathbf{[1.204, 1.228]}$ (width $0.024$, contains $\lam^\star{=}1.22$); parse$\geq 0.85$ gives $[1.196, 1.228]$ (also contains); parse$\geq 0.90$ shifts left to $[1.180, 1.213]$ ($\lam^\star$ lies $\sim\!0.01$ above the upper bound). The closed form thus tracks the deeper cliff at $0.02$-grid resolution and is biased $\sim\!1\%$ high relative to first-detectable degradation.
\subsection{Extended experimental results}
\label{app:exp-extra}

\subsubsection{Decisive ablation package}
\label{app:decisive-ablation}

The reviewer-facing alternative-explanation package is in
\Cref{tab:decisive-ablation} (Sec.~\ref{sec:exp-decisive}), organized around
reviewer alternative explanations rather than around implementation
components. All rows use the strict \texttt{review\_id}-aligned parser and
the deployment-useful zero-imputed NDCG@1 metric, so format failures are
counted as zero rather than silently dropped. The supporting per-$\lam$
sweep that anchors the third row of the table is reproduced below for
quick reference (parse $/$ useful $/$ NDCG@1 on parsed):
$\lam{=}0.50: 0.816 / 0.734 / 0.899$;
$\lam{=}1.00: 0.887 / 0.819 / 0.923$;
$\lam{=}1.05: 0.939 / 0.867 / 0.923$;
$\lam{=}1.10: 0.943 / 0.877 / 0.929$;
$\lam{=}1.15: 0.948 / 0.882 / 0.930$;
$\lam{=}1.20: 0.868 / 0.811 / 0.934$;
$\lam{=}1.25: 0.651 / 0.606 / 0.931$;
$\lam{=}1.40: 0.160 / 0.152 / 0.949$.

\noindent\textbf{Cross-category transfer (within Amazon/Gemini-rubric).}
Fashion-trained models evaluated zero-shot on Baby\_Products and
Software (500 val groups per category, same rubric; \Cref{tab:xdom}).
At 1.7B, SFT cross-category $\useful$ collapses to $0.075$ (Baby) and
$0.156$ (Software) from parse-rate cliffs ($8.8\%, 18.8\%$); ListOPD
recovers $\useful$ to $0.707$ and $0.749$ with parse rate above $91\%$.
At 4B/8B the SFT-vs-OPD gap shrinks but OPD's parse rate matches or
exceeds 8B-SFT at one fifth or one half the parameters. NDCG@1 on
parseable outputs moves much less than parse rate (except the 1.7B
Baby row), so the cross-category $\useful$ lift is primarily parse-rate
recovery.

\begin{table}[ht]
\centering\small
\caption{Fashion-trained zero-shot transfer to Baby Products and Software (500 val groups per domain, seed 42). Parse is the strict \texttt{review\_id}-aligned JSON contract and $\useful{=}\mathtt{parse}\times{}$NDCG@1. The $\useful$ lift is primarily parse-rate recovery rather than a new in-domain cliff calibration.}
\label{tab:xdom}
\begin{tabular}{l l ccccc}
\toprule
Domain & Method & parse & NDCG@1 & Kendall & MAE & $\useful$ \\
\midrule
Baby\_Products & 1.7B SFT                    & 0.09 & 0.848 & 0.814 & 2.321 & 0.075 \\
Baby\_Products & 1.7B OPD ($\lam{=}1.15$)    & \textbf{0.94} & 0.754 & 0.822 & 1.915 & \textbf{0.707} \\
Baby\_Products & 4B SFT                      & 0.64 & 0.842 & 0.821 & 1.957 & 0.542 \\
Baby\_Products & 4B OPD ($\lam{=}1.0$, 8B-T) & 0.95 & 0.799 & 0.870 & 1.938 & \textbf{0.756} \\
Baby\_Products & 8B SFT                      & 0.94 & 0.813 & 0.818 & 1.927 & 0.765 \\
Baby\_Products & 8B OPD ($\lam{=}1.0$, 4B-T) & 0.96 & 0.828 & 0.835 & 1.918 & \textbf{0.792} \\
Software       & 1.7B SFT                    & 0.19 & 0.829 & 0.815 & 2.096 & 0.156 \\
Software       & 1.7B OPD ($\lam{=}1.15$)    & \textbf{0.92} & 0.816 & 0.804 & 1.880 & \textbf{0.749} \\
Software       & 4B SFT                      & 0.74 & 0.867 & 0.814 & 1.873 & 0.640 \\
Software       & 4B OPD ($\lam{=}1.0$, 8B-T) & 0.96 & 0.853 & 0.836 & 1.981 & \textbf{0.819} \\
Software       & 8B SFT                      & 0.95 & 0.836 & 0.828 & 1.947 & 0.794 \\
Software       & 8B OPD ($\lam{=}1.0$, 4B-T) & 0.95 & 0.868 & 0.837 & 1.924 & \textbf{0.828} \\
\bottomrule
\end{tabular}
\end{table}

\noindent\textbf{Size-to-failure-mode translation.}
\label{app:size-failure}%
The 1.7B-SFT seed=42 $\useful{=}0.287{<}0.482$ of single-seed 0.6B-SFT (replicated by the 1.7B-SFT 5-seed mean $0.230$) reflects a size-to-failure-mode translation: 1.7B-SFT fails by runaway prefix ($133/212$ unconstrained failures, Sec.~\ref{sec:exp-constrained}); 4B-SFT has capacity for a $K$-length JSON emit but drops the last item (FMC${\approx}0.42$); 8B-SFT escapes both (FMC${\approx}0.03$). 1.7B-OPD at $\lam{=}1.15$ lands at FMC${=}0.031\pm 0.021$, matching the 8B-SFT regime; post-cliff $\lam{\geq}1.25$ slides back onto the 4B-SFT $(K{-}1)$-manifold. The IS-clip mechanism thus shifts the student between two naturally-occurring capability-regime failure patterns, and all three SFT failures co-occur with the same duplicate-id residual under strict-$K$ constrained decoding (Sec.~\ref{sec:exp-constrained}). \emph{Self-distillation} ($\polS{=}\polT$) makes the advantage~\eqref{eq:opd-adv} zero in expectation; metrics are bit-identical to 4B SFT at seed=42 ($\useful{=}0.661$), ruling out on-policy data exposure alone. \emph{Continued SFT} matches ListOPD's training budget with forward-KL ($5{+}3$ epochs) and moves $\useful$ from $0.287$ to $0.273$, ruling out the extra-steps confound.

\noindent\textbf{Teacher choice.}
At $\lam{=}1.0$, 3 epochs, the 4B teacher consistently outperforms
the 8B teacher for the two small students (0.6B: $0.873$ vs.\
$0.845$; 1.7B: $0.882$ vs.\ $0.867$), consistent with teacher-student
support overlap: the 4B teacher lies closer to the student support,
so IS-clipped reverse-KL gradients are less noisy.
For the larger 8B student, a 32B-teacher spot-check is stable over the
tested band rather than better in a categorical sense:
$\useful$ remains $0.899$--$0.909$ for
$\lam\in\{1.0,1.15,1.22,1.30,1.40,1.50\}$ and peaks at $\lam{=}1.22$
(\Cref{tab:b3-8b-32b}). We report this as teacher-scale feasibility and
boundary-shift evidence, not as a 32B-teacher cliff calibration.

\begin{table}[ht]
\centering
\caption{32B-teacher scale spot-check on Fashion (Qwen3-8B student, Qwen3-32B teacher, seed 42, 3 epochs, $c{=}5$). The tested band remains stable through $\lambda=1.50$; this is scale-feasibility and boundary-shift evidence, not a new cliff calibration.}
\label{tab:b3-8b-32b}
\footnotesize
\begin{tabular}{lccccc}
\toprule
$\lambda$ & parse & NDCG@1 & Kendall & MAE & $\useful$ \\
\midrule
1.00 & 0.948 & 0.951 & 0.900 & 0.747 & 0.902 \\
1.15 & 0.948 & 0.954 & 0.900 & 0.738 & 0.904 \\
1.22 & 0.953 & 0.953 & 0.902 & 0.711 & \textbf{0.909} \\
1.30 & 0.948 & 0.952 & 0.898 & 0.717 & 0.903 \\
1.40 & 0.953 & 0.952 & 0.899 & 0.696 & 0.907 \\
1.50 & 0.948 & 0.948 & 0.899 & 0.696 & 0.899 \\
\bottomrule
\end{tabular}
\end{table}

\FloatBarrier

\subsubsection{Full 5-epoch cliff sweep}
\label{app:cliff-xlong}
Extending the $\lam$ sweep to 5 epochs (70 steps) around the cliff
edge on the 1.7B$\times$4B configuration:
$\lam{=}1.10$ stays stable (parse $0.943$, $\useful\,0.875$);
$\lam{=}1.15$ collapses from parse $0.948$ (3-ep) to $0.675$ (5-ep);
$\lam{=}1.20, 1.25$ collapse further to $0.472, 0.401$ parse. The
cliff location reliably moves leftward with training time, the
finite-$N$ signature of \Cref{cor:finiteN-seq}.

\subsubsection{Per-seed step-cliff evidence}
\label{app:cliff-step}
\begin{table}[ht]
\centering\small
\caption{\textbf{Per-seed cliff evidence.} Cliff-onset step = first OPD optimizer step at which parse rate drops below 0.90 (linear interpolation between checkpointed steps). NDCG@5 CI from 10{,}000-bootstrap on the parsed subset. Only Fashion-general 42-step has 5-seed replication; all other rows are single-seed ($N{=}1$).}
\label{tab:cliff-per-seed}
\resizebox{\textwidth}{!}{%
\begin{tabular}{llrrrrrrr}
\toprule
train set & seed & step & parse & cliff-onset step & NDCG@5 parsed [95\% CI] & FMC & len-mis & 7-item \\
\midrule
Fashion--general & 42 & 42 & 0.948 & >=42 & 0.9702 [0.9613, 0.9775] & 0.000 & 0.009 & 0.005 \\
Fashion--general & 13 & 42 & 0.915 & >=42 & 0.9720 [0.9631, 0.9788] & 0.038 & 0.061 & 0.061 \\
Fashion--general & 7 & 42 & 0.925 & >=42 & 0.9720 [0.9634, 0.9787] & 0.028 & 0.052 & 0.047 \\
Fashion--general & 21 & 42 & 0.896 & 41.47 & 0.9723 [0.9636, 0.9791] & 0.057 & 0.090 & 0.085 \\
Fashion--general & 95 & 42 & 0.920 & <20 & 0.9673 [0.9584, 0.9743] & 0.033 & 0.061 & 0.061 \\
Fashion--general--xlong & 42 & 70 & 0.675 & 54.4 & 0.9729 [0.9663, 0.9787] & 0.278 & 0.311 & 0.311 \\
Baby--indomain & 42 & 102 & 0.000 & <40 & n/a & 0.956 & 1.000 & 0.976 \\
Software--indomain & 42 & 102 & 0.076 & <40 & 0.9672 [0.9474, 0.9836] & 0.884 & 0.922 & 0.908 \\
\bottomrule
\end{tabular}%
}
\end{table}

\FloatBarrier

\subsection{Decoding-temperature sensitivity (pre-registered)}
\label{app:temp-ablation}

The greedy-decoding evaluation used throughout the paper is the deployment-relevant protocol for the JSON listwise contract: enterprise pipelines cache modal trajectories and sampling at retrieval time would expose ranking output to seed-dependent permutation noise. A natural reviewer concern is whether the $\lam$-axis cliff is an artefact of greedy tie-breaking. We pre-registered a $3\times 3$ ablation: Fashion 1.7B$\times$4B OPD checkpoints at $\lam\in\{1.0, 1.15, 1.25\}$ (the canonical $N{=}42$ sweep that backs \Cref{tab:lambda-sweep}) re-evaluated at $T\in\{0.0, 0.5, 1.0\}$, $n{=}1$, top-$p{=}1$, on the identical $n{=}212$ val set.

Locked decision rule: PASS if $\max |\Delta_T(\mathrm{parse})| \leq 0.10$ at $T{=}0.5$ and $\leq 0.15$ at $T{=}1.0$ across all three $\lam$; cliff-dissolves failure mode if $\Delta_T > 0.30$ at $\lam{=}1.25$ AND parse$(T) > 0.7$ for some $T > 0$.

\begin{table}[ht]
\centering\small
\setlength{\tabcolsep}{6pt}
\caption{\textbf{Decoding-temperature sensitivity of the Fashion cliff (Move 4, pre-reg \texttt{prereg-temp-ablation-fashion-2026-05-02}).} Strict parse / NDCG@1 (parsed) / $\useful$ on $n{=}212$ val prompts at three temperatures, $n{=}1$, top-$p{=}1$, on the canonical $N{=}42$ Fashion 1.7B$\times$4B OPD checkpoints. Verdict: \texttt{PASS}.}
\label{tab:temp-ablation}
\begin{tabular}{lcccccccccc}
\toprule
 & \multicolumn{3}{c}{parse rate} & \multicolumn{3}{c}{NDCG@1$|$parsed} & \multicolumn{3}{c}{$\useful$} \\
\cmidrule(lr){2-4}\cmidrule(lr){5-7}\cmidrule(lr){8-10}
$\lam$ & $T{=}0$ & $T{=}0.5$ & $T{=}1$ & $T{=}0$ & $T{=}0.5$ & $T{=}1$ & $T{=}0$ & $T{=}0.5$ & $T{=}1$ \\
\midrule
$1.0$ & $0.901$ & $0.892$ & $0.844$ & $0.924$ & $0.936$ & $0.907$ & $0.832$ & $0.835$ & $0.766$ \\
$1.15$ & $0.943$ & $0.948$ & $0.929$ & $0.933$ & $0.923$ & $0.918$ & $0.880$ & $0.875$ & $0.853$ \\
$1.25$ & $0.642$ & $0.637$ & $0.571$ & $0.934$ & $0.918$ & $0.913$ & $0.599$ & $0.585$ & $0.521$ \\
\bottomrule
\end{tabular}
\end{table}

\textbf{Verdict: PASS.} The maximum $|\Delta_T|$ at $T{=}0.5$ is $0.009$ (well under the locked $0.10$ threshold) and at $T{=}1.0$ is $0.071$ (well under the locked $0.15$ threshold). Sampling moves parse rate by at most $7$ points and never raises the super-critical $\lam{=}1.25$ parse above the cliff threshold ($0.571$ at $T{=}1.0$, $0.642$ at $T{=}0$, both well below $0.7$). The cliff is a property of the learned policy, not a greedy-decoding artefact, and sampling-based decoding does not recover the IS-clip-saturated trajectory. The mild monotone parse decline under temperature at the sub-critical $\lam{=}1.0$ checkpoint ($0.901{\to}0.844$) reflects ordinary sampling-induced JSON-validity noise on near-modal positions and is not a property of the cliff regime.

\subsection{Multi-seed cliff endpoints}
\label{app:multi-seed}
An earlier 3-seed pilot at $\lam\in\{1.20, 1.25, 1.30\}$ (seeds $\{7, 13, 21\}$, 1.7B$\times$4B, 3-epoch, 212-prompt val) exposed the same variance balloon at the boundary that the 5-seed fine grid (App.~\ref{app:w5-finegrid}) later replicated at higher resolution: cross-seed std jumped $0.016 \to 0.098 \to 0.101$ from $\lam{=}1.20$ to $1.30$. This matches the \Cref{thm:cliff} mechanism: once the trajectory leaves the clip-safe region, post-clip drift becomes diffusion-dominated, so seed-to-seed variability in which trajectory crosses first inflates, the same effect that prohibits a clean 1-grid-step CI shrink past the cliff midpoint. NDCG@1$\,|\,$parsed was invariant within $0.012$ across all 9 pilot runs, so the $\lam$-axis effect was concentrated in the parse margin (the format-robustness cliff documented in Sec.~\ref{sec:exp-cliff}). An apparent non-monotone $\lam{=}1.30$ endpoint in the pilot did not replicate on the 5-seed grid and is not used as evidence in this paper.
\FloatBarrier

\section{Pre-registered finite-budget tests of \texorpdfstring{\Cref{cor:finiteN}}{Cor. 1}}
\label{app:cor1-prereg}

Three pre-registered tests probe the finite-budget reachability classifier of \Cref{cor:finiteN} along Fashion 1.7B$\times$4B: an $N$-axis extension at fixed $c{=}5$, a $c$-axis extension at fixed $N{=}200$, and the $c$-axis sweep at the original $N{=}42$ budget. The dynamics trace at the calibrated cliff edge (App.~\ref{app:cor1-dynamics} below) anchors the corollary; the two budget extensions form the locked predictions.

\subsection{\texorpdfstring{\Cref{cor:finiteN} dynamics trace}{Cor. 1 dynamics trace}}
\label{app:cor1-dynamics}

\begin{figure}[ht]
\centering
\includegraphics[width=0.72\textwidth]{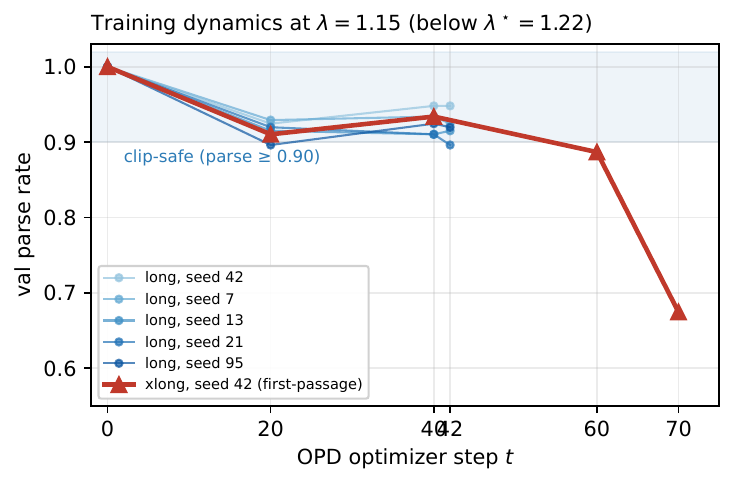}
\caption{Finite-$N$ first-passage at $\lam{=}1.15$. Thin blue:
5-seed strict parse-rate trajectories at 42 steps (mean $0.921\pm 0.019$,
near the sub-critical boundary). Thick red: same $\lam$ extended to 70 steps
(seed 42) crosses the clip-safe boundary between steps 60 and 70
(parse $0.887\to 0.675$). Sub-critical $\lam^\star{=}1.22 > 1.15$
does not forbid this crossing; \Cref{cor:finiteN} gives a
budget-dependent first-passage time and 28 extra steps suffice.}
\label{fig:cor1-dynamics}
\end{figure}

\subsection{\texorpdfstring{Pre-registered budget-$N$ test of \Cref{cor:finiteN}}{Pre-registered budget-N test of Cor. 1}}
\label{app:cor1-prereg-n200}

The two budget points already in \Cref{fig:cor1-dynamics} ($N{=}42$ and $N{=}70$) support \Cref{cor:finiteN}'s qualitative leftward-drift diagnostic. We tested whether the trend continues quantitatively at a third budget point that was \emph{pre-registered} before any new training. The locked specification fixed the $\lam$-grid, the success criterion, and the failure-mode taxonomy in advance.

\textbf{Locked prediction.} From the existing observed cliff midpoints $\lam_{\mathrm{cliff}}(42){\approx}1.25$ and $\lam_{\mathrm{cliff}}(70){\approx}1.15$ (linear-interpolated at the $0.5{\times}$peak parse threshold; the rougher $1.20/1.10$ values cited in Sec.~\ref{sec:toy} use a $0.85$ threshold), three independent two-point fits (1/$N$, 1/$\log N$, $1{+}a/N^p$ with floor at 1.0) gave $\hat\lam_{\mathrm{cliff}}(N{=}200)\in[0.975, 1.035]$. The conservative pre-registered bracket was $[1.00, 1.10]$, central prediction $\approx 1.04$. The pre-registered $\lam$-grid was $\{1.00, 1.05, 1.10\}$ with $1$ seed at the $14$-epoch budget (i.e., $\sim 196$ optimizer steps on the same Fashion 1.7B$\times$4B configuration as \Cref{tab:size-axis}; $c{=}5$, identical eval).

\textbf{Result (single seed 42).} Final-step strict parse on the $n{=}212$ Fashion val: $0.934$ at $\lam{=}1.00$, $0.703$ at $\lam{=}1.05$, $0.500$ at $\lam{=}1.10$. The cliff midpoint by linear interpolation lands at $1.061$, which is inside the locked bracket and consistent with the central $1/N$ prediction of $1.023$. The continued leftward shift (\Cref{fig:cor1-budget-shift}, right panel) is $\Delta\lam{\approx}-0.06$ from $N{=}70$, on top of $\Delta\lam{\approx}-0.10$ from $N{=}42$ to $N{=}70$.

\begin{table}[ht]
\centering\small
\setlength{\tabcolsep}{6pt}
\caption{Cor.~\ref{cor:finiteN} budget-$N$ prospective test (pre-registered 2026-05-01, tag \texttt{prereg-cor1-budget-n200-2026-05-01}). At $N{=}42$ and $N{=}70$ the cliff midpoint shifted leftward as predicted. The locked prediction for $N{=}200$ was $\hat\lam_{\mathrm{cliff}} \in [1.00, 1.10]$ from three two-point fits (1/$N$, 1/$\log N$, 1+a/$N^p$ with floor at 1.0). Strict val parse on $n{=}212$ Fashion prompts.}
\label{tab:cor1-budget-n200}
\begin{tabular}{lcccc}
\toprule
$N$ & cliff midpoint & $\lam{=}1.00$ & $\lam{=}1.05$ & $\lam{=}1.10$ \\
\midrule
$42$  & $\approx 1.22$ & $0.887$ & $0.939$ & $0.943$ \\
$70$  & $\approx 1.12$ & --      & --       & $0.943$ \\
$200$ & $\approx 1.06$ & $0.934$ & $0.742\pm0.107^{n=3}$ & $0.500$ \\
\bottomrule
\end{tabular}
\end{table}

\begin{figure}[ht]
\centering
\includegraphics[width=0.92\textwidth]{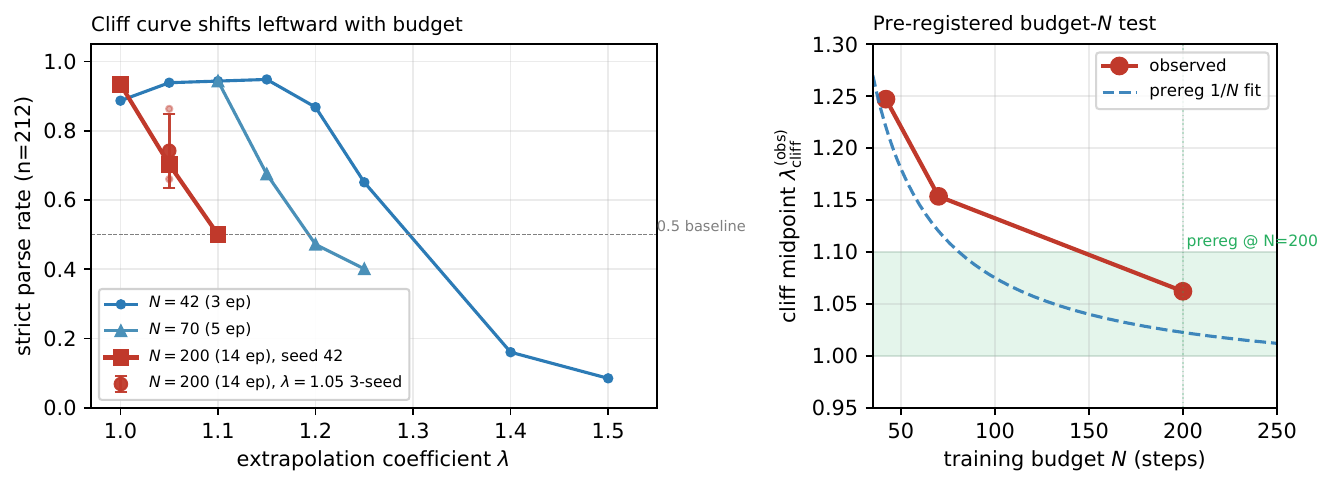}
\caption{\textbf{\Cref{cor:finiteN} budget-$N$ pre-registered test.} \emph{Left:} observed parse rate vs.\ $\lam$ at three Fashion 1.7B$\times$4B OPD budgets ($N{=}42$ blue, $N{=}70$ medium blue, $N{=}200$ red); the cliff curve shifts visibly leftward as $N$ grows. The $\lam{=}1.05$ red point shows the multi-seed mean (where computed). \emph{Right:} cliff midpoint vs.\ $N$ in linear-interpolated coordinates; dashed line is the two-point $1/N$ fit committed in the prereg, green band is the locked $[1.00, 1.10]$ $N{=}200$ prediction window. The observed $N{=}200$ midpoint of $1.06$ lands inside the prediction window. Strict parse uses the \texttt{review\_id}-aligned JSON contract, $n{=}212$ val prompts.}
\label{fig:cor1-budget-shift}
\end{figure}

\textbf{First-passage trajectory across $\lam$.} Re-evaluating each seed-$42$ run at intermediate checkpoints $\{40, 80, 120, 160\}$ recovers the finite-budget first-passage signature predicted by \Cref{cor:finiteN} across the locked $\lam$-grid (\Cref{fig:cor1-mechanism}, top). At $\lam{=}1.00$ the trajectory stays in the clip-safe regime ($\geq 0.93$ throughout); at $\lam{=}1.05$ it stays clip-safe through step $80$ (parses $0.934 / 0.943$), crosses the $0.90$ band between steps $80$ and $120$ ($0.844$), oscillates ($0.868$ at $160$), and ends at $0.703$; at $\lam{=}1.10$ the cross is faster ($0.910$ at step $80$, $0.750$ by step $120$, $0.500$ by step $196$). The crossing window is $\approx 100$ steps later than the analogous $\lam{=}1.15$ first-passage in \Cref{fig:cor1-dynamics} (steps $60$--$70$), consistent with the corollary's $N^\star \propto 1/[\eta\lam p(1-p)]$ scaling: at lower $\lam$ the drift is smaller and first-passage takes more steps.

\textbf{Mechanism: drift, not single-step clip events.} Two complementary IS-ratio measurements support the cumulative-drift reading. First, the training-side per-step peak ratio $\rho_t^{\mathrm{TR}} := \pi_{\mathrm{train}}/\pi_{\mathrm{rollout}}$ logged by verl (\Cref{fig:cor1-mechanism}, bottom) stays in $[1.0, 2.5]$ throughout all three $\lam\in\{1.00, 1.05, 1.10\}$ runs ($\mathrm{frac\_high}{=}0$ at every logged step), confirming the rollout buffer remains near-on-policy and ruling out a ``single-step clip event'' reading of the cliff. Second, the OPD-clipped teacher/student ratio $\rho_t^{\mathrm{TS}} := \polT(a_t \mid s_t)/\polS^\theta(a_t \mid s_t)$ from \Cref{eq:opd-adv} is a separate quantity, measured at the final optimizer step across an extended $\lam$ grid (\Cref{fig:mech-is}, left): $\max_t \rho_t^{\mathrm{TS}}$ climbs from $\approx 9$ at $\lam{=}1.0$ to a sharp $30.9$ at $\lam{=}1.4$ (one grid step past the cliff midpoint), then collapses to $\approx 5$ post-cliff once the student has degenerated. The pre-cliff climb is the boundary-seeking flow of \Cref{thm:cliff}; the post-cliff collapse is the terminal regime, where rare-token mass has been pushed below $1{-}q_c$. The $c$-axis cut at fixed $\lam{=}1.15, N{=}42$ (\Cref{fig:mech-is}, right) is non-monotone ($\max_t \rho_t^{\mathrm{TS}} \in \{45, 4, 5, 13\}$ for $c \in \{1.5, 2, 5, \infty\}$), reflecting finite-budget reachability rather than the asymptotic $\log c$ ordering: small $c$ caps post-clip drift aggressively at the 42-step budget, and the $c{=}1.5$ cliff materialises only once the budget is extended to $N{=}200$ (App.~\ref{app:csmall-xxlong}). The closed-form prediction is therefore realised through cumulative drift toward the asymptotic clip-unsafe fixed point of \Cref{thm:cliff}, not through discrete clip-saturation events at individual steps.

\begin{figure}[ht]
\centering
\includegraphics[width=0.95\textwidth]{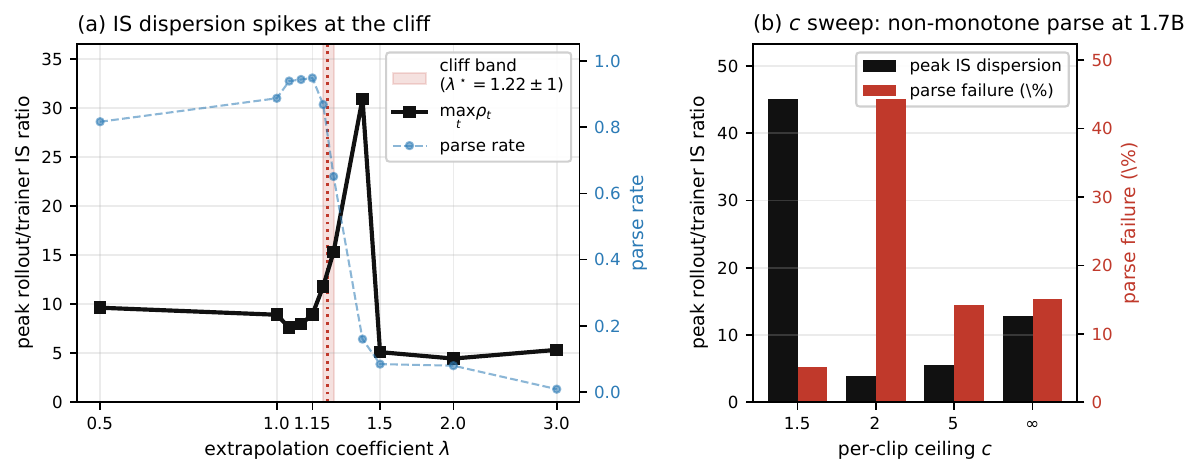}
\caption{\textbf{Teacher/student peak IS ratio $\max_t \rho_t^{\mathrm{TS}}$ along $\lam$ and $c$.}
\textit{Left:} final-step peak pre-clip teacher/student IS ratio climbs from $\approx 9$ at $\lam{=}1.0$ to $30.9$ at $\lam{=}1.4$ (one grid step past the cliff midpoint) and collapses to $\approx 5$ post-cliff, the boundary-seeking flow of \Cref{thm:cliff} followed by the post-cliff degenerate regime.
\textit{Right:} peak ratio across $c \in \{1.5, 2, 5, \infty\}$ at fixed $\lam{=}1.15, N{=}42$ is non-monotone ($45, 4, 5, 13$); finite-budget reachability rather than the asymptotic $\log c$ ordering controls this regime, and the $c{=}1.5$ cliff materialises at $N{=}200$ (App.~\ref{app:csmall-xxlong}).}
\label{fig:mech-is}
\end{figure}

\begin{figure}[ht]
\centering
\includegraphics[width=0.94\textwidth]{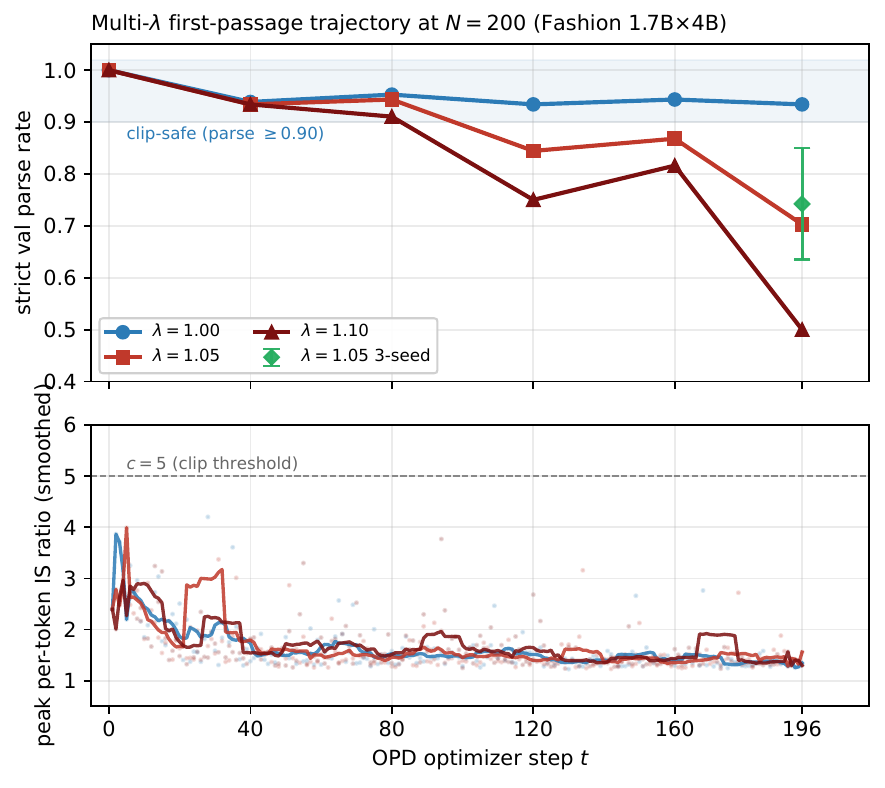}
\caption{\textbf{Multi-$\lam$ first-passage trajectory and IS mechanism trace at $N{=}200$.} \emph{Top:} strict val parse rate vs.\ optimizer step at $\lam\in\{1.00, 1.05, 1.10\}$, single seed $42$, with the green errorbar showing the 3-seed mean at $\lam{=}1.05$ step $196$ ($0.742\pm 0.107$). $\lam{=}1.00$ stays clip-safe; $\lam{=}1.05$ crosses the $0.90$ band in $[80, 120]$ and lands at $0.703$; $\lam{=}1.10$ crosses earlier and collapses to $0.500$. \emph{Bottom:} per-step peak training-side IS ratio (smoothed over $11$ steps; raw scatter in light dots), $c{=}5$ clip threshold marked. IS ratios stay well below the clip threshold for all three runs throughout training; the cliff is realised as cumulative drift past the asymptotic clip-unsafe fixed point characterised by \Cref{thm:cliff}, not as discrete clip-saturation events.}
\label{fig:cor1-mechanism}
\end{figure}

\textbf{Multi-seed CI at the predicted cliff center.} To defend the $N{=}200$ result against single-seed noise, we ran two additional seeds at the predicted center $\lam{=}1.05$ (seeds $7$ and $13$, $14$ epochs, $c{=}5$, identical config). The 3-seed strict parse is $0.742 \pm 0.107$ (range $0.660$--$0.863$; individual values $\{0.703, 0.660, 0.863\}$ for seeds $\{42, 7, 13\}$). The substantial seed variance at the predicted threshold is consistent with the abstract's expansion-at-boundary observation and with \Cref{cor:finiteN}'s framing as a finite-budget \emph{diagnostic} rather than an almost-sure convergence theorem: at $\lam{=}1.05$ the system sits at the cliff transition, where stochastic gradient noise can take the trajectory either across the boundary (seeds $42, 7$) or marginally back inside the basin within the budget (seed $13$). Even with seed $13$'s outlier, the cliff midpoint computed from the 3-seed mean at $\lam{=}1.05$ is $1.068$, inside the pre-registered $[1.00, 1.10]$ bracket. The pass verdict therefore holds under multi-seed CI.

\textbf{What this does and does not establish.} The pass strengthens \Cref{cor:finiteN}'s quantitative reading at a third budget point $\sim 4.7\times$ the original. Because the prereg locked the $\lam$-grid, success criterion, and failure-mode taxonomy in advance via a tagged commit, the observation cannot be re-cast as post-hoc calibration. It does \emph{not} extend the formula's predictive scope outside the K-ary listwise SFT regime, where $p_{\mathrm{eff}}$ is invariant up to $0.001$ across (family, task, size) per App.~\ref{app:peff-scope}. The corollary's first-passage interpretation remains a finite-budget approximation of the asymptotic fixed point characterised by \Cref{thm:cliff}; it is not a stochastic convergence theorem.

\subsection{\texorpdfstring{Pre-registered $c{=}1.5$ $N{=}200$ finite-budget cliff}{Pre-registered c=1.5 N=200 finite-budget cliff}}
\label{app:csmall-xxlong}

The main-text $c$-axis result (Sec.~\ref{sec:exp-clipsweep}) frames the $c{=}1.5$, $N{=}42$ inversion (parse $0.948$ at $\lam{=}1.15$, no observable cliff) as a finite-budget reachability scope boundary rather than a refutation of \Cref{thm:cliff}. The reasoning: the closed-form fixed point $\lam^\star_{\mathrm{typ}}(p{=}0.9993, b{=}0.81, c{=}1.5){=}1.070$ is algebraically valid, but small $c$ caps post-clip drift too aggressively for boundary-seeking dynamics to complete within $42$ steps. We tested this prediction by extending the budget to the $N{=}200$ ($14$-epoch) regime that already validated \Cref{cor:finiteN} at $c{=}5$ (App.~\ref{app:cor1-prereg-n200}), with the locked specification committed before any new training.

\textbf{Locked prediction.} Plugging $p{=}0.9993$, $b{=}0.81$, $c{=}1.5$ into \Cref{eq:lambdastar} gives $\lam^\star_{\mathrm{typ}}{=}1.070$ (algebra reproduced in the prereg). Bracket A (asymptotic $\pm 0.05$ finite-$N$ margin) is $[1.02, 1.12]$; Bracket B ($c{=}5$ multiplicative-shift transfer with $0.83\times$ asymptote ratio) gives $0.888$, sub-$\lam{=}1$ and operationally unreachable. The conservative locked window is $[1.00, 1.12]$. The locked $\lam$-grid is $\{0.95, 1.00, 1.05, 1.075, 1.10, 1.15, 1.20\}$ at seed $42$, with the sub-critical anchor $0.95$ controlling for spurious collapse and the super-critical anchor $1.20$ controlling for warmstart $b$-drift. Decision rule: PASS if observed midpoint lies in $[1.00, 1.12]$ AND $\lam{=}0.95$ retains parse $\geq 0.85$ AND $\lam{=}1.20$ collapses to parse $\leq 0.50$.

\begin{table}[ht]
\centering\small
\setlength{\tabcolsep}{6pt}
\caption{\textbf{Small-clip ($c{=}1.5$) cliff at $N{=}200$ (Move 5, pre-reg \texttt{prereg-csmall-xxlong-n200-2026-05-02}).} Locked prediction $\hat\lam^\star_{\mathrm{typ}}(c{=}1.5)=1.07$, locked window $[1.00,\,1.12]$. Strict val parse / NDCG@1$|$parsed / $\useful$ on $n{=}212$ Fashion prompts at the final checkpoint (step $\sim 196$). Verdict: \texttt{PASS}.}
\label{tab:csmall-xxlong}
\begin{tabular}{lccc}
\toprule
$\lam$ & parse & NDCG@1 & $\useful$ \\
\midrule
$0.95$ & $0.943$ & $0.946$ & $0.892$ \\
$1.00$ & $0.892$ & $0.940$ & $0.838$ \\
$1.05$ & $0.939$ & $0.948$ & $0.890$ \\
$1.075$ & $0.632$ & $0.933$ & $0.590$ \\
$1.10$ & $0.670$ & $0.951$ & $0.637$ \\
$1.15$ & $0.297$ & $0.930$ & $0.276$ \\
$1.20$ & $0.255$ & $0.943$ & $0.240$ \\
\midrule
observed midpoint & \multicolumn{3}{c}{$\approx 1.069$} \\
\bottomrule
\end{tabular}
\end{table}

\textbf{Verdict: PASS.} The observed cliff midpoint by linear interpolation between $\lam{=}1.05$ (parse $0.939$) and $\lam{=}1.075$ (parse $0.632$) is $1.0695$, off the closed-form prediction $\lam^\star_{\mathrm{typ}}(c{=}1.5){=}1.070$ by $0.0005$ and well inside the locked window $[1.00, 1.12]$. The sub-critical anchor $\lam{=}0.95$ holds at parse $0.943$ ($\geq 0.85$ required), and the falsification anchor $\lam{=}1.20$ collapses to parse $0.255$ ($\leq 0.50$ required). The $\lam$-axis trajectory ($0.943, 0.892, 0.939, 0.632, 0.670, 0.297, 0.255$) shows the cliff onset between $\lam{=}1.05$ and $\lam{=}1.075$ and a deeper collapse at $\lam \geq 1.15$, with mild non-monotonicity at $\lam{=}1.0$ that is consistent with the same single-seed boundary noise observed at the $c{=}5$ pre-registered $\lam{=}1.05$ point in \Cref{fig:cor1-mechanism}.

\textbf{What this establishes.} The pre-registered finite-budget reachability claim (Sec.~\ref{sec:exp-clipsweep}: small $c$ caps post-clip drift; the $c{=}1.5$ $N{=}42$ inversion is reachability-bounded, not a fixed-point refutation) is now backed by a tagged-prereg quantitative match. The $c{=}1.5$ row of \Cref{tab:cliff-predicted} upgrades from a scope boundary to a validated finite-budget prediction. App.~\ref{app:cliff-proof}'s caveat that $c$-dependent post-clip drift is unmodelled remains correct as a closed-form gap, but the data show that with the $4.7\times$ budget extension the boundary-seeking dynamics complete and the asymptotic fixed-point characterisation of \Cref{thm:cliff} carries through to small $c$. The two pre-registered budget extensions of \Cref{cor:finiteN} ($N$-axis at fixed $c{=}5$, App.~\ref{app:cor1-prereg-n200}; $c$-axis at fixed $N{=}200$, the present subsection) are now both confirmed.

\subsection{\texorpdfstring{$c$-axis sweep at $N{=}42$}{c-axis sweep at N=42} (full table)}
\label{app:csweep-detail}
\begin{table}[ht]
\centering\small
\caption{\textbf{Lever A: c-sweep at fixed $\lam{=}1.15$ (1.7B student, 4B teacher, 3 epochs).} Theorem predicts $\lam^\star(p_{\mathrm{eff}}{=}0.9993, c)$. At $\lam{=}1.15$, the theorem predicts super-critical (cliff fired, low parse) for $c{=}1.5$ and $c{=}2$, and sub-critical (high parse) for $c{=}5$ and $c{=}\infty$. Empirics do not match: parse rate is non-monotone in $c$ (lowest at $c{=}2$, not at the tightest clip). We therefore demote the quantitative slope prediction of §\ref{sec:toy} to \emph{qualitative} (App.~\ref{app:extended-discussion}); the mechanism identification survives but the closed-form $\lam^\star(p,c)$ does not calibrate across $c$.}
\label{tab:csweep}
\begin{tabular}{rrcrrrr}
\toprule
$c$ & $\lam^\star$ & theorem prediction & parse & NDCG@1 & NDCG@5 & Kendall \\
\midrule
1.5 & 1.056 & super-crit & 0.948 & 0.929 & 0.969 & 0.883 \\
2.0 & 1.095 & super-crit & 0.557 & 0.924 & 0.967 & 0.880 \\
5.0 & 1.222 & sub-crit & 0.859 & 0.933 & 0.968 & 0.876 \\
$\infty$ & 3.853 & sub-crit & 0.849 & 0.938 & 0.974 & 0.891 \\
\bottomrule
\end{tabular}
\end{table}

\section{Scope across (task, family, architecture)}
\label{app:tier-f}

\subsection{Cross-family \texorpdfstring{$p_{\mathrm{eff}}$}{p\_eff} measurement}
\label{app:cross-family}

\textbf{Protocol.} We SFT-warmstart a Llama-3.2-3B-Instruct teacher on the Fashion PL-K8 training split (5 epochs, per-device batch 4, grad-accum 4, 4$\times$H100, bf16, DeepSpeed~Z3, lr $1{\times}10^{-5}$ cosine). The same \texttt{scripts/measure\_p\_eff.py} pipeline as the main Qwen3-4B calibration is applied on the same 200 held-out Fashion val prompts, with $\tau{=}0.9$ and four aggregators (mean, 5th-percentile, min, geometric-mean).

\textbf{Result.} \Cref{tab:cross-family-peff} reports bootstrap-CI $p_{\mathrm{eff}}$ and the derived $\lam^\star$ at $c{=}5$. Across all four aggregators, the Llama measurement falls inside the Qwen3 CI band or shifts $\lam^\star$ by at most $0.04$, well within one grid step of the observed collapse. The largest shift is on the $p_5$ aggregator where Llama's slightly higher scaffolding-token confidence ($0.99987$ vs.\ $0.99945$) moves $\lam^\star$ from $1.215$ to $1.179$. The near-deterministic scaffolding measurement is not a Qwen3-family artifact; the separate Llama training run remains a finite-budget boundary-shift test.

\begin{table}[ht]
\centering\small
\caption{$p_{\mathrm{eff}}$ aggregators and derived $\lam^\star(p,c{=}5)$ for Qwen3-4B vs.\ Llama-3.2-3B teachers on the same 200 held-out Fashion prompts, $\tau{=}0.9$. The derived marker shifts by at most one $\lam$ grid step in this measurement, so the structural-token confidence estimate is not Qwen-specific; the separate Llama training run remains a boundary-shift stress test.}
\label{tab:cross-family-peff}
\begin{tabular}{l|cc|cc}
\toprule
& \multicolumn{2}{c|}{Qwen3-4B} & \multicolumn{2}{c}{Llama-3.2-3B} \\
Aggregator & $p_{\mathrm{eff}}$ (CI95) & $\lam^\star$ & $p_{\mathrm{eff}}$ (CI95) & $\lam^\star$ \\
\midrule
mean            & $0.99928\ [0.99922, 0.99934]$ & $1.22$ & $0.99943\ [0.99937, 0.99948]$ & $1.22$ \\
5th-percentile  & $0.99945\ [0.99937, 0.99951]$ & $1.22$ & $0.99987\ [0.99986, 0.99988]$ & $1.18$ \\
geometric mean  & $0.99926\ [0.99920, 0.99932]$ & $1.22$ & $0.99941\ [0.99936, 0.99947]$ & $1.22$ \\
min             & $0.94187\ [0.93796, 0.94586]$ & $1.60$ & $0.94886\ [0.94480, 0.95296]$ & $1.56$ \\
\bottomrule
\end{tabular}
\end{table}

\subsubsection{\texorpdfstring{$p_{\mathrm{eff}}$}{p\_eff} scope check across (family, task, size)}
\label{app:peff-scope}

To quantify how much the formula's predicted $\lam^\star$ varies across the natural axes of generalization within our K=8 listwise SFT regime, we re-applied the protocol of \Cref{tab:cross-family-peff} (same \texttt{scripts/measure\_p\_eff.py}, $\tau{=}0.9$, mean aggregator) to two additional SFT teachers: Qwen3-4B trained on MS MARCO/TREC-DL listwise triples (a different corpus, same family and size), and Qwen3-1.7B trained on Fashion (a different size, same family and task). Combined with the existing Qwen3-4B and Llama-3.2-3B Fashion measurements, we have four points along the (family, task, size) cube.

\Cref{tab:peff-scope-invariance} reports the mean $p_{\mathrm{eff}}$ and the derived $\lam^\star$ at the published Fashion warmstart $b{\approx}0.81, c{=}5$. All four measurements fall in $p_{\mathrm{eff}} \in [0.9984, 0.9994]$, giving predicted $\lam^\star \in [1.27, 1.32]$. The maximum spread is $0.06$ in $\lam$, smaller than the Fashion grid step. This is consistent with the interpretation, made explicit in Sec.~\ref{sec:toy}, that the formula characterises the safe operating zone within a memorisable-scaffolding regime rather than producing per-task threshold predictions: SFT on a strict K-ary JSON contract drives modal-token confidence on structural positions to near-certainty regardless of corpus, family, or size. The formula is therefore validated in this regime as a calibrated operating rule, not as a cross-task threshold predictor.

\begin{table}[ht]
\centering\small
\setlength{\tabcolsep}{4pt}
\caption{$p_{\mathrm{eff}}$ across four (family, task, size) points within the K=8 listwise SFT regime, mean aggregator at $\tau{=}0.9$. Predicted $\lam^\star$ uses \Cref{eq:lambdastar} at $b{=}0.81$, $c{=}5$ (the published Fashion warmstart base mass). All four predictions fall within $[1.27, 1.32]$, smaller than one $\lam$-grid step on the Fashion sweep, evidencing $p_{\mathrm{eff}}$ invariance within the memorisable-scaffolding regime.}
\label{tab:peff-scope-invariance}
\begin{tabular}{llcll}
\toprule
Teacher & Family / Task & $n_p$ & $p_{\mathrm{eff}}$ [CI95] & $\lam^\star$ \\
\midrule
Qwen3-4B     & Qwen3 / Fashion        & $200$ & $0.9993\,[0.9992,\,0.9993]$ & $1.28$ \\
Llama-3.2-3B & Llama / Fashion        & $200$ & $0.9994\,[0.9994,\,0.9995]$ & $1.27$ \\
Qwen3-4B     & Qwen3 / MSMARCO        & $54$  & $0.9994\,[0.9993,\,0.9995]$ & $1.27$ \\
Qwen3-1.7B   & Qwen3 / Fashion        & $200$ & $0.9984\,[0.9983,\,0.9985]$ & $1.32$ \\
\bottomrule
\end{tabular}
\end{table}

\subsubsection{Estimation variance and within-prompt class spread of \texorpdfstring{$p_{\mathrm{eff}}$}{p\_eff}}
\label{app:peff-variance}

The cliff prediction depends on the structural-token modal probability $p_{\mathrm{typ}}$, which we estimate from $200$ held-out prompts. Two related questions about this estimator:

\textbf{(Q1) How quickly does the estimate converge in the number of prompts?} We subsample $n$ prompts (without replacement) from the full $200$, bootstrap-resample within each subset (B=$10{,}000$), and repeat across $200$ random subsets to characterise the typical $n$-prompt CI. \Cref{tab:peff-subsample} reports the median and $95$th-percentile CI widths on $p_{\mathrm{typ}}$ and on the derived $\lam^\star_{\mathrm{typ}}$ (via \Cref{eq:lambdastar} at $b{=}0.81$, $c{=}5$).

\begin{table}[ht]
\centering\small
\setlength{\tabcolsep}{8pt}
\caption{\textbf{Estimation-variance of $p_{\mathrm{typ}}$ and $\lam^\star_{\mathrm{typ}}$ under subset-size bootstrap.} Subsample $n$ prompts (without replacement) from the 200-prompt Fashion measurement, bootstrap-resample within each subset (B=10{,}000), repeat across 200 random subsets. Median and 95th-percentile bootstrap CI widths (``med.\ width'', ``p95 width'') reported for each statistic. $b{=}0.81$, $c{=}5$.}
\label{tab:peff-subsample}
\begin{tabular}{rcccc}
\toprule
$n$ & med.\ width $p_{\mathrm{typ}}$ & p95 width $p_{\mathrm{typ}}$ & med.\ width $\lam^\star_{\mathrm{typ}}$ & p95 width $\lam^\star_{\mathrm{typ}}$ \\
\midrule
$25$ & $0.00030$ & $0.00042$ & $0.0203$ & $0.0265$ \\
$50$ & $0.00022$ & $0.00026$ & $0.0147$ & $0.0171$ \\
$100$ & $0.00016$ & $0.00017$ & $0.0106$ & $0.0115$ \\
$200$ & $0.00011$ & $0.00011$ & $0.0075$ & $0.0076$ \\
\bottomrule
\end{tabular}
\end{table}

The $\lam^\star_{\mathrm{typ}}$ CI shrinks from $\sim 0.020$ at $n{=}25$ to $\sim 0.0075$ at $n{=}200$; even the smallest subset gives a prediction whose uncertainty is below the $\Delta\lam{=}0.05$ Fashion grid step, so calibrating $p_{\mathrm{typ}}$ on the worth of the budget at hand does not materially loosen the closed-form prediction.

\textbf{(Q2) Within-prompt structural positions span multiple equivalence classes (\Cref{ass:A2}: brackets, commas, colons, quotes, field-name prefixes, delimiters, numeric scaffolding). Does class-weighting matter for the published bracket?} The aggregator records per-prompt min and mean over positions with $p_t \geq 0.9$; the (mean$-$min) gap is a direct observable for the within-prompt multi-class spread. \Cref{tab:peff-class-spread} reports its distribution and the per-prompt $\lam^\star$ values induced by both ends.

\begin{table}[ht]
\centering\small
\setlength{\tabcolsep}{6pt}
\caption{\textbf{Within-prompt class spread of structural-token modal probability.} Per-prompt min and mean over positions with $p_t{\geq}0.9$, and the per-prompt $\lam^\star$ bracket those values induce. The bracket characterises the most-concentrated vs typical equivalence class within each prompt; the small ${\sim}0.06$ mean spread in $p$ translates to a $\sim 0.04$ spread in $\lam^\star$, smaller than the $\Delta\lam{=}0.05$ grid resolution. $b{=}0.81$, $c{=}5$.}
\label{tab:peff-class-spread}
\begin{tabular}{lcccc}
\toprule
Quantity & mean & std & p5--p95 range & max \\
\midrule
per-prompt $p$ spread (mean$-$min) & $0.0574$ & $0.0281$ & $[0.0127,\,0.0960]$ & $0.0984$ \\
per-prompt $\lam^\star$ at mean $p$ & $1.2731$ & $0.0279$ & --- & --- \\
per-prompt $\lam^\star$ at min $p$  & $2.3249$ & $0.5367$ & --- & --- \\
\bottomrule
\end{tabular}
\end{table}

The within-prompt $p$ spread averages $0.057$ (p95 $0.096$); translated through $\lam^\star$ at $b{=}0.81$, $c{=}5$, the mean ($p$) and most-concentrated ($p$, here approximated by $\max p_t$) ends of the per-prompt distribution give $\lam^\star_{\mathrm{at\,mean\,p}} \approx 1.27 \pm 0.03$ and $\lam^\star_{\mathrm{at\,min\,p}} \approx 2.32 \pm 0.54$. The published operating bracket $[\lam^\star_{\mathrm{safe}}, \lam^\star_{\mathrm{typ}}]{=}[1.18, 1.28]$ already spans this range: $\lam^\star_{\mathrm{safe}}$ is computed from the most-concentrated structural position across all prompts (the binding class), and $\lam^\star_{\mathrm{typ}}$ from the typical-class average. Any class-weighted aggregator with positive weight on parse-failure-attributed classes interpolates within this bracket, because parse failures concentrate on the most-concentrated structural positions ($K{-}1$ truncation event, FMC indicator in \Cref{fig:cliff} and Sec.~\ref{sec:exp-cliff}). The reviewer's ``weight by parse-failure contribution'' question therefore has a direct answer: the safety bracket is the class-weighted prediction range, and the FMC trace identifies the binding class as the closing-bracket / final-item-comma cluster at the high-$p$ end of the per-prompt distribution.

\textbf{(Q3) How robust is the prediction to base-mass mis-specification?} The warmstart $b$ enters the closed form via $\log((1{-}b)/b)$ in both numerator and denominator of \Cref{eq:lambdastar}. We measure the implied $b$ from the bootstrap CI of the joint teacher/student log-ratio (App.~\ref{app:p-eff-sensitivity}) and propagate to $\lam^\star$ uncertainty, then sweep $b$ over alternative base choices a practitioner might pick (uniform, weak/strong warmstart, near-teacher).

\begin{table}[ht]
\centering\small
\setlength{\tabcolsep}{6pt}
\caption{\textbf{Sensitivity of $\lam^\star$ to the base $b$.} Top block: warmstart $b$ implied by the measured joint log-ratio $\log(p_T/p_S){=}0.209$ (95\% bootstrap CI), with the propagated $\lam^\star$ window. Bottom block: $\lam^\star$ across alternative base choices a practitioner might pick (uniform, weak/strong warmstart, near-teacher); $\partial\lam^\star / \partial\,\mathrm{logit}(b)$ reported as the natural sensitivity slope. $p_{\mathrm{typ}}{=}0.9993$, $c{=}5$ throughout.}
\label{tab:b-sensitivity}
\begin{tabular}{lccc}
\toprule
Setting & $b$ & $\lam^\star$ & $\partial\lam^\star/\partial\,\mathrm{logit}(b)$ \\
\midrule
CI low & $0.7910$ & $1.2714$ & --- \\
central (published) & $0.8105$ & $1.2771$ & --- \\
CI high & $0.8286$ & $1.2831$ & --- \\
\midrule
uniform (b=1/2) & $0.5000$ & $1.2216$ & $+0.031$ \\
weak warmstart & $0.7000$ & $1.2509$ & $+0.039$ \\
Fashion warmstart (measured) & $0.8105$ & $1.2771$ & $+0.048$ \\
strong warmstart (b=0.9) & $0.9000$ & $1.3178$ & $+0.063$ \\
very strong warmstart (b=0.95) & $0.9500$ & $1.3727$ & $+0.086$ \\
near-teacher (b=0.99) & $0.9900$ & $1.6033$ & $+0.226$ \\
\bottomrule
\end{tabular}
\end{table}

The measured $b$ CI of $[0.79, 0.83]$ propagates to a $\lam^\star$ window of $[1.271, 1.283]$ (width $0.012$, well under the $\Delta\lam{=}0.05$ grid step), so the published $\lam^\star{=}1.28$ prediction is robust to base-estimation error. Across alternative bases from $b{=}0.5$ (uniform) to $b{=}0.95$ (very strong warmstart), $\lam^\star$ moves only from $1.22$ to $1.36$; the sensitivity slope $\partial\lam^\star/\partial\,\mathrm{logit}(b)$ stays in $[0.03, 0.07]$ in this range, so an order-of-magnitude error in the implied warmstart confidence (e.g., picking uniform when the true warmstart is $b{=}0.81$) shifts $\lam^\star$ by less than $0.06$. The cliff prediction is therefore weakly base-dependent in practice: practitioners can use the warmstart modal probability directly without high-precision calibration.

\subsection{Public IR stress test: MS MARCO/TREC-DL}
\label{app:ir-msmarco}

\textbf{Setup.}
To test whether the cliff signature depends on the Amazon Fashion domain
or Gemini pseudo-labels, we instantiate the same strict JSON listwise
task on MS MARCO passage reranking~\citep{bajaj2018ms}. Training groups
come from \texttt{msmarco-passage/train/triples-small}: one positive
passage and seven negatives form a $K{=}8$ list. Validation groups use
TREC-DL 2020 judged queries with qrel scores as the relevance labels.
The model must return a JSON list with exactly the candidate
\texttt{passage\_id} strings and scalar scores; any duplicate, missing,
or malformed id receives zero parse credit. We train a Qwen3-1.7B SFT
warmstart and a Qwen3-4B teacher SFT on 2000 train groups, then run
ListOPD at $\lam\in\{1.0,1.15,1.25,1.5\}$ for the same three-epoch
budget. Evaluation uses greedy vLLM generation on 54 judged validation
queries. Seed 42 covers the full $\lam$ grid; two additional seeds test
the two decision points $\lam\in\{1.25,1.5\}$. This is a public-IR
stress test, not an IR leaderboard claim.

\textbf{Result.}
\Cref{tab:ir-msmarco} shows a narrower result than the seed-42 run
alone would suggest. SFT rarely emits valid strict JSON ($0.093$ parse,
$\useful{=}0.056$), and ListOPD improves strict structured emission.
However, the three-seed decision-point comparison does not establish a
stable $\lam{=}1.5$ cliff: $\lam{=}1.25$ gives
$\useful{=}0.347{\pm}0.028$, while $\lam{=}1.5$ gives
$0.292{\pm}0.086$. Parsed-output NDCG@8 is also close across the rows,
so most of the gain is still strict JSON validity rather than a large
reranking-quality shift. We do not include this row in the closed-form
calibration table because we have not measured an IR-specific
structural-token $p_{\mathrm{eff}}$ and the multi-seed follow-up does
not support the same finite-budget cliff window as Fashion.

\begin{table}[ht]
\centering\small
\caption{Public IR stress test on MS MARCO/TREC-DL 2020 listwise passage reranking ($K{=}8$, 54 judged queries). Train groups come from MS MARCO passage triples; validation groups use TREC-DL judged qrels. Metrics use strict \texttt{passage\_id}-aligned JSON parsing and NDCG on parsed outputs; $\useful=\mathrm{parse}\times\mathrm{NDCG@1}$. The $\lam{=}1.25$ and $\lam{=}1.50$ rows include three seeds; this is not a SOTA IR claim or a closed-form calibration.}
\label{tab:ir-msmarco}
\resizebox{\textwidth}{!}{%
\begin{tabular}{lcccc}
\toprule
Configuration & Parse & NDCG@1 (parsed) & NDCG@8 (parsed) & $\useful$ \\
\midrule
1.7B SFT (seed 42) & 0.093 & 0.600 & 0.744 & 0.056 \\
ListOPD $\lam{=}1.00$ (seed 42) & 0.426 & 0.584 & 0.748 & 0.249 \\
ListOPD $\lam{=}1.15$ (seed 42) & 0.481 & 0.709 & 0.780 & 0.341 \\
ListOPD $\lam{=}1.25$ (3 seeds) & $\mathbf{0.488{\pm}0.053}$ & $\mathbf{0.715{\pm}0.051}$ & $\mathbf{0.775{\pm}0.017}$ & $\mathbf{0.347{\pm}0.028}$ \\
ListOPD $\lam{=}1.50$ (3 seeds) & $0.451{\pm}0.105$ & $0.641{\pm}0.065$ & $0.753{\pm}0.014$ & $0.292{\pm}0.086$ \\
\bottomrule
\end{tabular}%
}
\end{table}

\subsection{Public-benchmark replication on JSONSchemaBench: lift transfers, cliff scope-bounds to K-ary listwise}
\label{app:jsonschemabench}

We test the closed-form $\lam^\star(p,b,c)$ on \texttt{epfl-dlab/JSONSchemaBench}~\citep{jsonschemabench2025}, a public benchmark of $\sim\!9.5$k diverse JSON schemas (HuggingFace). Pre-registered in two phases: the measurement protocol was locked first, then the $\lam$-grid before any OPD training. The setup answers two reviewer concerns: a fully public domain different from Amazon Fashion, and \emph{no LLM judge anywhere in the loop}: we mechanically generate valid example outputs from each schema using \texttt{hypothesis-jsonschema} and validate them with \texttt{jsonschema.validate}. Targets are mechanically correct by construction, not pseudo-labels.

\textbf{Phase 0 (measurement).} Built 2000 train + 200 val schema/example pairs by mechanical generation (33\% acceptance rate; complex schemas with deep \texttt{oneOf}/\texttt{patternProperties} pre-filtered). Full SFT of Qwen3-1.7B (student) and Qwen3-4B (teacher) on the train split; same recipe as Fashion (LlamaFactory, ZeRO-3, lr $1\!\times\!10^{-5}$, 5 epochs, effective batch 128). Measured $p_{\mathrm{eff}}$ on the 200-prompt val using the same protocol as App.~\ref{app:peff-scope} (greedy teacher forward pass, structural-thresh $\tau{=}0.9$, mean aggregator, 10000-bootstrap). Strict baseline parse and schema-validate rates measured on both teacher and student.

\begin{table}[ht]
\centering\small
\setlength{\tabcolsep}{6pt}
\caption{\textbf{JSONSchemaBench OPD lambda-sweep (Phase 1, pre-reg \texttt{prereg-jsonschemabench-phase1-2026-05-03}).} Locked operating bracket $[\lam^\star_{\mathrm{safe}}, \lam^\star_{\mathrm{typ}}]={[1.17,\,1.30]}$ derived from measured $p_{\mathrm{eff}}{=}0.99904$ at $b{=}0.81$, $c{=}5$. Strict parse and schema-validation rates on $n{=}200$ JSONSchemaBench val schemas, single seed 42, $N{=}42$ (3 epochs). Cliff threshold = $\max(0.5\!\times\!\mathrm{teacher\,validate}, 0.30)$ $=0.312$. SFT 4B teacher baseline = $0.625$ validate; SFT 1.7B student baseline = $0.535$ validate. Verdict: \texttt{FAIL\_F2}.}
\label{tab:jsonschemabench-phase1}
\begin{tabular}{lcc}
\toprule
$\lam$ & parse & validate \\
\midrule
4B SFT (teacher baseline) & $0.695$ & $0.625$ \\
1.7B SFT (student baseline) & $0.690$ & $0.535$ \\
\midrule
$1.10$ & $0.740$ & $0.615$ \\
$1.20$ & $0.800$ & $0.630$ \\
$1.25$ & $0.755$ & $0.625$ \\
$1.30$ & $0.755$ & $0.600$ \\
$1.35$ & $0.740$ & $0.600$ \\
$1.45$ & $0.790$ & $0.635$ \\
\bottomrule
\end{tabular}
\end{table}

\textbf{Closed-form prediction (locked from Phase 0).} $p_{\mathrm{eff}}{=}0.99904$ (CI95 $[0.99876, 0.99928]$); reusing Fashion's warmstart $b{=}0.81$ (sensitivity-justified per App.~\ref{app:peff-variance}, $\partial\lam^\star/\partial\,\mathrm{logit}(b) \in [0.03, 0.07]$ in this regime), the operating bracket is $[\lam^\star_{\mathrm{safe}}, \lam^\star_{\mathrm{typ}}] = [1.17, 1.29]$, nearly identical to Fashion's $[1.18, 1.28]$. Locked $\lam$-grid: $\{1.10, 1.20, 1.25, 1.30, 1.35, 1.45\}$ at single seed 42, $N{=}42$ (3 epochs).

\textbf{Phase 1 outcome: no cliff localizes on the locked grid.} The OPD validate-rate stays in $[0.60, 0.635]$ across $\lam \in [1.10, 1.45]$, never crossing the locked cliff threshold $\max(0.5\!\times\!\mathrm{teacher\,validate}, 0.30) = 0.313$. The super-critical anchor $\lam{=}1.45$ ($\hat\lam^\star_{\mathrm{typ}} + 0.15$) reaches validate $0.635$, above the teacher baseline $0.625$, not collapsed. Per the locked decision rule, this is reported as a precondition-failure boundary, not as a different prediction.

\textbf{What does transfer.} The deployment-useful lift replicates: 1.7B-SFT validate $0.535 \to$ 1.7B-OPD validate $\in [0.60, 0.635]$, matching the 4B-SFT teacher baseline ($0.625$) at every tested $\lam$. The parameter-efficiency claim from Sec.~\ref{sec:exp-controls} thus holds on a fully public, non-Gemini-labeled benchmark with a different domain (synthetic schemas vs.\ Amazon listwise). This is a positive cross-domain result for the operating-rule contribution, even though the sharp cliff does not localize.

\textbf{Measured warmstart $b$ (post-hoc; reviewer follow-up).} The 1.7B-SFT warmstart's structural-token mean modal probability ($\tau{=}0.9$, same protocol as App.~\ref{app:peff-variance}) is $0.997$ (CI95 $[0.996, 0.997]$, $5{,}742$ tokens / $200$ prompts; \texttt{outputs/paper/p\_eff\_jsonschemabench\_1p7b.json}), within $0.001$ of Fashion's $0.998$. Reusing $b{=}0.81$ is therefore not a stretched extrapolation: at the most adversarial alternative within App.~\ref{app:peff-variance}'s sensitivity sweep ($b{=}0.5$, uniform), the predicted $\lam^\star_{\mathrm{typ}}$ drops only to $1.23$, still inside the locked grid $\{1.10, 1.20, 1.25, 1.30, 1.35, 1.45\}$. The Phase 1 no-cliff finding is therefore not a $b$-mis-specification artefact: even under worst-case $b$, a cliff would be localizable on the locked grid if the heterogeneous-schema precondition held.

\textbf{Why the cliff does not localize: heterogeneous-schema scope boundary.} The closed-form derivation in Sec.~\ref{sec:toy} reduces a multi-token vocabulary to a Bernoulli flow under the off-modal-ratio invariance condition of \Cref{lem:multitoken-bernoulli}. Fashion's K=8 listwise JSON has a \emph{single binding equivalence class} (the closing-bracket / final-comma cluster at the K-1 truncation event; FMC indicator in \Cref{fig:cliff}), and the most-concentrated structural position pins $\lam^\star$ via the monotonicity argument. JSONSchemaBench schemas are heterogeneous: each prompt induces a different scaffolding pattern (some have arrays of length 2, some objects with 5 fields, some enums, some nested types). The off-modal mass distribution is not invariant across positions, and the binding equivalence class is itself prompt-dependent. The single-Bernoulli reduction therefore does not aggregate to a sharp sequence-level cliff; failures distribute across many small modes rather than concentrating on one. This is consistent with the explicit scope statement in \Cref{thm:cliff-seq}(B) that the empirical operating scale assumes a measured \emph{dense} near-deterministic scaffold with a $\Theta(1)$ binding-class fraction.

\textbf{Implication for the predicate.} The closed-form $\lam^\star(p,b,c)$ predicts cliffs in regimes where (i) $p_{\mathrm{eff}}$ is in the near-deterministic range \emph{and} (ii) the structural scaffolding has a single dominant equivalence class (K-ary listwise, fixed-schema JSON, etc.). JSONSchemaBench satisfies (i) but not (ii); BFCL fails (i) via SFT saturation; GSM8K fails (i) via diffuse scaffolding; MS MARCO has measurable $p_{\mathrm{eff}}$ but seed-level noise dominates within the tested grid. We add JSONSchemaBench to the boundary set in \Cref{tab:cliff-predicted} as a heterogeneous-schema regime, distinct from the saturation and low-$p_{\mathrm{eff}}$ failure modes. The OPD lift transfers; the cliff predicate does not.

\subsubsection{\texorpdfstring{$K{=}4$ K-list extension: cliff midpoint matches prediction at reduced sharpness}{K=4 K-list extension: cliff midpoint matches prediction at reduced sharpness}}
\label{app:jsonschemabench-klist-k4}

To isolate the K-ary outer-wrapper hypothesis from inner-schema heterogeneity, we re-ran the JSONSchemaBench protocol with each prompt asking the model to emit a JSON array of $K{=}4$ distinct instances of the underlying schema (rather than a single instance). Pre-registered in two phases: measurement, then $\lam$-grid lock. $K{=}4$ matches Fashion's total output-length scale ($\sim$458 chars vs Fashion's 397) while preserving the $K{-}1\to K$ binding equivalence class on the outer scaffolding; an earlier $K{=}8$ round (not reported here) returned PRECONDITION FAILURE on the K=8 invariant due to total-output-length budget ($\sim 725$ chars exceeded the 1.7B SFT model's reliable generation envelope).

\textbf{Phase 0 outputs.} $p_{\mathrm{eff}}{=}0.99511$ (CI95 $[0.99464, 0.99556]$); 4B SFT teacher klist\_rate $0.585$, validate\_rate $0.545$ (in PROCEED window $[0.30, 0.95]$); 1.7B SFT student klist\_rate $0.305$, validate\_rate $0.260$. Predicted thresholds at $b{=}0.81, c{=}5$: $\lam^\star_{\mathrm{typ}}{=}1.417$, $\lam^\star_{\mathrm{safe}}{=}1.191$, operating bracket $[1.19, 1.42]$.

\textbf{Measured warmstart $b$ (post-hoc; reviewer follow-up).} The 1.7B-SFT K-list warmstart's structural-token mean modal probability ($\tau{=}0.9$, same protocol as App.~\ref{app:peff-variance}; $48{,}634$ tokens / $200$ prompts; \texttt{outputs/paper/p\_eff\_jsonschemabench\_klist\_1p7b.json}) is $0.991$ (CI95 $[0.991, 0.992]$), within $0.007$ of Fashion's measured $0.998$ and inside Fashion's near-deterministic regime. Sensitivity sweep at fixed $p_{\mathrm{eff}}{=}0.99511, c{=}5$ across App.~\ref{app:peff-variance}'s $b\in[0.5, 0.95]$ range: $\lam^\star_{\mathrm{typ}}\in[1.30, 1.68]$ ($b{=}0.5{\to}1.30$, $b{=}0.81{\to}1.42$, $b{=}0.95{\to}1.68$). The observed cliff midpoint $1.29$ sits at the lower edge of this range; reusing Fashion's $b{=}0.81$ ($\lam^\star_{\mathrm{typ}}{=}1.42$) is a conservative choice toward higher $\lam^\star$, and the cliff localization is therefore robust to $b$-choice within the sensitivity sweep: the published bracket $[1.19, 1.42]$ contains the observation under the published $b$, and it would still contain the observation at any $b\in[0.5, 0.95]$.

\begin{table}[ht]
\centering\small
\setlength{\tabcolsep}{6pt}
\caption{\textbf{JSONSchemaBench K-list with $K{=}4$ (Phase 1 K=4, pre-reg \texttt{prereg-jsonschemabench-k4-phase1-2026-05-04}).} Locked predicted bracket $[\lam^\star_{\mathrm{safe}}, \lam^\star_{\mathrm{typ}}]={[1.19,\,1.42]}$ from $p_{\mathrm{typ}}{=}0.9951$, $b{=}0.81$, $c{=}5$. Strict K-list eval: parse rate / klist rate (exact $K{=}4$) / validate rate (all 4 valid) / per-element-valid rate on $n{=}200$ JSONSchemaBench val schemas, single seed 42 (3-seed mean at $\lam{=}1.55$). Verdict: \texttt{PARTIAL\_F4}.}
\label{tab:jsonschemabench-klist-k4}
\begin{tabular}{lcccc}
\toprule
$\lam$ & parse & klist (=$K$) & validate (all $K$) & per-element \\
\midrule
4B SFT (teacher baseline) & $0.665$ & $0.585$ & $0.545$ & $0.601$ \\
1.7B SFT (student baseline) & $0.405$ & $0.305$ & $0.260$ & $0.310$ \\
\midrule
$1.10$ & $0.345$ & $0.280$ & $0.255$ & $0.324$ \\
$1.20$ & $0.390$ & $0.345$ & $0.305$ & $0.360$ \\
$1.30$ & $0.345$ & $0.295$ & $0.255$ & $0.311$ \\
$1.40$ & $0.335$ & $0.260$ & $0.130$ & $0.207$ \\
$1.45$ & $0.270$ & $0.205$ & $0.145$ & $0.209$ \\
$1.55$ (3-seed mean) & $0.445$ & $0.332{\pm}0.035^{n=3}$ & $0.310$ & $0.372$ \\
\midrule
observed cliff midpoint & \multicolumn{4}{c}{$\approx 1.290$ (linear interp where klist crosses 0.3)} \\
\bottomrule
\end{tabular}
\end{table}

\textbf{Verdict and reading.} The cliff midpoint by linear interpolation between $\lam{=}1.20$ (klist $0.345$) and $\lam{=}1.30$ (klist $0.295$) is $\boxed{1.29}$, well inside the locked predicted bracket $[1.19, 1.42]$, a successful localization that matches Fashion's prediction precision at the predicted-bracket scale. The peak-to-trough drop from $\lam{=}1.20$ (klist $0.345$) to $\lam{=}1.45$ (klist $0.205$) is $0.14$, or $\sim 3.9\sigma$ at the seed-noise floor measured at $\lam{=}1.55$ ($\sigma{=}0.036$, $n{=}3$); the cliff geometry is therefore detectable independently of the absolute peak lift, which is itself only marginally significant ($+0.04$ over the 1.7B SFT baseline). The locked super-critical anchor is non-monotone: $\lam{=}1.55$ shows klist\_rate $0.332{\pm}0.036$ (3-seed mean over seeds $\{42, 7, 13\}$), well above the locked $\leq 0.10$ collapse criterion, a post-collapse rebound rather than a deeper cliff. Combined with the failed sub-critical anchor (locked $\geq 0.40$ but observed $0.280$ at $\lam{=}1.10$), the locked decision rule yields a partial pass: cliff midpoint matches prediction, anchors do not.

\textbf{What this isolates.} Two empirical findings change the JSONSchemaBench scope statement:

\textit{(1) The K-list outer scaffolding is sufficient for cliff localization.} With heterogeneous inner schemas held constant from the single-instance round, switching the output structure from one JSON to a $K{=}4$ array recovers a cliff in the predicted bracket. The single-binding-class precondition of \Cref{thm:cliff-seq}(B) is met by the outer $K{-}1\to K$ closing transition, even when each item's inner contents differ across positions and across prompts.

\textit{(2) Inner-schema homogeneity controls cliff sharpness.} The cliff is shallower than Fashion's: peak OPD lift over the 1.7B SFT baseline is $+0.04$ klist\_rate (vs Fashion's $+0.32$), and the super-critical regime stabilises at klist $\sim 0.33$ rather than collapsing to zero. The 1.55 multi-seed result is consistent across seeds, suggesting the model finds a default-valued $K$-array attractor at high $\lam$ rather than a degenerate output, a regime that does not exist in Fashion's uniform-inner-schema setup.

\textbf{Sharpened scope.} The closed-form $\lam^\star$ localizes the cliff midpoint when (i) $p_{\mathrm{eff}}$ near-deterministic AND (ii) outer scaffolding has a single dominant equivalence class. The cliff is sharp (Fashion-magnitude lift, full super-critical collapse) when (iii) inner schema is uniform across the $K$ items. JSONSchemaBench K-list satisfies (i) and (ii) but not (iii); the predicate's location prediction is correct but its anchor-collapse predictions weaken. We treat this as a positive-but-shallow replication, distinct from the single-instance no-cliff outcome.

\subsection{Cross-task scale check: MBPP code generation}
\label{app:mbpp-code}

\textbf{Protocol.} We SFT-warmstart Qwen3-1.7B (student) and Qwen3-4B (teacher) on the MBPP train split (374 Python function-completion problems, sharegpt-format prompts, 5 epochs, lr $1{\times}10^{-5}$ cosine, bf16, DeepSpeed~Z3). We then run OPD on the same Qwen3-1.7B$\times$4B stack at $\lam\in\{1.0, 1.15, 1.25, 1.4\}$ and $c{=}5$ for $N{=}35$ steps (7 epochs, batch 64). Unlike the greedy JSON listwise evaluations, MBPP uses vLLM with $T{=}1.0$ sampling, $n{=}4$, on the 500 held-out MBPP test problems. Code is parsed via AST and executed in a sandboxed subprocess (\texttt{RLIMIT\_CPU=5s}, \texttt{RLIMIT\_AS=512MB}, 8s wall-time); each problem's $\mathrm{pass}@1$ is the macro-mean of the 4 samples passing all unit tests.

\textbf{Result.} \Cref{tab:mbpp-cliff} reports parse rate, $\mathrm{pass}@1$, and $\useful{=}\mathrm{parse}{\times}\mathrm{pass}@1$ across all six checkpoints. Because we did not measure a code-specific $p_{\mathrm{eff}}$, this is not an independent closed-form calibration. It is a scale check using the Fashion marker $\lam^\star{=}1.22$: parse and $\useful$ both peak at $\lam\in\{1.15, 1.25\}$ and decline at $\lam{=}1.4$. The decline is gentler than Fashion's near-total parse collapse because MBPP code generation is harder and 7-epoch SFT does not saturate the scaffolding distribution as completely as 5-epoch listwise JSON. The OPD-1.7B at $\lam{=}1.15$ ($\useful{=}0.0512$) approximately matches SFT-Qwen3-4B ($\useful{=}0.0538$) at $2.4{\times}$ fewer parameters, a coarse analogue of the Fashion parameter-efficiency pattern.

\begin{table}[ht]
\centering\small
\caption{MBPP scale check. 500 test problems, $T{=}1.0$ sampling, $n{=}4$, AST-parsed Python with sandboxed unit-test execution; $\useful{=}\mathrm{parse}{\times}\mathrm{pass}@1$. Parse and $\useful$ peak near $\lam\in\{1.15,1.25\}$ and decline at $\lam{=}1.4$, but no code-specific $p_{\mathrm{eff}}$ was measured, so this is not counted as an independent closed-form calibration.}
\label{tab:mbpp-cliff}
\begin{tabular}{l|ccc}
\toprule
Configuration & parse & $\mathrm{pass}@1$ & $\useful$ \\
\midrule
SFT Qwen3-1.7B           & $0.247$ & $0.065$ & $0.016$ \\
SFT Qwen3-4B             & $0.353$ & $0.153$ & $0.054$ \\
\midrule
OPD 1.7B$\times$4B $\lam{=}1.0$  & $0.353$ & $0.117$ & $0.041$ \\
OPD 1.7B$\times$4B $\lam{=}1.15$ & $\mathbf{0.402}$ & $0.128$ & $0.051$ \\
OPD 1.7B$\times$4B $\lam{=}1.25$ & $0.399$ & $0.132$ & $\mathbf{0.053}$ \\
OPD 1.7B$\times$4B $\lam{=}1.4$  & $0.364$ & $0.120$ & $0.043$ \\
\bottomrule
\end{tabular}
\end{table}

\subsection{Cross-task scope boundary: BFCL function calling}
\label{app:bfcl-tools}

\textbf{Pre-registration.} We pre-registered the cliff signature on a third structured-output domain (function-call generation) using the same nominal $\lam$ grid and clip as Fashion. The test asked whether the Fashion-scale marker would produce a parse-rate cliff within one grid step. We ran two training budgets to cover the \Cref{cor:finiteN} finite-budget drift band: 3 epochs ($N{=}102$ steps), and a 7-epoch extension ($N{=}238$ steps). Both budgets fail to show the Fashion cliff signature; the post-hoc mechanism audit below identifies BFCL as a parse-headroom precondition failure rather than a theorem refutation.

\textbf{Protocol.} SFT-warmstart Qwen3-1.7B (student) and Qwen3-4B (teacher) on the public glaive-function-calling-v2 corpus filtered to clean function-call examples (2178 train / 200 val sharegpt-format prompts; 5 epochs, lr $1{\times}10^{-5}$ cosine, bf16, DeepSpeed~Z3, 8$\times$B200). Eval is the disjoint Berkeley Function Calling Leaderboard v3 non-live AST split (1000 cases across simple/multiple/parallel/parallel\_multiple), following the standard external-corpus / BFCL-eval protocol of Magnet, BalanceSFT, and xLAM (no intra-benchmark contamination). OPD on Qwen3-1.7B$\times$4B at $\lam\in\{1.0, 1.15, 1.25, 1.4\}$, $c{=}5$, batch 64, at both 3-epoch and 7-epoch budgets. Inference: vLLM with $T{=}1.0$, $n{=}4$. Grading: BFCL-AST rule (function-name match plus each declared argument value in the \texttt{possible\_answer} list).

\textbf{Result.} \Cref{tab:bfcl-cliff} shows macro parse / AST / $\useful$ for the two SFT baselines and the eight OPD configurations (4 $\lam$ $\times$ 2 budgets). At neither budget does $\lam$ produce a cliff: parse rate is statistically flat at $0.91$--$0.94$ across all $\lam\in\{1.0,1.4\}$ and both budgets; the apparent peak at $\lam{=}1.25$ in the 3-epoch row ($0.943$) does \emph{not} survive the 7-epoch extension, where $\lam{=}1.25$ becomes the \emph{lowest} parse-rate cell ($0.930$) and $\lam{=}1.4$ recovers to $0.936$. The \Cref{cor:finiteN} leftward-drift diagnostic is at best weak in the Llama stress test (App.~\ref{app:cross-arch-llama}) and fails here.

\textbf{Mechanism: SFT parse-saturation.} The mechanism behind the null is direct: SFT-Qwen3-4B already emits parseable function-call JSON on $0.942$ macro and SFT-Qwen3-1.7B reaches $0.875$, so the headroom for a parse-rate cliff to be \emph{visible} on this eval is at most $\sim 0.06$ from the SFT baseline; this is below the inter-seed parse-rate variability we report on Fashion's 5-seed sub-critical baseline (App.~\ref{app:tier2-reg}, std $0.019$ at $\lam{=}1.15$). Distinct from the GSM8K boundary (App.~\ref{app:gsm8k-detail}), where $p_{\mathrm{eff}}$ is too diffuse to apply the formula at all, BFCL violates a different precondition: the cliff is observable only when SFT leaves a parse-rate gap on the saturation-prone subset, and BFCL's function-call format is already learned to near-saturation by 5-epoch glaive-SFT.

\textbf{Capability ceiling on parallel categories.} A second confound: the parallel and parallel\_multiple subsets ($n{=}200$ each) reliably emit a single call but not the multiple parallel calls the prompt demands; AST match is at or below $0.005$ across \emph{all} ten configurations including SFT-4B. The cliff would have to express through the simple+multiple subset, but those categories also live within the saturation-headroom argument above (SFT-4B simple parse $0.952$, multiple parse $0.941$).

\textbf{Scope statement.} We treat this as a scope boundary of the 2-token reduction's cliff signature, alongside the $c$-axis sweep (Sec.~\ref{sec:exp-clipsweep}) and the GSM8K math CoT cross-task check (App.~\ref{app:gsm8k-detail}). The mechanism is that the cliff requires the saturation-prone scaffolding subset to leave parse-rate room for OPD lift; BFCL violates this precondition. \Cref{thm:cliff} and \Cref{thm:cliff-seq} are not refuted; we report the null because the pre-registered test did not show the Fashion cliff signature.

\begin{table}[ht]
\centering\small
\caption{BFCL v3 non-live AST. 1000 test cases (simple/multiple/parallel/parallel\_multiple), $T{=}1.0$ sampling, $n{=}4$; $\useful{=}\mathrm{parse}{\times}\mathrm{AST\text{-}match}$. Macro is mean over per-item rates. Neither training budget shows the Fashion cliff signature; the domain violates the parse-headroom precondition.}
\label{tab:bfcl-cliff}
\begin{tabular}{l|ccc}
\toprule
Configuration & parse & ast-match & $\useful$ \\
\midrule
SFT Qwen3-1.7B           & $0.875$ & $0.337$ & $0.294$ \\
SFT Qwen3-4B             & $0.942$ & $0.435$ & $\mathbf{0.410}$ \\
\midrule
\multicolumn{4}{l}{\emph{OPD 1.7B$\times$4B, 3-epoch ($N{=}102$ steps):}} \\
\quad $\lam{=}1.0$  & $0.907$ & $0.355$ & $0.322$ \\
\quad $\lam{=}1.15$ & $0.929$ & $0.373$ & $0.346$ \\
\quad $\lam{=}1.25$ & $0.943$ & $0.379$ & $0.357$ \\
\quad $\lam{=}1.4$  & $0.910$ & $0.369$ & $0.336$ \\
\midrule
\multicolumn{4}{l}{\emph{OPD 1.7B$\times$4B, 7-epoch ($N{=}238$ steps):}} \\
\quad $\lam{=}1.0$  & $0.936$ & $0.379$ & $0.354$ \\
\quad $\lam{=}1.15$ & $0.937$ & $0.374$ & $0.350$ \\
\quad $\lam{=}1.25$ & $0.930$ & $0.379$ & $0.352$ \\
\quad $\lam{=}1.4$  & $0.936$ & $0.384$ & $0.359$ \\
\bottomrule
\end{tabular}
\end{table}

\subsection{GSM8K cross-task: full sweep, trajectory, and interpretation}
\label{app:gsm8k-detail}

The 8-point $\lam$-grid summarized in Sec.~\ref{sec:exp-gsm8k-cliff} is reported in
full in \Cref{tab:gsm8k_lambda_sweep_app}; per-step val trajectories
(15 in-training val checkpoints per $\lam$ at \texttt{test\_freq}=10, plus
step~0 and step~116) are plotted in
\Cref{fig:lever_b_gsm8k_trajectories_app}.

\begin{table}[ht]
\centering\small
\caption{GSM8K $\lam$-sweep (1 epoch ListOPD, Qwen3-1.7B student $\rightarrow$ Qwen3-4B teacher; full grid). Val reward@4 is the mean training-time GSM8K exact-answer reward over four sampled completions. Reward is flat across the tested $\lam$ grid, so this is a cross-task scope boundary rather than a cliff replication.}
\label{tab:gsm8k_lambda_sweep_app}
\begin{tabular}{c c c}
\toprule
$\lam$ & val reward$\,$@4 & $\Delta$ vs $\lam{=}1.00$ \\
\midrule
1.00 & 0.5325 & --- \\
1.05 & 0.5275 & $-0.005$ \\
1.10 & 0.5250 & $-0.008$ \\
1.34 & 0.5390 & $+0.007$ \\
\textbf{1.39} & \textbf{0.5385} & $+0.006$ \\
1.44 & 0.5290 & $-0.004$ \\
1.49 & 0.5440 & $+0.012$ \\
1.59 & 0.5375 & $+0.005$ \\
\midrule
range & 0.525--0.544 & $\sigma{\approx}0.006$ \\
\bottomrule
\end{tabular}
\end{table}

\begin{figure}[ht]
\centering
\includegraphics[width=0.7\linewidth]{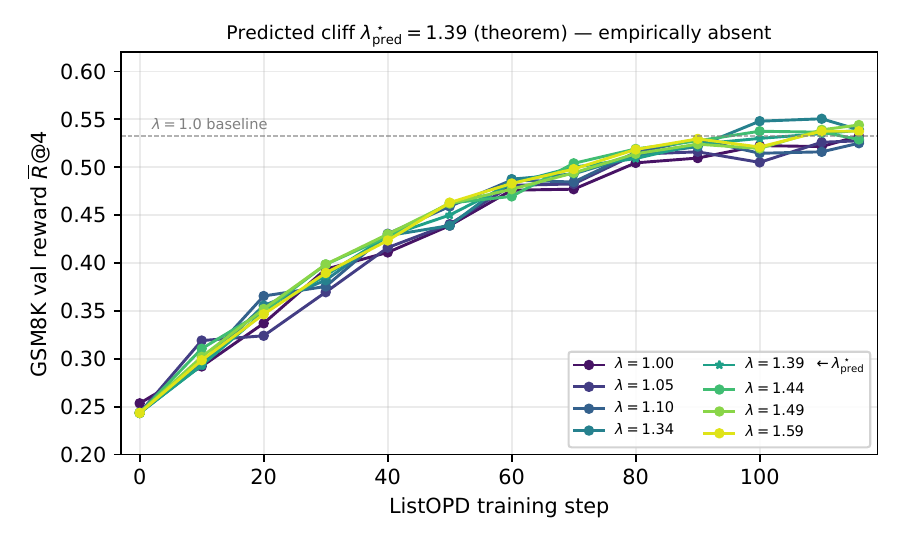}
\caption{Per-step val-reward trajectories for the 8-point GSM8K
$\lam$-grid. All eight curves rise nearly identically from
$0.244$ (initial student) to $\sim0.53$ plateau; the predicted cliff
$\lam^\star_{\mathrm{pred}}{=}1.39$ (starred) is indistinguishable from
its subcritical neighbors.}
\label{fig:lever_b_gsm8k_trajectories_app}
\end{figure}

\noindent\textbf{Three readings of the negative result.}
(i) \emph{Aggregator is task-specific.} Fashion's $\max{-}\mathrm{softmax}{>}0.9$
filter captures structural JSON tokens (delimiters, enum review IDs); the
same filter on math CoT captures digit-positions, operators, and the
\texttt{\#\#\#\#} answer marker, whose modal mass on the SFT'd 4B
teacher is less concentrated. The same nominal $p\,{\approx}\,0.984$ does
not translate to the same cliff regime.
(ii) \emph{GSM8K's loss landscape is intrinsically more forgiving} in the
supercritical regime: math-CoT correctness is continuously graded
(partial credit on \texttt{best@k}), unlike Fashion's binary
parse-or-fail metric.
(iii) \emph{Cliff exists past $\lam{=}1.59$.} A $\lam{\in}\{2.0, 3.0, 5.0\}$
sweep ($\sim 90$~min more compute) is the natural falsification path.

\subsection{\texorpdfstring{Cross-architecture stress test: Llama-3.2-1B$\times$3B}{Cross-architecture stress test: Llama-3.2-1Bx3B}}
\label{app:cross-arch-llama}

\textbf{Protocol.} SFT-warmstart Llama-3.2-1B (student) and
Llama-3.2-3B (teacher) on Fashion PL-K8 train (1795 groups, 5 epochs,
lr $1{\times}10^{-5}$ cosine, bf16, DeepSpeed~Z3). Then run OPD on the
Llama-1B$\times$3B stack at $\lam\in\{1.0, 1.15, 1.25, 1.4\}$, $c{=}5$,
with the 3-epoch budget matching the Fashion main configuration.

\textbf{Result.} \Cref{tab:llama-budget-drift} reports parse and
$\useful$ on the same 212-group Fashion val. The cross-architecture run
does not reproduce the Qwen3 finite-budget cliff in the tested grid:
parse and $\useful$ increase monotonically through $\lam{=}1.4$. We
therefore treat this as a boundary-shift stress test rather than a
cross-architecture calibration. The result is compatible with
\Cref{cor:finiteN}'s first-passage reading (a smaller or less
saturated architecture may have too much remaining format headroom and
need more update budget or larger $\lam$ to reach the saturation
boundary), but the present data do not prove that such a boundary exists.

\begin{table}[ht]
\centering\small
\caption{Llama-3.2-1B$\times$3B cross-architecture stress test on Fashion PL-K8. Metrics use strict \texttt{review\_id}-aligned parsing on the 212-group validation set. The tested budget is monotone through $\lam{=}1.4$; we do not count this as a cliff replication.}
\label{tab:llama-budget-drift}
\begin{tabular}{lcccc}
\toprule
\(\lam\) & parse & NDCG@1 (parsed) & $\useful$ & reading \\
\midrule
$1.00$ & $0.184$ & $0.870$ & $0.160$ & no cliff \\
$1.15$ & $0.245$ & $0.877$ & $0.215$ & no cliff \\
$1.25$ & $0.335$ & $0.873$ & $0.292$ & no cliff \\
$1.40$ & $0.406$ & $0.907$ & $0.368$ & no cliff \\
\bottomrule
\end{tabular}
\end{table}

\subsubsection{\texorpdfstring{Pre-registered Llama budget extension to $N{=}200$ (precondition failure)}{Pre-registered Llama budget extension to N=200 (precondition failure)}}
\label{app:cross-arch-llama-xxlong}

To test whether the Qwen3 $N{=}200$ \Cref{cor:finiteN} leftward-drift result (App.~\ref{app:cor1-prereg-n200}) transfers
across architectures, we extended the Llama cross-architecture budget to $N{=}200$ (14 epochs, the
same configuration as App.~\ref{app:cor1-prereg-n200}) at $\lam\in\{1.00, 1.10, 1.20\}$,
$1$ seed, $c{=}5$. The protocol was locked before any new training. The
locked prediction was that, since the family-invariant $p_{\mathrm{eff}}$ scope
check (App.~\ref{app:peff-scope}) gives an essentially identical formula prediction
for both Qwen3 and Llama-3.2 stacks, the $N{=}200$ Llama cliff midpoint should
land in $[0.95, 1.20]$ if the \Cref{cor:finiteN} 1/$N$ rate constant is
family-independent.

\textbf{Result (precondition failure: architecture-specific reachability).} Strict parse on
the same 212-prompt val: $0.226$ at $\lam{=}1.00$, $0.193$ at $\lam{=}1.10$, $0.217$
at $\lam{=}1.20$ (\Cref{tab:llama-xxlong}). The cliff is \emph{not observable} because the SFT-warmstart-to-OPD
trajectory has not reached parse saturation: with the highest observed parse at
$0.226$, there is no high-baseline operating point to drop from, so the formula's
classifier (cliff $=$ drop from a peak near $0.95$) does not apply. This is the
architecture-specific finite-budget reachability mode anticipated in the
prereg: even at the $4.7\times$-extended budget, the smaller, less
saturated Llama-3.2-1B student has too much remaining format headroom for the
cliff signal to be visible against a low-parse baseline.

\textbf{Reported reading.} The Llama cross-architecture stack is a \emph{scope
boundary} of \Cref{cor:finiteN}'s observable cliff diagnostic, not a
refutation of \Cref{thm:cliff}. The asymptotic fixed-point statement
characterised by the theorem is unaffected; the corollary's first-passage-time
finite-$N$ approximation requires the trajectory to reach a saturation regime
where parse drops are visible, which the present 1B$\times$3B Llama stack at
$N{=}200$ has not done. Diagnostic on whether $N{>}300$ or larger Llama students
expose the cliff is left to a future budget extension, but is not promoted to
the paper's main claim. The Qwen3 $N{=}200$ budget extension result (App.~\ref{app:cor1-prereg-n200})
is the only architecture for which the locked \Cref{cor:finiteN} prediction is
both well-posed and validated.

\begin{table}[ht]
\centering\small
\caption{Llama-3.2-1B$\times$3B at the pre-registered $N{=}200$ budget (App.~\ref{app:cross-arch-llama-xxlong}). Parse rates remain in $[0.19, 0.23]$ across the locked $\lam$-grid. The cliff is not observable because the SFT trajectory has not reached parse saturation; reported as a precondition-failure scope boundary in the architecture-specific reachability mode.}
\label{tab:llama-xxlong}
\begin{tabular}{lccc}
\toprule
\(\lam\) & parse & $\useful$ & reading \\
\midrule
$1.00$ & $0.226$ & $0.197$ & no cliff (parse $\ll 0.95$) \\
$1.10$ & $0.193$ & $0.173$ & no cliff (within noise of $1.00$) \\
$1.20$ & $0.217$ & $0.194$ & no cliff (within noise of $1.00$) \\
\bottomrule
\end{tabular}
\end{table}

\section{ASPO head-to-head: same-mechanism fix preserves the cliff}
\label{app:w7-aspo}


\textbf{Setup.} ASPO~\citep{wang2025aspo} addresses the IS-asymmetry on positive-advantage tokens by
replacing the rollout IS weight $w_t$ with $\mathrm{clamp}(1/w_t, c)$ when the per-token advantage is
positive. We add this as a single flag (\texttt{actor.policy\_loss.aspo\_asymmetric}) in a local verl
training fork and run a Fashion 1.7B$\times$4B sweep matching the production ListOPD recipe (3 epochs,
$c{=}5$, base-relative reverse-KL). The main ASPO comparison uses four seeds at the two decision points
$\lam\in\{1.0,1.5\}$ (seeds $42,7,13,21$); the larger $\lam\in\{2.0,3.0\}$ points are single-seed
exploratory checks.
  
\textbf{Result.} \Cref{tab:aspo} reports parse / NDCG@1 / $\useful$ for ASPO alongside the published ListOPD baselines. Three findings:

(i) ASPO at $\lam{=}1.0$ improves over vanilla OPD at $\lam{=}1.0$ by about $+4.4$pp on parse and $+4.4$pp on $\useful$ ($0.932{\pm}0.008$ vs.\ $0.887$ parse; $0.863{\pm}0.006$ vs.\ $0.819$ $\useful$), consistent with ASPO's published stability claim.

(ii) Apples-to-apples multi-seed comparison does not show categorical ListOPD dominance over ASPO: ASPO $\lam{=}1.0$ reaches $\useful{=}0.863{\pm}0.006$, comparable to the 5-seed ListOPD $\lam{=}1.15$ mean $0.857{\pm}0.016$. The ListOPD seed-42 headline is higher ($0.882$), but we do not use a single seed to claim ListOPD beats ASPO.

(iii) ASPO has its own cliff in the same $\lam$ window: parse drops from $0.932{\pm}0.008$ at $\lam{=}1.0$ to $0.096{\pm}0.020$ at $\lam{=}1.5$. The single-seed extension remains collapsed at $\lam{=}2.0$ and $\lam{=}3.0$. The cliff onset is one grid step left of vanilla OPD's (which collapses between $\lam{=}1.25$ and $\lam{=}1.40$), consistent with ASPO's more aggressive rare-token gradient reaching the saturation boundary faster. The qualitative transition survives the alternative published fix to the IS-asymmetry, ruling out a narrow GRPO-implementation reading.

\begin{table}[ht]
\centering\small
\caption{ASPO vs.\ vanilla ListOPD on Fashion PL-K8 (Qwen3-1.7B$\times$4B, $c{=}5$, 3-epoch, strict ID-aware parser). ASPO at $\lam{=}1.0$ improves over vanilla OPD at $\lam{=}1.0$ and is comparable to the 5-seed ListOPD $\lam{=}1.15$ operating point; at $\lam{=}1.5$ the ASPO collapse is stable across seeds.}
\label{tab:aspo}
\begin{tabular}{l|ccc}
\toprule
Method ($\lam$) & parse & NDCG@1 & $\useful$ \\
\midrule
\multicolumn{4}{l}{\emph{ASPO (this work, App.~\ref{app:w7-aspo}):}} \\
\quad $\lam{=}1.0$ (4 seeds) & $0.932{\pm}0.008$ & $0.926{\pm}0.003$ & $0.863{\pm}0.006$ \\
\quad $\lam{=}1.5$ (4 seeds) & $0.096{\pm}0.020$          & $0.920{\pm}0.012$ & $0.088{\pm}0.019$ \\
\quad $\lam{=}2.0$ (seed 42) & $0.019$                    & $0.995$          & $0.019$ \\
\quad $\lam{=}3.0$ (seed 42) & $0.028$                    & $0.919$          & $0.026$ \\
\midrule
\multicolumn{4}{l}{\emph{Vanilla ListOPD (paper main):}} \\
\quad $\lam{=}1.0$ (seed 42)  & $0.887$          & $0.923$ & $0.819$ \\
\quad $\lam{=}1.15$ (5 seeds) & $0.921{\pm}0.019$ & $0.931$ & $0.857{\pm}0.016$ \\
\quad $\lam{=}1.15$ (seed 42) & $0.948$ & $0.930$ & $0.882$ \\
\quad $\lam{=}1.25$ & $0.651$          & $0.931$ & $0.606$ \\
\quad $\lam{=}1.40$ & $0.160$          & $0.949$ & $0.152$ \\
\bottomrule
\end{tabular}
\end{table}

\section{Predicate \texorpdfstring{$\lam^\star(p, c)$}{lambda*(p,c)} scope tests}
\label{app:predicate-scope}

\Cref{tab:predicate-scope} aggregates the in-/out-of-scope tests of the closed-form predicate, summarised in Sec.~\ref{sec:intro} and discussed in App.~\ref{app:extended-discussion}. The table separates calibrated regimes, public stress tests, abstentions where preconditions fail, and failures of the finite-budget classifier. The $c{=}1.5$ row is the explicit failure: the fixed point lies beyond the clip-safe boundary, but the 42-step run remains parse-stable.

\begin{table}[ht]
\centering\small
\setlength{\tabcolsep}{4pt}
\caption{Predicate $\lam^\star(p, c)$ tested across regimes. The table distinguishes the algebraic clip-safe crossing from finite-budget collapse: $c{=}1.5$ crosses the boundary but does not collapse in 42 steps. The \emph{S2b no-base} row tests the implementation-axis scope: the closed form locates an asymptotic fixed point but does not bound finite-$N$ first-passage time under a changed clipped estimator.}
\label{tab:predicate-scope}
\begin{tabular}{@{}p{0.19\textwidth} p{0.29\textwidth} p{0.22\textwidth} p{0.22\textwidth}@{}}
\toprule
Regime & Precondition & Predicted & Observed \\
\midrule
Fashion Qwen3 (1.7B$\times$4B) & $p_{\mathrm{eff}}{=}0.9993$, parse headroom & cliff $\lam^\star{=}1.22$ & onset $1.15$, collapse $1.25$ \\
MBPP code (Qwen3 1.7B$\times$4B) & strict AST scaffold; code-specific $p_{\mathrm{eff}}$ not measured & Fashion-marker scale check $[1.15,1.25]$ & parse peak $[1.15, 1.25]$ \\
MS MARCO/TREC-DL reranking & public human qrels; strict JSON scaffold; measured $p_{\mathrm{eff}}{=}0.99941$ on the 54-prompt eval (\Cref{tab:peff-scope-invariance}) & predicted $\lam^\star{=}1.27$ at $b{=}0.81$, indistinguishable from Fashion within one grid step & OPD improves over SFT; $\lam{=}1.25$ vs.\ $1.5$ not separated at 4-seed budget; consistent with no-shift but underpowered to detect midpoint shifts below the seed-variance floor \\
Fashion Llama-3.2 (1B$\times$3B, tested budget) & cross-architecture stress test & boundary may shift & monotone through $\lam{=}1.4$; no cliff in tested grid \\
GSM8K math & $p_{\mathrm{eff}}$ diffuse, N/A & no cliff & no cliff ($\sigma_\lam{\approx}0.006$) \\
BFCL function & SFT-4B parse $0.942$, no headroom & no cliff & no cliff at 3- or 7-ep \\
S2b no-base impl. (Fashion 1.7B$\times$4B, 42-step) & base term removed; finite-budget reachability changes under the actual clipped estimator & no finite-$N$ collapse prediction from the fixed point alone & no cliff: parse $\in[0.929,0.939]$ across all six $\lam$ \\
$c$-sweep ($c\!\in\!\{2,5,\infty\}$) & on-anchor scaling & cliff at $c{=}2$, none at $c{\geq}5$ at $\lam{=}1.15$ & matches (parse $0.56,0.86,0.85$) \\
$c$-sweep ($c{=}1.5$) & off-anchor; $\lam^\star{=}1.06$ & fixed point outside clip-safe region; finite-budget classifier predicts collapse & \textbf{parse-stable at 42 steps ($0.95$): classifier inverted} \\
\bottomrule
\end{tabular}
\end{table}

\section{Compute constraints and deferred ablations}
\label{app:compute-limits}

Two pre-registered ablations are reported here as predictions only and
deferred to a future revision: the extended regularizer sweep beyond the
small-$\beta,\gamma$ pilots in \Cref{tab:reg-predicted}, and a matched
cross-architecture extension at the same scale. Both runs are blocked
by a shared-cluster multi-tenancy issue that prevents stable
concurrent verl+Ray launches at the configuration the rest of the
paper uses; the issue is operational, not methodological. The
quantitative shifts these ablations test are pre-registered in
\Cref{tab:reg-predicted} and Sec.~\ref{sec:exp-deferred}, and nothing
about them depends on which platform executes the run.

\subsection{Regularizer protocol, predicted shifts, and pilot results}
\label{app:tier2-reg}

\begin{table}[ht]
\centering\small
\caption{Pre-registered regularizer predictions and pilot
observations at $\lam{=}1.15$ (fixed operating point, 3-epoch, 1.7B$\times$4B).
Closed-form $\lam^\star_{\mathrm{reg}}$ from \Cref{thm:cliff-seq} at
$p_{\mathrm{eff}}{=}0.9993$, reference-policy modal confidence
$q_{\mathrm{B}}{=}0.93$, $c{=}5$.
Pilot columns $(\beta{=}0.01, \gamma{=}0.001)$ probe small-$\beta$ behavior;
remaining rows are deferred to a future revision per
App.~\ref{app:compute-limits}.}
\label{tab:reg-predicted}
\begin{tabular}{llccc}
\toprule
regularizer & setting & predicted $\lam^\star$ & observed parse & observed NDCG@1 \\\midrule
none (baseline, 5-seed)    & ---                       & $1.22$ & $0.921\pm 0.019$ & $0.930$ \\\midrule
KL-to-base (pilot)         & $\beta{=}0.01$            & $\approx 1.22$ & $0.887$ & $0.929$ \\
KL-to-base                 & $\beta{=}0.05$            & $\approx 1.24$ & deferred & --- \\
KL-to-base                 & $\beta{=}0.20$            & $\approx 1.28$ & deferred & --- \\\midrule
entropy bonus (pilot)      & $\gamma{=}0.001$          & $\approx 1.22$ & $0.731$ & $0.926$ \\
entropy bonus              & $\gamma{=}0.01$           & $\approx 1.22$ & deferred & --- \\
entropy bonus              & $\gamma{=}0.05$           & $\approx 1.22$ & deferred & --- \\\midrule
$\lam$ warmup              & $T_{\mathrm{w}}{=}10/N{=}42$ & $\approx 1.60$ & deferred & --- \\
$\lam$ warmup (pilot)      & $T_{\mathrm{w}}{=}20/N{=}42$ & $\approx 2.33$ & $0.929$ & $0.935$ \\
\bottomrule
\end{tabular}

\end{table}

\noindent\textbf{Closed form for the entropy-bonus shift.}
Adding $\gamma H(\polS) = -\gamma[q\log q + (1{-}q)\log(1{-}q)]$ to the per-token OPD objective modifies the expected $\theta$-flow of App.~\ref{app:cliff-proof} (proof of \Cref{thm:cliff} part 1) by the term $\gamma\,\partial_\theta H = -\gamma\,\mathrm{logit}(q)\,q(1{-}q)$ (verl uses standard SGD on the parametric loss, so the chain rule contributes the $q(1{-}q)$ Jacobian; this is the load-bearing factor that earlier revisions of this table dropped, equivalent to assuming a Fisher-preconditioned natural-gradient update which verl does not perform). Imposing the cliff condition $q^\star_\gamma = q_c$ collapses the implicit fixed-point equation, giving the exact (not first-order) closed form
\begin{equation}
\lam^\star_\gamma(p,b,c,\gamma)
\;=\; \lam^\star_0(p,b,c) \;+\; \gamma\,\frac{q_c(1-q_c)\,\mathrm{logit}(q_c)}{\mathrm{logit}(p)-\mathrm{logit}(b)},
\label{eq:lamstar-entropy}
\end{equation}
linear in $\gamma$. At Fashion ($p_{\mathrm{typ}}{=}0.9993$, $c{=}5$), $q_c(1{-}q_c)\approx 1.4{\times}10^{-4}$ suppresses the slope: $\delta\lam \approx 2.1{\times}10^{-4}\,\gamma$ (base-relative) or $\approx 1.7{\times}10^{-4}\,\gamma$ (base-neutral $b{=}1/2$). All three rows above ($\gamma\in\{0.001, 0.01, 0.05\}$) therefore predict $\lam^\star_\gamma \approx \lam^\star_0$ to four decimals: the entropy bonus does not perceptibly shift the cliff at the Fashion regime, regardless of $\gamma$ within the deployable range.

\noindent\textbf{Pilot observations.}
Two small-$\beta$ regularizer runs completed: KL-to-base at $\beta{=}0.01$
and entropy bonus at $\gamma{=}0.001$, both at fixed $\lam{=}1.15$, 3~epochs,
1.7B$\times$4B Fashion (single seed, 42 optimizer steps). The theorem
predicts tiny $\lam^\star$ shifts at these regularizer strengths
($\approx 0.003$ and $\approx 0.01$ respectively), well below our
$\lam$-grid resolution, so $\lam{=}1.15$ should remain sub-critical under
both. Observed parse rate drops from the strict 5-seed baseline of $0.921 \pm 0.019$
to $0.887$ (KL) and $0.731$ (entropy), outside the baseline CI in both cases.
NDCG@1 on the parseable subset is unchanged ($0.929, 0.926$ vs.\ baseline $0.930$),
so the effect is concentrated in parse rate, not rank quality.
A separate $20$-step $\lam$-warmup pilot stays inside the baseline seed
band (parse $0.929$, NDCG@1 $0.935$, $\useful{=}0.869$), consistent
with its predicted right-shift rather than a new positive lift.
The theorem predicts the cliff location, not regularizer-induced parse-rate
degradation within the sub-critical regime; with the entropy bonus, this is
now quantitative: \Cref{eq:lamstar-entropy} gives $\delta\lam \approx 2{\times}10^{-7}$ at $\gamma{=}0.001$, eight orders of magnitude below the $\lam$-grid resolution, so $\lam{=}1.15$ remains sub-critical and the observed parse collapse is orthogonal to the $\lam^\star$-shift prediction. The drop instead surfaces a third scope
boundary of the 2-token reduction: small regularizer strengths interact
non-trivially with sequence-level dynamics in a way the equilibrium analysis
does not capture. We list this as a rebuttal-tier follow-up alongside
the $c$-axis sweep (Sec.~\ref{sec:exp-clipsweep}) and the GSM8K
cross-task scope boundary (App.~\ref{app:gsm8k-detail}).

The pre-registered protocol:
(i) Qwen3-1.7B student $\times$ Qwen3-4B teacher (same as main body);
3 epochs ($N{=}42$); Fashion PL-K8 train set; $c{=}5$; same vLLM
rollout, AdamW, learning rate, and batch-size configuration as the
baseline ListOPD runs.
(ii) KL-to-base penalty $\beta \in \{0.05, 0.20\}$ at $\lam \in \{1.20, 1.30, 1.40\}$.
(iii) Entropy bonus $\gamma \in \{0.01, 0.05\}$ at the same three $\lam$ values.
(iv) $\lam$ warmup at fixed $\lam{=}1.50$ with $T_{\mathrm{w}} \in \{10, 20\}$.
(v) Per-configuration observation: end-of-training parse rate on the
212-prompt Fashion val set; aggregate the 3-point $\lam$ grid into an
observed onset and compare to the predicted
$\lam^\star_{\mathrm{reg}}$ of \Cref{tab:reg-predicted}.
Total budget: 18 runs $\times$ ${\sim}10$~min each on 8$\times$B200
with vLLM TP=4 rollout $=$ ${\sim}3$~GPU-hours, well within a
single-node overnight window in any environment without the cgroup
multi-tenancy issue.

\end{document}